\long\def\BOC#1\EOC{\message{(Commented text )}}
\long\def\BOCC#1\EOCC{\message{(Commented text )}}
\long\def\BOCCC#1\EOCCC{\message{(Commented text )}}
\long\def\optional#1{\empty}
\def\ar{\leftarrow}
\def\bi{\begin{itemize}}
\def\ei{\end{itemize}}
\def\beq{\begin{equation}}
\def\eeq#1{\label{#1}\end{equation}}
\def\ba{\begin{array}}
\def\ea{\end{array}}
\def\i#1{\hbox{\it #1\/}}
\def\mi#1{\mathit{#1}}
\def\sm{\hbox{\rm SM}}
\def\sneg{\sim\!\!}
\def\ar{\leftarrow}
\def\rar{\rightarrow}
\def\false{\hbox{\sc false}}
\def\true{\hbox{\sc true}}
\def\i#1{\hbox{\itshape #1\/}}
\def\qed{\quad \vrule height7.5pt width4.17pt depth0pt \medskip}
\def\Lrar{\Leftrightarrow}
\DeclareSymbolFont{AMSa}{U}{msa}{m}{n}
\DeclareMathSymbol{\square}{\mathord}{AMSa}{"03}
\def\mu#1{\mathit{\underline{#1}}}
\def\fand{\otimes}
\def\for{\oplus}
\def\fneg{\neg}
\def\frar{\rar}
\def\bi{\begin{itemize}}
\def\ei{\end{itemize}}
\def\ii{\medskip\item}
\def\u{\upsilon}
\def\V{{\sf V}}
\def\I{{\sf I}}
\newtheorem{prop}{Proposition}
\newtheorem{thm}{Theorem}
\newtheorem{cor}{Corollary}
\newtheorem{definition}{Definition}
\newtheorem{lemma}{Lemma} 
\newtheorem{example}{Example}
\title{Fuzzy Propositional Formulas under the Stable Model Semantics}
\titlerunning{Fuzzy Propositional Formulas under the Stable Model Semantics}
\authorrunning{Joohyung Lee and Yi Wang} 
\begin{document}

\maketitle

%---------------------------------------------------------------------

\begin{abstract}
We define a stable model semantics for fuzzy propositional formulas, which generalizes both fuzzy propositional logic and the stable model semantics of classical propositional formulas. The syntax of the language is the same as the syntax of fuzzy propositional logic, but its semantics distinguishes stable models from non-stable models. The generality of the language allows for highly configurable nonmonotonic reasoning for dynamic domains involving graded truth degrees. We show that several properties of Boolean stable models are naturally extended to this many-valued setting, and discuss how it is related to other approaches to combining fuzzy logic and the stable model semantics. 
\end{abstract}

%---------------------------------------------------------------------
\section{Introduction} \label{sec:intro}\optional{sec:intro}
%---------------------------------------------------------------------

Answer Set Programming (ASP) \cite{lif08} is a widely applied declarative programming paradigm for the design and implementation of knowledge-intensive applications. One of the attractive features of ASP is its capability to represent  the nonmonotonic aspect of knowledge. However, its mathematical basis, the stable model semantics, is restricted to Boolean values and is too rigid to represent imprecise and vague information. Fuzzy logic, as a form of many-valued logic, can handle such information by interpreting propositions with graded truth degrees, often as real numbers in the interval $[0,1]$.
The availability of various fuzzy operators gives the user great
flexibility in combining the truth degrees. However, the
semantics of fuzzy logic is monotonic and is not flexible enough to
handle default reasoning as in answer set programming.

Both the stable model semantics and fuzzy logic are generalizations of classical propositional logic in different ways. While they do not subsume each other, it is clear that many real-world problems require both their strengths. This has led to the body of work on combining fuzzy logic and the stable model semantics, known as fuzzy answer set programming (FASP) (e.g., \cite{lukasiewicz06fuzzy,janssen12reducing,vojtas01fuzzy,damasio01monotonic,medina01multi-adjoint,damasio01antitonic,nieuwenborgh07anintroduction,madrid08towards}).
However, most works consider simple rule forms and do not allow logical connectives nested arbitrarily as in fuzzy logic. An exception is fuzzy equilibrium logic \cite{schockaert12fuzzy}, which applies to arbitrary propositional formulas even including strong negation. However, its definition is highly complex.

Unlike the existing works on fuzzy answer set semantics, in this paper, we extend the general stable model semantics from~\cite{fer05,ferraris11stable} to fuzzy propositional formulas. The syntax of the language is the same as the syntax of fuzzy propositional logic. The semantics, on the other hand, defines a condition which distinguishes {\sl stable} models from non-stable models. The language is a proper generalization of both fuzzy propositional logic and classical propositional formulas under the stable model semantics, and turns out to be essentially equivalent to fuzzy equilibrium logic, but is much simpler. Unlike the interval-based semantics in fuzzy equilibrium logic, the proposed semantics is based on the notions of a reduct and a minimal model, familiar from the usual way stable models are defined, and thus provides a simpler, alternative characterization of fuzzy equilibrium logic.
In fact, in the absence of strong negation, a fuzzy equilibrium model always assigns an interval of the form $[v, 1]$ to each atom, which can be simply identified with a single value $v$ under our stable model semantics. Further, we show that strong negation can be eliminated from a formula in favor of new atoms, extending the well-known result in answer set programming. So our simple semantics fully captures fuzzy equilibrium logic.

Also, there is a significant body of work based on the general stable model semantics, such as the splitting theorem~\cite{ferraris09symmetric}, the theorem on loop formulas~\cite{fer06}, and the concept of aggregates~\cite{lee08}. The simplicity of our semantics would allow for easily extending those results to the many-valued setting, as can be seen from some examples in this paper.

Another contribution of this paper is to show how reasoning about dynamic systems in ASP can be extended to fuzzy ASP.  It is well known that actions and their effects on the states of the world can be conveniently represented by answer set programs \cite{lif99b,lif02}. On the contrary, to the best of our knowledge, the work on fuzzy ASP has not addressed how the result can be extended to many-valued setting, and most applications discussed so far are limited to static domains only.
As a motivating example, consider modeling dynamics of {\em trust} in social networks. People trust each other in different degrees under some normal assumptions. If person $A$ trusts person $B$, then $A$ tends to trust person $C$ whom $B$ trusts, to a degree which is positively correlated to the degree to which $A$ trusts $B$ and the degree to which $B$ trusts $C$. By default, the trust degrees would not change, but may decrease when some conflict arises between the people. 
Modeling such a domain requires expressing defaults involving fuzzy truth values. 
We demonstrate how the proposed language can achieve this by taking advantage of its generality over the existing approaches to fuzzy ASP.
Thus the generalization is not simply a pure theoretical pursuit but has practical uses in the convenient modeling of defaults involving fuzzy truth values in dynamic domains. 

The paper is organized as follows. Section~\ref{sec:prelim} reviews the syntax and the semantics of fuzzy propositional logic, as well as the stable model semantics of classical propositional formulas. Section~\ref{sec:definition} presents the main definition of a stable model of a fuzzy propositional formula along with several examples. 
Section~\ref{sec:gen-alt} presents a generalized definition based on partial degrees of satisfaction and its reduction to the special case, as well as an alternative, second-order logic style definition. 
Section~\ref{sec:rel-bool} tells us how the Boolean stable model semantics can be viewed as a special case of the fuzzy stable model semantics, and Section~\ref{sec:rel-fuzzyasp} formally compares the fuzzy stable model semantics with normal FASP programs and fuzzy equilibrium logic. Section~\ref{sec:properties} shows that some well-known properties of the Boolean stable model semantics can be naturally extended to our fuzzy stable model semantics. Section~\ref{sec:related-work} discusses other related work, followed by the conclusion in Section~\ref{sec:conclusion}. The complete proofs are given in the appendix. 

This paper is a significantly extended version of the papers~\cite{lee14stable1,lee14stable}. Instead of the second-order logic style definition used there, we present a new, reduct-based definition as the main definition, which is simpler and more aligned with the standard definition of a stable model. 
Further, this paper shows that a generalization of the stable model semantics that allows partial degrees of satisfaction can be reduced to the version that allows only the two-valued concept of satisfaction.

%---------------------------------------------------------------------
\section{Preliminaries} \label{sec:prelim} \optional{sec:prelim}
%---------------------------------------------------------------------

%---------------------------------------------------------------------
\subsection{Review: Fuzzy Logic}  \label{ssec:review-fuzzy}
  \optional{ssec:review-fuzzy}
%---------------------------------------------------------------------

A fuzzy propositional signature~$\sigma$ is a set of symbols called {\em fuzzy atoms}. In addition, we assume the presence of a set ${\rm CONJ}$ of fuzzy conjunction symbols, a set ${\rm DISJ}$ of fuzzy disjunction symbols, a set ${\rm NEG}$ of fuzzy negation symbols, and a set ${\rm IMPL}$ of fuzzy implication symbols. 

A {\em fuzzy (propositional) formula} (of $\sigma$) is defined recursively as follows.
\begin{itemize}
\item every fuzzy atom $p\in\sigma$ is a fuzzy formula;

\item every numeric constant $c$, where $c$ is a real number in $[0, 1]$, is a fuzzy formula;

\item if $F$ is a fuzzy formula, then $\fneg F$ is a fuzzy formula, where $\fneg \in {\rm NEG}$;

\item if $F$ and $G$ are fuzzy formulas, then $F\fand G$, $F\for G$, 
  and $F\frar G$ are fuzzy formulas, where $\fand\in{\rm CONJ}$,
  $\for\in{\rm DISJ}$, and $\frar\ \in{\rm IMPL}$.
\end{itemize}

The models of a fuzzy formula are defined as follows \cite{hajek98mathematics}.
The {\em fuzzy truth values} are the real numbers in the range $[0,1]$. 
A \emph{fuzzy interpretation} $I$ of~$\sigma$ is a mapping from $\sigma$ into $[0, 1]$.

The fuzzy operators are functions mapping one or a pair of fuzzy truth values into a fuzzy truth value. Among the operators, $\fneg$ denotes a function from $[0, 1]$ into $[0, 1]$; $\fand$, $\for$, and $\frar$ denote functions from $[0, 1]\times[0, 1]$ into $[0, 1]$.
The actual mapping performed by each operator can be defined in many different ways, but all of them satisfy the following conditions, which imply that the operators are generalizations of the corresponding classical propositional connectives:\footnote{%
We say that a function $f$ of arity $n$ is \emph{increasing in its $i$-th argument} ($1\leq i\leq n$) if $f(arg_1,\dots, arg_i,\dots, arg_n)\leq f(arg_1,\dots,arg_i^\prime,\dots, arg_n)$ whenever $arg_i \leq arg_i^\prime$; 
$f$ is said to be \emph{increasing} if it is increasing in all its arguments. The definition of \emph{decreasing} is similarly defined.  
}

\begin{itemize}
\item a fuzzy negation $\fneg$ is decreasing, and satisfies $\neg(0) = 1$ and $\neg(1) = 0$;

\item a fuzzy conjunction $\fand$ is increasing, commutative, associative, and $\fand(1, x)=x$ for all $x \in [0, 1]$;

\item a fuzzy disjunction $\for$ is increasing, commutative, associative, and $\for(0, x)=x$ for all $x\in [0, 1]$;

\item a fuzzy implication $\frar$ is decreasing in its first argument and increasing in its second argument; and $\frar\!\!(1, x) = x$ and $\frar\!\!(0, 0) = 1$ for all $x\in[0, 1]$.
\end{itemize}

Figure~\ref{fig:operators} lists some specific fuzzy operators that we use in this paper.

\begin{figure}
\begin{center}
\begin{tabular}{| c | l | l |}
			\hline
			\textbf{Symbol} & \textbf{Name} & \textbf{Definition} \\
			\hline
			$\fand_l$ & \L ukasiewicz t-norm & $\fand_l(x, y)=max(x + y -1, 0)$ \\
			$\for_l$ & \L ukasiewicz t-conorm & $\for_l(x, y)=min(x + y, 1)$ \\
			\hline
			$\fand_m$ & G\"odel t-norm & $\fand_m(x, y)=min(x, y)$ \\
			$\for_m$ & G\"odel t-conorm & $\for_m(x, y)=max(x, y)$ \\
			\hline
			$\fand_p$ & product t-norm & $\fand_p(x, y)=x \cdot y$ \\
			$\for_p$ & product t-conorm & $\for_p(x, y)=x+ y -x\cdot y$ \\
			\hline
			$\neg_s$ & standard negator & $\neg_s(x) =
                        1-x$\\ \hline

			$\rar_r$ & R-implicator induced by $\fand_m$ &
                        $\rar_r\!\!(x, y)=\begin{cases}1 & \text{if}\ x \leq y\\y
                          & \text{otherwise}\end{cases}$\\
			$\rar_s$ & S-implicator induced by $\fand_m$
                        &  $\rar_s\!\!(x, y) =
                        max(1-x, y)$\\  
                        $\rar_l$ & Implicator induced by $\fand_l$ & $\rar_l\!\!(x,y) = min(1-x+y, 1)$ \\
\hline
\end{tabular}
\caption{Some t-norms, t-conorms, negator, and implicators}
\label{fig:operators}
\end{center}
\end{figure}

The \emph{truth value} of a fuzzy propositional formula $F$ under $I$, denoted $\u_I(F)$, is defined recursively as follows:  
\begin{itemize}
\item  for any atom $p\in\sigma$,\ \ $\u_I(p) = I(p)$;
\item  for any numeric constant ${c}$,\ \  $\u_I({c}) = c$;
\item  $\u_I(\neg F) = \neg(\u_I(F))$;
\item  $\u_I(F\odot G) = \odot(\u_I(F), \u_I(G))$ \ \ \ ($\odot\in\{\fand, \for, \frar\}$).
\end{itemize}
(For simplicity, we identify the symbols for the fuzzy operators with the truth value functions represented by them.)

\begin{definition}\label{def:fuzzy-m}\optional{def:fuzzy-m}
We say that a fuzzy interpretation $I$ {\em satisfies} a fuzzy formula~$F$ if $\u_I(F) = 1$, and denote it by $I\models F$. We call such $I$ a {\em fuzzy model} of~$F$.

We say that two formulas $F$ and $G$ are equivalent if $\u_I(F) = \u_I(G)$ for all interpretations~$I$, and denote it by $F\Lrar G$.
\end{definition}

%---------------------------------------------------------------------
\subsection{Review: Stable Models of Classical Propositional Formulas}\label{ssec:review-sm}  \optional{ssec:review-sm}
%---------------------------------------------------------------------

A {\em propositional signature} is a set of symbols called {\em atoms}. A propositional formula is defined recursively using atoms and the following set of primitive propositional connectives: $\bot,\, \top,\, \neg,\, \land,\ \lor,\ \rar$.

An {\em interpretation} of a propositional signature is a function from the signature into $\{\false,\true\}$. We identify an interpretation with the set of atoms that are true in it. 

A {\em model} of a propositional formula is an interpretation that {\em satisfies} the formula. 
According to \cite{fer05}, the models are divided into stable models and non-stable models as follows.
The {\em reduct} $F^X$ of a propositional formula $F$ relative to a set~$X$ of atoms is the formula obtained from $F$ by replacing every maximal subformula that is not satisfied by~$X$ with $\bot$. 
Set~$X$ is called a {\em stable model}  of $F$ if $X$ is a minimal set of atoms satisfying~$F^X$. 

Alternatively, the reduct $F^X$ can be defined recursively as follows:

\begin{definition}\label{def:classical-reduct}
\begin{itemize}
\item When $F$ is an atom or $\bot$ or $\top$, 
      $
      F^X = 
      \begin{cases}
          F & \text{if $X\models F$;} \\
          \bot & \text{otherwise;}
      \end{cases}
      $
      
\item $
      (\neg F)^X = 
      \begin{cases}
          \bot & \text{if $X\models F$;} \\
          \top & \text{otherwise;}
      \end{cases}
      $

\item For $\odot\in\{\land, \lor, \rar\}$,

      $
      (F\odot G)^X = 
      \begin{cases}
          F^X\odot G^X & \text{if $X\models F\odot G$;} \\
          \bot & \text{otherwise.}
      \end{cases}
      $
\end{itemize}
\end{definition}

In the next section, we extend this definition to cover fuzzy propositional formulas. 

%---------------------------------------------------------------------
\section{Definition and Examples} \label{sec:definition} 
   \optional{sec:definition}
%---------------------------------------------------------------------

%---------------------------------------------------------------------
\subsection{Reduct-based Definition} 
%---------------------------------------------------------------------

Let $\sigma$ be a fuzzy propositional signature, $F$ a fuzzy propositional formula of $\sigma$, and $I$ an interpretation of $\sigma$. 

\begin{definition}\label{def:fuzzy-reduct}
The {\em (fuzzy) reduct} of $F$ relative to $I$, denoted $F^I$, is defined recursively as follows:
\begin{itemize}
\item  For any fuzzy atom or numeric constant $F$, $F^I = F$; 
\item  $(\neg F)^I = \u_I(\neg F)$; 
\item  $(F\odot G)^I = (F^I\odot G^I)\fand_m {\u_I(F\odot G)}$, where 
       $\odot\in\{\fand, \for, \frar\}$.
\end{itemize}
\end{definition}

Compare this definition and Definition~\ref{def:classical-reduct}. They are structurally similar, but also have subtle differences. One of them is that in the case of binary connectives $\odot$, instead of distinguishing the cases between when $X$ satisfies the formula or not as in Definition~\ref{def:classical-reduct},  Definition~\ref{def:fuzzy-reduct} keeps a conjunction of $F^I\odot G^I$ and $\u_I(F\odot G)$.\footnote{%
In fact, a straightforward modification of the second subcases in Definition~\ref{def:classical-reduct} by replacing $\bot$ with some fixed truth values does not work for the fuzzy stable model semantics.}

Another difference is that in the case of an atom, Definition~\ref{def:fuzzy-reduct} is a bit simpler as it does not distinguish between the two cases. It turns out that the same clause can be applied to Definition~\ref{def:classical-reduct} (i.e., $F^X = F$ when $F$ is an atom regardless of $X\models F$), still yielding an equivalent definition of a Boolean stable model. So this difference is not essential.

For any two fuzzy interpretations $J$ and $I$ of signature~$\sigma$ and any subset~${\bf p}$ of~$\sigma$, we write $J\le^{\bf p} I$ if
\bi
\item  $J(p) = I(p)$ for each fuzzy atom not in ${\bf p}$, and 
\item  $J(p) \le I(p)$ for each fuzzy atom $p$ in ${\bf p}$.
\ei
We write $J<^{\bf p} I$ if $J\le ^{\bf p} I$ and $J\ne I$. We may simply write it as $J<I$ when ${\bf p}=\sigma$.

%------------------------------------------------------------------

\begin{definition}\label{def:fuzzy-sm}\optional{def:fuzzy-sm}
We say that an interpretation $I$ is a {\em (fuzzy) stable model} of $F$ relative to~${\bf p}$ (denoted $I\models\sm[F; {\bf p}]$) if 
\bi
\item  $I\models F$, and 

\item  there is no interpretation $J$ such that $J<^{\bf p} I$ and $J$ satisfies $F^I$.
\ei
If ${\bf p}$ is the same as the underlying signature, we may simply write $\sm[F; {\bf p}]$ as $\sm[F]$ and drop the clause ``relative to ${\bf p}$.''

We call an interpretation $J$ such that $J<^{\bf p} I$ and $J$ satisfies $F^I$ as a {\em witness to dispute the stability} of $I$ (for $F$ relative to ${\bf p}$). In other words, a model of $F$ is stable if it has no witness to dispute its stability for $F$.
\end{definition}

%------------------------------------------------------------------
Clearly, when ${\bf p}$ is empty, Definition~\ref{def:fuzzy-sm} reduces to the definition of a fuzzy model in Definition~\ref{def:fuzzy-m} simply because there is no interpretation $J$ such that $J<^\emptyset I$.

The definition of a reduct can be simplified in the cases of $\fand$ and $\for$, which are increasing in both arguments: 
\begin{itemize}
\item  $(F\fand G)^I = (F^I\fand G^I)$;\ \ \  $(F\for G)^I = (F^I\for G^I)$.
\end{itemize}

This is due to the following proposition, which can be proved by induction.
\begin{prop}\label{prop:monotone}\optional{prop:monotone}
For any interpretations $I$ and $J$ such that $J\le^{\bf p} I$, it holds that 
\[
\u_J(F^I)\le \u_I(F).
\]
\end{prop}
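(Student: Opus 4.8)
The plan is to prove $\u_J(F^I) \le \u_I(F)$ by structural induction on $F$, following the recursive clauses of Definition~\ref{def:fuzzy-reduct} for the reduct.

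First I would clear the base cases. If $F$ is a numeric constant $c$, then $F^I = c$, so $\u_J(F^I) = c = \u_I(F)$ and the bound holds with equality. If $F$ is an atom $p$, then $F^I = p$, so $\u_J(F^I) = J(p)$ and $\u_I(F) = I(p)$; here the hypothesis $J \le^{\bf p} I$ is precisely what delivers the inequality, since it forces $J(p) \le I(p)$ whether or not $p \in {\bf p}$. This atom case is the only place in the whole argument where the assumption $J \le^{\bf p} I$ is actually exercised.

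For the connective cases I would use the shape of the reduct. For negation, $(\neg G)^I$ is the \emph{numeric constant} $\u_I(\neg G)$, and a numeric constant evaluates to itself under every interpretation, so $\u_J((\neg G)^I) = \u_I(\neg G) = \u_I(F)$, again with equality. For a binary connective $\odot \in \{\fand, \for, \frar\}$, the reduct is $(F^I \odot G^I) \fand_m \u_I(F \odot G)$, where $\fand_m$ is the G\"odel t-norm, i.e.\ $\min$. Since its second argument is a numeric constant unaffected by $J$, this gives
\[
\u_J\bigl((F \odot G)^I\bigr) = \min\bigl(\u_J(F^I \odot G^I),\ \u_I(F \odot G)\bigr) \le \u_I(F \odot G) = \u_I(F),
\]
because $\min$ never exceeds its second argument.

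I expect no genuine obstacle here: the point worth noticing is that the factor $\fand_m \u_I(F \odot G)$, which looks redundant in the definition of the reduct, is exactly what caps the reduct's value at $\u_I(F)$, so the binary step closes \emph{without even invoking the induction hypothesis}. In fact the induction hypothesis is never used in any case --- the reducts produced by the atom, constant, and negation clauses are themselves atoms or constants, and the binary clause is settled by the $\min$ --- so the ``induction'' amounts to a direct case analysis following the recursive structure of $F$. The real content of the proposition is thus the design feature that re-evaluating $F^I$ under any pointwise-smaller interpretation $J$ can never raise the truth value above the original $\u_I(F)$.
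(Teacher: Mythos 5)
Your proof is correct for the statement as literally given, but it takes a genuinely different route from the paper's in the cases of $\fand$ and $\for$, and the difference matters for what the proposition is used for. You apply the capped clause $(F\odot G)^I = (F^I\odot G^I)\fand_m \u_I(F\odot G)$ uniformly to all three binary connectives, so the bound follows from $\min$ alone and, as you observe, no induction hypothesis is ever needed. The paper does this only for $\frar$; for $\fand$ and $\for$ it instead computes $\u_J(F^I) = \u_J(G^I)\odot\u_J(H^I)$ (the \emph{uncapped} form) and closes the case by the induction hypothesis together with the fact that $\fand$ and $\for$ are increasing. The reason is that the proposition is stated precisely to justify dropping the cap for the increasing connectives --- the paper announces $(F\fand G)^I = F^I\fand G^I$ and $(F\for G)^I = F^I\for G^I$, says this simplification ``is due to'' the proposition, and then uses the uncapped reduct throughout the rest of the paper and the appendix. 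Your $\min$-based argument does not apply once the cap is gone, so by itself it leaves that justification incomplete; to recover it one must add the observation that, by your proposition applied to the subformulas plus monotonicity, $\u_J(F^I)\odot\u_J(G^I) \le \u_I(F)\odot\u_I(G) = \u_I(F\odot G)$ for every $J\le^{\bf p} I$, so the cap is redundant and the two forms of the reduct agree in value. In short: your proof buys brevity and the nice insight that, for the capped Definition~\ref{def:fuzzy-reduct}, the result is a mere case analysis; the paper's induction buys the inequality in the form that survives the simplification the paper immediately adopts.
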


In the following, we will assume this simplified form of a reduct. 

Also, we may view $\neg F$ as shorthand for some fuzzy implication $F\rar 0$. For instance, what we called the standard negator can be derived from the residual implicator $\rar_l$ induced by \L ukasiewicz t-norm, defined as $\rar_l\!\!(x,y) = min(1-x+y, 1)$. In view of Proposition~\ref{prop:monotone}, for $J\le^{\bf p} I$,
\begin{align*}
 \u_J( (F\rar_l\,  0)^I ) & = \u_J( (F^I\rar_l\,  0)\fand_m \u_I(F\rar_l\,  0) )  \\
               & = \u_J( \u_I(F\rar_l\,  0) )  = \u_I(F\rar_l\,  0) = \u_I(\neg_s F).
\end{align*}
Thus the second clause of Definition~\ref{def:fuzzy-reduct} can be viewed as a special case of the third clause. 

\begin{example}\label{ex:pq} \optional{ex:pq}
Consider the fuzzy formula $F_1 = \neg_s q \rar_r\, p$ and the interpretation 
$I = \{(p, x), (q, 1-x)\}$, where $x\in [0,1]$.  $I\models F_1$ because 
\[
  \u_I(\neg_s q \rar_r\, p) =\  \rar_r\!\!(x, x)=1.
\]
The reduct $F_1^I$ is 
\[
   ((\neg_s q)^I\rar_r\, p)\fand_m 1
   \ \ \Lrar\ \ (\u_I(\neg_s q)\rar_r\, p) 
   \ \ \Lrar\ \ x\rar_r\, p
\]
$I$ can be a stable model only if $x=1$. Otherwise, $\{(p,x), (q,0)\}$ is a witness to dispute the stability of $I$.

On the other hand, for $F_2 = (\neg_s q\rar_r\, p)\fand_m (\neg_s p\rar_r\, q)$, for any value of $x\in[0,1]$, $I$ is a stable model. \footnote{%
Strictly speaking, $(1-x)$ in the reduct should be understood as the value of the arithmetic function applied to the arguments.}
\[
\ba {rcll}   
  F_2^I & = &  (\neg_s q\rar_r\, p)^I  & \fand_m\ (\neg_s p\rar_r\, q)^I  \\
  & \Lrar & ({\u_I(\neg_s q)}\rar_r\, p)\fand_m 1 & \fand_m\ ({\u_I(\neg_s p)}\rar_r\, q)\fand_m 1 \\
  & \Lrar & ({x}\rar_r\, p) & \fand_m\ ((1-x)\rar_r\, q) . 
\ea
\]
No interpretation $J$ such that $J<^{\{p,q\}} I$ satisfies $F_2^I$.
\end{example}

\begin{example}\label{ex:double-neg}\optional{ex:double-neg}
\bi
\item $F_1 = p\rar_r\, p$ is a tautology (i.e., every interpretation is a model of the formula), but not all models are stable. First, $I_1=\{(p,0)\}$ is a stable model. The reduct $F_1^{I_1}$ is 
\[
   (p^I\rar_r\, p^I)\fand_m 1 \ \ \Lrar\ \   p\rar_r\, p  \ \ \Lrar\ \   1
\]
and obviously, there is no witness to dispute the stability of $I$. 

No other interpretation $I_2 = \{(p, x)\}$ where $x>0$ is a stable model. The reduct $F_1^{I_2}$ is again equivalent to $1$, but $I_1$ serves as a witness to dispute the stability of $I_2$. 

\item 
$F_2 = \neg_s\neg_s p\rar_r\, p$ is equivalent to $F_1$, but their stable models are different. Any $I = \{(p,x)\}$, where $x\in [0,1]$, is a stable model of $F_2$. The reduct $F_2^I$ is 
\[
\ba {rcl}   
   ((\neg_s\neg_s p)^{I} \rar_r\, p^{I})\fand_m {1} 
      & \Lrar & {\u_I(\neg_s\neg_s p)} \rar_r\, p \\
      & \Lrar & {x}\rar_r\, p .
\ea
\]
No interpretation $J$ such that $J<^{\{p\}} I$ satisfies $F_2^I$.

\item 
Let $F_3$ be $\neg_s p\for_l p$. Any $I=\{(p,x)\}$, where $x\in [0,1]$, is a stable model of $F_3$. The reduct $F_3^I$ is  
\[
\ba {rcl}
   (\neg_s p)^I\for_l p^I \ \ \Lrar\ \ 
    \u_I(\neg_s p)\for_l p \ \ \Lrar\ \ (1-x)\for_l p .
\ea
\]
No interpretation $J$ such that $J <^p I$ satisfies the reduct. 
\ei
\end{example}

The following proposition extends a well-known fact about the relationship between a formula and its reduct in terms of satisfaction.
\begin{prop}\label{prop:reduct-satisfaction}
A (fuzzy) interpretation $I$ satisfies a formula $F$ if and only if $I$ satisfies $F^I$.
\end{prop}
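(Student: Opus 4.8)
The plan is to derive the proposition from the stronger identity
\[
\u_I(F^I) = \u_I(F),
\]
which I claim holds for every fuzzy formula $F$ and every interpretation $I$. Once this identity is in hand, the proposition is immediate: $I\models F$ means $\u_I(F)=1$, which by the identity is equivalent to $\u_I(F^I)=1$, i.e. $I\models F^I$. So the whole task reduces to establishing the displayed equality.

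Before running the induction, I would record \emph{why} the weaker biconditional resists a direct inductive proof, since this is the conceptual crux. In the binary case $F = G\odot H$ with $\odot\in\{\fand,\for,\frar\}$, knowing only that $I$ satisfies $G\odot H$ (that is, $\u_I(G\odot H)=1$) does not pin down the values $\u_I(G)$ and $\u_I(H)$: for an implication in particular, $\u_I(G\frar H)$ can equal $1$ for many values of its arguments. Hence an induction hypothesis phrased purely in terms of satisfaction of the immediate subformulas is too weak to close the step, and one is forced to carry the full numeric information given by the value identity. Recognizing that this strengthening is necessary is, I expect, the main obstacle; the calculations themselves are routine.

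Then I would prove $\u_I(F^I)=\u_I(F)$ by structural induction on $F$. The base case, $F$ a fuzzy atom or numeric constant, is immediate since $F^I=F$. For $F=\neg G$ the reduct $(\neg G)^I$ is by definition the numeric constant $\u_I(\neg G)$, whose value under $I$ is again $\u_I(\neg G)=\u_I(F)$, so no induction hypothesis is even needed there. The only substantive step is the binary case. Using the clause $(G\odot H)^I = (G^I\odot H^I)\fand_m \u_I(G\odot H)$ together with $\fand_m(x,y)=min(x,y)$, I would compute
\[
\u_I\big((G\odot H)^I\big) = min\big(\u_I(G^I\odot H^I),\, \u_I(G\odot H)\big) = min\big(\odot(\u_I(G^I),\u_I(H^I)),\, \u_I(G\odot H)\big).
\]
Applying the induction hypothesis $\u_I(G^I)=\u_I(G)$ and $\u_I(H^I)=\u_I(H)$ rewrites the first argument of $min$ as $\odot(\u_I(G),\u_I(H))=\u_I(G\odot H)$, so the $min$ collapses to $\u_I(G\odot H)=\u_I(F)$, as required. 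If one instead adopts the simplified reducts $(G\fand H)^I=G^I\fand H^I$ and $(G\for H)^I=G^I\for H^I$, the same computation goes through without the outer $\fand_m$, and the implication case is unchanged.

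I expect the work to be almost entirely conceptual rather than computational: once the statement about satisfaction is generalized to the statement about truth values, each inductive step is a one-line calculation relying only on $\fand_m$ being $min$ and on the recursive definition of the reduct. As a sanity check on direction, the ``if'' half can in fact be read off directly from Proposition~\ref{prop:monotone} with $J=I$, which yields $\u_I(F^I)\le\u_I(F)$ and hence $I\models F^I\Rightarrow I\models F$; but the uniform value identity settles both directions at once and is the cleaner route.
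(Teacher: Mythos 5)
Your proof is correct and matches the paper's own argument: the paper likewise strengthens the claim to the value identity $\u_I(F^I)=\u_I(F)$ (its Lemma~\ref{prop:reduct_val}), proves it by the same structural induction with the $\fand_m$-collapse in the binary case, and obtains the proposition as an immediate corollary. Your added remarks on why the bare biconditional cannot be pushed through the induction, and on recovering one direction from Proposition~\ref{prop:monotone}, are sound but supplementary.
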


For any fuzzy formula $F$, any interpretation $I$ and any set ${\bf p}$ of atoms, we say that $I$ is a {\em minimal} model of $F$ relative to ${\bf p}$ if $I$ satisfies $F$ and there is no interpretation $J$ such that $J<^{\bf p} I$ and $J$ satisfies $F$. Using this notion, Proposition~\ref{prop:reduct-satisfaction} tells us that Definition~\ref{def:fuzzy-sm} can be reformulated as follows. 
\begin{cor}
An interpretation $I$ is a {\em (fuzzy) stable model} of $F$ relative to~${\bf p}$ iff $I$ is a minimal model of $F^I$ relative to~${\bf p}$.
\end{cor}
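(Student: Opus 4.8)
The plan is to unfold both sides of the biconditional into their defining clauses and observe that the two sides differ only in a single clause, which is reconciled by Proposition~\ref{prop:reduct-satisfaction}. First I would expand the right-hand side using the definition of a minimal model stated just above the corollary: $I$ is a minimal model of $F^I$ relative to~${\bf p}$ exactly when (a) $I$ satisfies $F^I$, and (b) there is no interpretation $J$ with $J<^{\bf p} I$ such that $J$ satisfies $F^I$.

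Next I would expand the left-hand side using Definition~\ref{def:fuzzy-sm}: we have $I\models\sm[F;{\bf p}]$ exactly when (a$'$) $I\models F$, and (b$'$) there is no interpretation $J$ with $J<^{\bf p} I$ such that $J$ satisfies $F^I$. Clauses (b) and (b$'$) are literally identical, so the whole argument reduces to showing that (a) and (a$'$) are equivalent, i.e.\ that $I$ satisfies $F$ iff $I$ satisfies $F^I$. But this is precisely the statement of Proposition~\ref{prop:reduct-satisfaction}. Conjoining the equivalence of the first clauses with the exact coincidence of the second clauses yields the stated biconditional in both directions simultaneously.

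The main obstacle is, in effect, already discharged: all of the genuine work lives in Proposition~\ref{prop:reduct-satisfaction}, and once it is granted the corollary is a purely definitional rewriting requiring no further case analysis. For completeness I would note that Proposition~\ref{prop:reduct-satisfaction} itself is proved by structural induction on~$F$, with the base cases for fuzzy atoms and numeric constants immediate since $F^I=F$ there, and the inductive steps for $\neg$ and for the binary connectives following from the identity $\fand_m(1,x)=x$ together with the monotonicity recorded in Proposition~\ref{prop:monotone} (instantiated at $J=I$, which gives $\u_I(F^I)\le\u_I(F)$, the nontrivial direction). Granting that proposition, the corollary follows by the two-line matching of clauses described above.
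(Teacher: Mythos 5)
Your proof is correct and takes essentially the same route as the paper, which presents the corollary as an immediate reformulation of Definition~\ref{def:fuzzy-sm}: unfold both sides, note that the minimality clauses coincide verbatim, and use Proposition~\ref{prop:reduct-satisfaction} to exchange $I\models F$ for $I\models F^I$. One small caveat about your closing aside (which the corollary does not depend on, since you invoke the proposition as given): the paper proves Proposition~\ref{prop:reduct-satisfaction} via the stronger value identity $\u_I(F)=\u_I(F^I)$ (Lemma~\ref{prop:reduct_val}), and this strengthening is genuinely needed, because an induction that tracks only satisfaction breaks at the binary connectives---$I\models G\odot H$ gives no control over $\u_I(G)$ and $\u_I(H)$ separately (e.g.\ $\u_I(G)=\u_I(H)=0.5$ with $\odot=\for_l$), so monotonicity plus $\fand_m(1,x)=x$ alone do not close that case.
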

 
This reformulation relies on the fact that $\fand_m$ is intended in the third bullet of Definition \ref{def:fuzzy-reduct} instead of an arbitrary fuzzy conjunction because we want the truth value of the ``conjunction'' of $F^I\rar G^I$ and $F\rar G$ to be either the truth value of $F^I\rar G^I$ or the truth value $F\rar G$.
Conjunctions that do not have this property lead to unintuitive behaviors, such as violating Proposition \ref{prop:reduct-satisfaction}. As an example, consider the formula
\[
  F = 0.6\rar_r\, (1 \rar_r\,  p)
\]
and the interpretation
\[
I = \left\{(p, 0.6)\right\}.
\]
Clearly, $I$ satisfies $F$. According to Definition~\ref{def:fuzzy-reduct}, 
\begin{align*}
 F^I &=(0.6\rar_r\,  (1\rar_r\,  p)^I)\fand_m \u_I(0.6 \rar_r\,  (1 \rar_r\,  p))\\
     &= (0.6\rar_r\,  ((1\rar_r\,  p)\fand_m \u_I(1\rar_r\,  p)))\fand_m 1\\
     &= 0.6\rar_r\,  ((1\rar_r\,  p)\fand_m 0.6)
\end{align*}
and
\begin{align*}
\u_I(F^I) &= (0.6\rar_r\,  ((1\rar_r\,  0.6)\fand_m 0.6))\\
        &= 0.6\rar_r\,  0.6 \ \ = \ \ 1
\end{align*}
So $I$ satisfies $F^I$ as well. However, if we replace $\fand_m$ by $\fand_l$ in the third bullet of Definition~\ref{def:fuzzy-reduct}, we get
\begin{align*}
\u_I(F^I) &= 0.6\rar_r\,  ((1\rar_r\, 0.6) \fand_l 0.6)\\
         &= 0.6\rar_r\,  0.2\ \  =\ \  0.2 \ \ < \ \ 1, 
\end{align*}
which indicates that $I$ does not satisfy $F^I$. Therefore, $I$ is a fuzzy stable model of~$F$ according to Definition~\ref{def:fuzzy-sm}, but it is not even a model of the reduct $F^I$ if $\fand_l$ were used in place of $\fand_m$ in the definition of a reduct.

Similarly, it can be checked that the same issue remains if we use $\fand_p$ in place of~$\fand_m$.

The following example illustrates how the commonsense law of inertia involving fuzzy truth values can be represented. 

\begin{example}\label{ex:default}
Let $\sigma$ be $\{p_0, np_0, p_1, np_1\}$ 
and let $F$ be $F_1\fand_m F_2$, where $F_1$ represents that
$p_1$ and $np_1$ are complementary, i.e., the sum of their truth
values is~$1$: \footnote{This is similar to the formulas used under the Boolean stable model semantics to express that two Boolean atoms $p_1$ and $np_1$ take complimentary values, i.e.,
\[
\neg(p_1 \wedge np_1)\wedge\neg\neg(p_1 \vee np_1).
\]}
\[ 
   F_1=\neg_{s}(p_1 \fand_l np_1) \fand_m \neg_{s} \neg_{s}(p_1 \for_l np_1).  
\]
$F_2$ represents that by default $p_1$ has the truth value of $p_0$, and
$np_1$ has the truth value of~$np_0$: \footnote{This is similar to the rules used in ASP to express the commonsense law of inertia, e.g.,
\[
p_0 \wedge \neg\neg p_1 \rightarrow p_1.
\]}
\[
\ba l
   F_2= ((p_0\fand_m \neg_{s} \neg_{s} p_1) \rar_r\, p_1) 
      \fand_m ((np_0\fand_m \neg_{s} \neg_{s}  np_1) \rar_r\, np_1). 
\ea
\]

One can check that the interpretation 
\[ 
  I_1 = \{(p_0, x), (np_0, 1-x), (p_1, x), (np_1, 1-x) \}
\]
($x$ is any value in $[0,1]$)
is a stable model of $F$ relative to $\{p_1, np_1\}$: $F^{I_1}$ is equivalent to 
\[
\ba l
   (p_0\fand_m x\rar_r\, p_1)\fand_m (np_0\fand_m (1-x)\rar_r\, np_1).
\ea
\]
No interpretation $J$ such that $J<^{\{p_1,np_1\}} I_1$ satisfies $F^{I_1}$. 

The interpretation 
\[ 
  I_2 = \{(p_0, x), (np_0, 1-x), (p_1, y), (np_1, 1-y) \}, 
\] 
where $y > x$, is not a stable model of $F$.  The reduct $F^{I_2}$ is equivalent to 
\[
\ba l
   (p_0\fand_m y\rar_r\, p_1)\fand_m (np_0\fand_m (1-y)\rar_r\, np_1),
\ea
\]
and the interpretation $\{(p_0, x), (np_0, 1-x), (p_1, x), (np_1, 1-y)\}$ serves as a witness to dispute the stability of $I_2$.

Similarly, when $y < x$, we can check that $I_2$ is not a stable model of $F$ relative to $\{p_1, np_1\}$. 

On the other hand, if we conjoin $F$ with $y\rar_r\, p_1$ and $(1-y) \rar_r\, np_1$ to yield $F\fand_m (y\rar_r\, p_1)\fand_m ((1-y)\rar_r\, np_1)$, then the default behavior is overridden, and $I_2$ is a stable model of $F\fand_m ({y} \rar_r\, p_1)\fand_m ((1-y)\rar_r\, np_1)$ relative to $\{p_1,np_1\}$.\footnote{One may wonder why the part $(1-y)\rar_r\, np_1$ is also needed. It can be checked that if we drop the part $(1-y)\rar_r\, np_1$ and have $y$ less than $x$, then $I_2$ (with $y<x$) is not a stable model of $F\fand_m ({y} \rar_r\, p_1)$ relative to $\{p_1,np_1\}$ because $J=\{(p_0, x), (np_0, 1-x), (p_1, y), (np_1, 1-x)\}$ disputes the stability of~$I_2$.}

This behavior is useful in expressing the commonsense law of inertia involving fuzzy values. 
Suppose $p_0$ represents the value of fluent $p$ at time $0$, and $p_1$ represents the value at time $1$. Then $F$ states that ``by default, the fluent retains the previous value.'' The default value is overridden if there is an action that sets $p$ to a different value. 

This way of representing the commonsense law of inertia is a straightforward extension of the solution in ASP. 
\end{example}

\begin{example}\label{ex:trust}
The trust example in the introduction can be formalized in the fuzzy stable model semantics as follows. 
Below $x$, $y$, $z$ are schematic variables ranging over people, and $t$ is
a schematic variable ranging over time steps. $\i{Trust}(x,y,t)$
is a fuzzy atom representing that ``$x$ trusts $y$ at time $t$.''
Similarly, $\i{Distrust}(x,y,t)$ is a fuzzy atom representing that 
``$x$ distrusts $y$ at time $t$.'' 

The trust relation is reflexive: 
\[ 
  F_1 = \i{Trust}(x,x,t) .
\]

The trust and distrust degrees are complementary, i.e., their sum is
$1$ (similar to $F_1$ in Example~\ref{ex:default}): 
\[ 
\ba l 
   F_2 = \neg_{s} (\i{Trust}(x,y,t)\fand_l \i{Distrust}(x,y,t)),  \\
   F_3 = \neg_{s} \neg_{s} (\i{Trust}(x,y,t)\for_l\i{Distrust}(x,y,t)).

\ea 
\]

Initially, if $x$ trusts $y$ to degree $d_1$ and $y$ trusts $z$ to
degree $d_2$, then we assume $x$ trusts $z$ to degree $d_1\times d_2$; furthermore 
the initial distrust degree is $1$ minus the initial trust degree.
\[ 
\ba l
   F_4 = \i{Trust}(x,y,0)\fand_p\i{Trust}(y,z,0)\rar_r\,\i{Trust}(x,z,0), \\
   F_5 = \neg_{s}\i{Trust}(x,y,0)\rar_r\,\i{Distrust}(x,y,0).
\ea 
\]

The inertia assumption (similar to $F_2$ in Example~\ref{ex:default}): 
\[
\ba l
    F_6 = \i{Trust}(x,y,t)\fand_m \neg_{s} \neg_{s}\i{Trust}(x,y,t\!+\!1)
          \rar_r\, \i{Trust}(x,y,t\!+\!1),  \\
    F_7 = \i{Distrust}(x,y,t)\fand_m \neg_{s} \neg_{s} \i{Distrust}(x,
    y, t\!+\!1) \rar_r\, \i{Distrust}(x,y,t\!+\!1).
\ea 
\]

A conflict increases the distrust
degree by the conflict degree: 
\[
\ba l
   F_8= \i{Conflict}(x,y,t)\for_l \i{Distrust}(x,y,t) 
        \rar_r\,\i{Distrust}(x, y, t\!+\!1), \\
\ea 
\]

Let $F_{TW}$ be $F_1 \otimes_m F_2 \otimes_m \dots \otimes_m F_8$. 
Suppose we have the formula $F_{Fact}=\i{Fact}_1\fand_m\i{Fact}_2$ that
gives the initial trust degree.
\[
\ba l
   \i{Fact}_1 = {0.8}\rar_r\,\i{Trust}(\i{Alice}, \i{Bob}, 0), \\
   \i{Fact}_2 = {0.7}\rar_r\,\i{Trust}(\i{Bob}, \i{Carol}, 0).
\ea 
\]
Although there is no fact about how much $\i{Alice}$ trusts
$\i{Carol}$, any stable model of $F_{TW}\fand_m F_{Fact}$ assigns
value $0.56$ to the atom $\i{Trust}(\i{Alice}, \i{Carol}, 0)$. On the other
hand, the stable model assigns value $0$ to
$\i{Trust}(\i{Alice},\i{David},0)$ due to the closed world assumption under
the stable model semantics.

When we conjoin  $F_{TW}\fand F_{Fact}$ with 
${0.2}\rar_r\,\i{Conflict}(\i{Alice},\i{Carol}, 0)$, 
the stable model of 
\[
F_{TW}\fand_m
F_{Fact}\fand_m ({0.2}\rar_r\,\i{Conflict}(\i{Alice},\i{Carol},
0))
\]
manifests that the trust degree between $\i{Alice}$ and $\i{Carol}$
decreases to $0.36$ at time~$1$. More generally, if we have more actions that
change the trust degree in various ways, by specifying the entire
history of actions, we can determine the evolution of the trust
distribution among all the participants. Useful decisions can
be made based on this information. For example, $\i{Alice}$ may decide not to
share her personal pictures to those whom she trusts less than degree
$0.48$.
\end{example}

Note that this example, like Example~\ref{ex:default}, uses nested connectives, such as $\neg_{s}\neg_{s}$, that are not available in the syntax of FASP  considered in earlier work, such as \cite{lukasiewicz06fuzzy,janssen12reducing}.

It is often assumed that, for any fuzzy rule arrow $\ar$, it holds that  $\ar\!\!(x, y) = 1$ iff $x\geq y$ \cite{damasio01monotonic}. This condition is required to use  $\ar$ to define an immediate consequence operator for a positive program whose least fixpoint coincides with the unique minimal model. (A positive program is a set of rules of the form $a\ar b_1\fand\dots\fand b_n$, where $a, b_1, \dots, b_n$ are atoms.) Notice that $\rar_r$ in Figure~\ref{fig:operators} satisfies this condition, but $\rar_s$ does not.

%---------------------------------------------------------------------
\section{Generalization and Alternative Definitions}\label{sec:gen-alt}\optional{sec:gen-alt}
%---------------------------------------------------------------------

%---------------------------------------------------------------
\subsection{$y$-Stable Models} 
%---------------------------------------------------------------

While the fuzzy stable model semantics presented in the previous section allows atoms to have many values, like ASP, it holds on to the two-valued concept of satisfaction, i.e., a formula is either satisfied or not.
In a more flexible setting we may allow a formula to be partially satisfied to a certain degree. 

First, we generalize the notion of satisfaction to allow partial degrees of satisfaction as in~\cite{nieuwenborgh07anintroduction}.

\begin{definition}
For any real number $y\in [0,1]$, we say that a fuzzy interpretation $I$ {\em $y$-satisfies} a fuzzy formula~$F$ if $\u_I(F)\ge y$, and denote it by $I\models_y F$. We call $I$ a fuzzy $y$-model of~$F$. 
\end{definition}

Using this generalized notion of satisfaction, it is straightforward to generalize the definition of a stable model to incorporate partial degrees of satisfaction.

\begin{definition}
We say that an interpretation $I$ is a fuzzy $y$-stable model of $F$ relative to~${\bf p}$ (denoted $I\models_y\sm[F; {\bf p}]$) if 
\bi
\item  $I\models_y F$ and
\item  there is no interpretation $J$ such that $J<^{\bf p} I$ and $J\models_y F^{I}$. 
\ei
\end{definition}

Fuzzy models and fuzzy stable models as defined in Section~\ref{ssec:review-fuzzy} and Section~\ref{sec:definition} are fuzzy $1$-models and fuzzy $1$-stable models according to these generalized definitions. 

\begin{example}\label{ex:not-p-implies-q}\optional{ex:not-p-implies-q}
Consider the fuzzy formula $F = \neg_{s}\, p \rar_r\, q$ and the interpretation $I = \{(p, 0), (q, 0.6)\}$. $I\models_{0.6} F$ because $\u_I(\neg_s\, p \rar_r\, q) = \rar_r\!\!(1, 0.6)=0.6$. 

The reduct $F^I$ is 
\[
   ((\neg_s p)^I\rar_r\, q)\fand_m 1\ \ \Lrar\ \ (\u_I(\neg_s p)\rar_r\, q) 
   \ \ \Lrar\ \  1\rar_r\, q
\]
Clearly, for any $J$ such that $J<^{\{p,q\}} I$, we observe that $J\not\models_{0.6} F^I$. Hence, $I$ is a $0.6$-stable model of $F$.
\end{example}

On the other hand, the generalized definition is not essential in the sense that it can be reduced to the special case as follows. 

\begin{thm}\label{thm:ystable-1stable}\optional{thm:ystable-1stable}
For any fuzzy formula $F$, an interpretation $I$ is a $y$-stable model of $F$ relative to ${\bf p}$ iff $I$ is a $1$-stable model of $y\rar F$ relative to ${\bf p}$ as long as the implication $\rar$ satisfies the condition $\rar\!\!(x, y) = 1$ iff $y\geq x$. 
\end{thm}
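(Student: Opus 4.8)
The plan is to show that each of the two defining clauses of a $y$-stable model of $F$ matches the corresponding clause of a $1$-stable model of $y\rar F$, using the hypothesis ``$\rar\!\!(x,y)=1$ iff $y\ge x$'' as the single bridge that converts the $y$-threshold into ordinary ($1$-)satisfaction. First I would settle the satisfaction clauses. Since the constant $y$ evaluates to $y$, we have $\u_I(y\rar F)=\rar\!\!(y,\u_I(F))$, and the hypothesis on $\rar$ (instantiated with first argument $y$ and second argument $\u_I(F)$) gives $\u_I(y\rar F)=1$ iff $\u_I(F)\ge y$. Hence $I\models_1(y\rar F)$ iff $I\models_y F$, so the first bullets of the two definitions coincide. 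For the remainder of the argument I may therefore assume this common condition holds; in particular $\u_I(y\rar F)=1$.

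The key step is to compute the reduct $(y\rar F)^I$ and compare $1$-satisfaction of it against $y$-satisfaction of $F^I$. By Definition~\ref{def:fuzzy-reduct} (the implication case, which is \emph{not} among the simplified connectives), and using that the reduct of the numeric constant $y$ is $y$ together with $\u_I(y\rar F)=1$, I would obtain
\[
  (y\rar F)^I \;=\; (y\rar F^I)\fand_m\u_I(y\rar F) \;=\; (y\rar F^I)\fand_m 1 \;\Lrar\; y\rar F^I ,
\]
where the last equivalence uses $\fand_m(x,1)=x$. Consequently, for every interpretation $J$ we have $\u_J\big((y\rar F)^I\big)=\rar\!\!(y,\u_J(F^I))$, which by the hypothesis on $\rar$ equals $1$ exactly when $\u_J(F^I)\ge y$. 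In other words, $J\models_1(y\rar F)^I$ if and only if $J\models_y F^I$.

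Combining the two observations, a witness $J<^{\bf p}I$ with $J\models_1(y\rar F)^I$ exists if and only if a witness $J<^{\bf p}I$ with $J\models_y F^I$ exists, so the second (stability) bullets coincide as well, and the theorem follows by running the equivalence in both directions. I expect the only delicate point to be the reduct computation above: one must ensure that the condition $I\models_y F$ (equivalently $\u_I(y\rar F)=1$) is actually available at the moment it is used to collapse the $\fand_m$ factor, which is precisely why the satisfaction clauses should be handled first; once that is in place, everything else is a direct application of the defining property of $\rar$.
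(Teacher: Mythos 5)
Your proof is correct and follows essentially the same route as the paper's: the paper also first proves the bridge ``$I\models_y G$ iff $I\models_1 y\rar G$'' from the defining property of $\rar$ (its Lemma on $y$-models), applies it both to $F$ under $I$ and to $F^I$ under the candidate witnesses $J$, and uses $\u_I(y\rar F)=1$ to identify $(y\rar F)^I$ with $y\rar F^I$ up to the collapsing $\fand_m$ factor. The only difference is presentational: the paper arranges these facts as a single chain of equivalences, while you separate the satisfaction clause from the stability clause, but the ingredients and their justifications are identical.
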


For example, in accordance with Theorem~\ref{thm:ystable-1stable},
$\{(p,0.6)\}$ is a $0.6$-stable model of $\neg_s\, p \rar_r\, q$, as well as a $1$-stable model of $0.6\rar_r\ (\neg_s\, p \rar_r\, q)$.

%---------------------------------------------------------------
\subsection{Second-Order Logic Style Definiton} 
%---------------------------------------------------------------

In~\cite{ferraris11stable}, second-order logic was used to define the stable models of a first-order formula, which is equivalent to the reduct-based definition when the domain is finite. 
Similarly, but instead of using second-order logic, we can express the same concept using auxiliary atoms that do not belong to the original signature.

Let $\sigma$ be a set of fuzzy atoms, and let ${\bf p}=(p_1,\dots, p_n)$ be a list of distinct atoms belonging to~$\sigma$, and let ${\bf q}=(q_1,\dots,q_n)$ be a list of new, distinct fuzzy atoms not belonging to $\sigma$. For two interpretations $I$ and $J$ of $\sigma$ that agree on all atoms in $\sigma\setminus {\bf p}$,   
$I\cup J^{\bf p}_{\bf q}$ denotes the interpretation of $\sigma\cup{\bf q}$ such that
\bi
\item  $(I\cup J^{\bf p}_{\bf q})(p) = I(p)$ for each atom $p$ in $\sigma$, and

\item  $(I\cup J^{\bf p}_{\bf q})(q_i)=J(p_i)$ for each $q_i\in {\bf q}$.
\ei

For any fuzzy formula~$F$ of signature~$\sigma$, $F^*({\bf q})$ is defined as follows. 
                                                  
\begin{itemize}
\item  $p_i^* = q_i$ for each $p_i\in{\bf p}$; 
\item  $F^* = F$ for any atom $F\not\in {\bf p}$ or any numeric constant $F$;
\item  $(\neg F)^* = \neg F$;
\item  $(F\fand G)^* = F^*\fand G^*$; \qquad
       $(F\for G)^* = F^* \for G^*$;
\item  $(F\frar G)^* = (F^*\frar G^*)\fand_m (F \rar G)$.
\end{itemize}

\begin{thm}\label{thm:fuzzy-sm}
A fuzzy interpretation $I$ is a fuzzy stable model of $F$ relative to ${\bf p}$ iff 
\bi 
\item  $I \models F$, and
\item  there is no fuzzy interpretation $J$ such that  $J<^{\bf p} I$ and $I\cup J^{\bf p}_{\bf q}\models F^*({\bf q})$.
\ei
\end{thm}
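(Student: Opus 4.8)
\noindent The plan is to derive Theorem~\ref{thm:fuzzy-sm} from the reduct-based Definition~\ref{def:fuzzy-sm} by establishing a single pointwise identity relating the starred formula to the reduct. Concretely, I would prove the following lemma: for any two interpretations $I$ and $J$ of $\sigma$ that agree on all atoms in $\sigma\setminus{\bf p}$ (in particular, for any $J$ with $J\le^{\bf p} I$),
\[
  \u_{I\cup J^{\bf p}_{\bf q}}(F^*({\bf q})) \;=\; \u_J(F^I).
\]
Once this is available, the theorem is immediate: by the lemma a given $J$ with $J<^{\bf p} I$ satisfies $I\cup J^{\bf p}_{\bf q}\models F^*({\bf q})$ (i.e.\ the left-hand side equals~$1$) if and only if it satisfies $J\models F^I$ (i.e.\ the right-hand side equals~$1$). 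Hence the ``no witness $J$'' clause of the theorem coincides exactly with the second bullet of Definition~\ref{def:fuzzy-sm}, while the first bullets ($I\models F$) are literally identical, so the two characterizations of stability agree.

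I would prove the lemma by structural induction on $F$, using the simplified form of the reduct in which $\fand$ and $\for$ carry no extra conjunct. The base cases are direct: for $p_i\in{\bf p}$ both sides equal $J(p_i)$, since $p_i^I=p_i$ whereas $p_i^*=q_i$ and $(I\cup J^{\bf p}_{\bf q})(q_i)=J(p_i)$; for an atom off ${\bf p}$ or a numeric constant both sides reduce to the common value determined by $I$ (equivalently $J$). The $\fand$ and $\for$ cases follow mechanically from the induction hypothesis, because $F^*$ and the simplified reduct recurse identically through these connectives. The two cases requiring care, and where the precise shape of the definitions matters, are negation and implication. The key observation is that in $(\neg F)^*=\neg F$ and in the second conjunct of $(F\frar G)^*=(F^*\frar G^*)\fand_m (F\rar G)$ the \emph{original} subformula over $\sigma$ is used, with the atoms of ${\bf p}$ left intact rather than renamed to ${\bf q}$. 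Such a subformula contains no atom from ${\bf q}$, so under $I\cup J^{\bf p}_{\bf q}$ each of its atoms is evaluated by $I$; hence $\u_{I\cup J^{\bf p}_{\bf q}}(\neg F)=\u_I(\neg F)$ and $\u_{I\cup J^{\bf p}_{\bf q}}(F\rar G)=\u_I(F\rar G)$. These match precisely the constant $\u_I(\neg F)$ produced by $(\neg F)^I$ and the factor $\fand_m \u_I(F\rar G)$ produced by $(F\frar G)^I$. Combining this with the induction hypothesis on $F^*$ and $G^*$, which track $\u_J(F^I)$ and $\u_J(G^I)$, closes the implication case, since $\fand_m$ occupies the same position on both sides.

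The main obstacle is conceptual rather than computational: one must recognize that the asymmetric treatment of the argument of $\neg$ and of the second argument of the reduct conjunction---keeping the original, unrenamed formula there---is exactly what makes $F^*({\bf q})$ simulate the reduct, and that this relies on $J$ agreeing with $I$ off ${\bf p}$. That agreement simultaneously guarantees that $I\cup J^{\bf p}_{\bf q}$ is well defined, that the renamed positions faithfully carry the $J$-values, and that all remaining positions carry the $I$-values, which is what aligns the two constructions. Once the lemma is phrased for all such $J$ and the negation and implication cases are handled as above, the remaining steps are routine and the reduction to Definition~\ref{def:fuzzy-sm} needs no further argument.
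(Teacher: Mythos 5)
Your proposal is correct and follows essentially the same route as the paper: the paper's proof also rests on exactly this key lemma (its Lemma~\ref{lem:eq-reduct-star}, stated for $J\le^{\bf p} I$), proved by the same structural induction in which the unrenamed subformulas occurring in $(\neg F)^*$ and in the second conjunct of $(F\frar G)^*$ are evaluated under $I$ and thereby reproduce the constants $\u_I(\neg F)$ and $\u_I(F\rar G)$ appearing in the reduct, after which the theorem follows by the same one-line substitution of the lemma into Definition~\ref{def:fuzzy-sm}. Your identification of the negation and implication cases as the ones where the asymmetric renaming does the work matches the paper's proof precisely.
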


%---------------------------------------------------------------------
\section{Relation to Boolean-Valued Stable Models} \label{sec:rel-bool} \optional{sec:rel-bool}
%---------------------------------------------------------------------

The Boolean stable model semantics in Section~\ref{ssec:review-sm} can be embedded into the fuzzy stable model semantics as follows: 

For any classical propositional formula $F$, define $F^\mi{fuzzy}$ to be the fuzzy propositional formula obtained from $F$ by replacing $\bot$ with ${0}$,\  $\top$ with ${1}$,\  $\neg$ with $\neg_{s}$,\ $\land$ with $\fand_m$,\ $\lor$ with $\for_m$, and $\rar$ with $\rar_s$.   We identify the signature of $F^\mi{fuzzy}$ with the signature of $F$. Also, for any propositional interpretation $I$, we define the corresponding fuzzy interpretation $I^\mi{fuzzy}$ as 
\bi
\item  ${I^\mi{fuzzy}}(p) = 1$ if $I(p)=\true$; 
\item  ${I^\mi{fuzzy}}(p) = 0$ otherwise.
\ei

The following theorem tells us that the Boolean-valued stable model
semantics can be viewed as a special case of the fuzzy stable model semantics. 

\begin{thm}\label{thm:cl-fuzzy-sm}\optional{thm:cl-fuzzy-sm} 
For any classical propositional formula $F$ and any classical propositional interpretation $I$, $I$ is a stable model of $F$ relative to ${\bf p}$ iff $I^\mi{fuzzy}$ is a stable model of $F^\mi{fuzzy}$ relative to ${\bf p}$.
\end{thm}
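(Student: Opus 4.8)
The plan is to verify the two bullets of Definition~\ref{def:fuzzy-sm} separately, matching each against the corresponding condition in the classical definition. For the ``$I\models F$'' bullet I would first establish a \emph{truth-preservation} lemma: for every classical formula $G$ and Boolean interpretation $K$, one has $\u_{K^\mi{fuzzy}}(G^\mi{fuzzy})=1$ when $K\models G$ and $\u_{K^\mi{fuzzy}}(G^\mi{fuzzy})=0$ otherwise. This is a routine structural induction whose only content is that, on arguments restricted to $\{0,1\}$, each chosen operator $\neg_s,\fand_m,\for_m,\rar_s$ agrees with the matching Boolean connective, and that $K^\mi{fuzzy}$ takes values only in $\{0,1\}$. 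In particular $I\models F$ iff $I^\mi{fuzzy}\models F^\mi{fuzzy}$, which settles the first bullet on both sides.

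The real work is to show that $I$ has a Boolean witness disputing its stability (a $J<^{\bf p}I$ with $J\models F^I$, the classical reduct of Definition~\ref{def:classical-reduct}) iff $I^\mi{fuzzy}$ has a fuzzy witness (a $J'<^{\bf p}I^\mi{fuzzy}$ with $J'\models (F^\mi{fuzzy})^{I^\mi{fuzzy}}$, the reduct of Definition~\ref{def:fuzzy-reduct}). I would connect the two through the rounding $J^\circ$ given by $J^\circ(p)=\true$ iff $J'(p)=1$, and prove by induction on $F$ the following pair of invariants for every $J'\le^{\bf p}I^\mi{fuzzy}$: (A) if $\u_{J'}((F^\mi{fuzzy})^{I^\mi{fuzzy}})=1$ then $J^\circ\models F^I$; and (B) if $J^\circ\models F^I$ then $\u_{J'}((F^\mi{fuzzy})^{I^\mi{fuzzy}})>0$. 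In the cases where the classical reduct collapses to $\bot$ (when $I$ fails to satisfy the relevant subformula), both invariants hold vacuously, once I observe via Proposition~\ref{prop:monotone} together with truth preservation that the fuzzy reduct then evaluates to $0$ under every $J'\le^{\bf p}I^\mi{fuzzy}$. The conjunction and disjunction clauses go through smoothly because $\fand_m$ and $\for_m$ are simply $\min$ and $\max$.

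The main obstacle is the implication case, because the chosen implicator $\rar_s\!\!(x,y)=\max(1-x,y)$ is non-monotone and reaches value $1$ exactly when $x=0$ or $y=1$, whereas classical material implication $H_1^I\rar H_2^I$ is satisfied exactly when $J^\circ\not\models H_1^I$ or $J^\circ\models H_2^I$. A single ``value $=1$ iff satisfied'' invariant cannot bridge this gap once the antecedent reduct takes a value strictly between $0$ and $1$. Carrying (A) and (B) \emph{simultaneously} is precisely what resolves it: to prove (A) for $H_1\rar H_2$ I use (A) for $H_2$ in the subcase $y=1$ and the contrapositive of (B) for $H_1$ in the subcase $x=0$; to prove (B) I dually use (B) for $H_2$ and the contrapositive of (A) for $H_1$. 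Recognizing the need for this dual invariant is the crux of the argument.

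Finally I would assemble the theorem. Invariant (A) converts any fuzzy witness $J'$ into the Boolean witness $J^\circ$, after checking that $J^\circ\le^{\bf p}I$ and that $J'\neq I^\mi{fuzzy}$ forces $J^\circ\neq I$, so $J^\circ<^{\bf p}I$. Conversely, given a Boolean witness $J<^{\bf p}I$, I take $J^\mi{fuzzy}$, which is $<^{\bf p}I^\mi{fuzzy}$ and whose rounding is $J$ itself; since $J^\mi{fuzzy}$ is two-valued, a short side induction shows every subreduct evaluates in $\{0,1\}$, so (B) (whose conclusion $>0$ then forces $=1$) yields $\u_{J^\mi{fuzzy}}((F^\mi{fuzzy})^{I^\mi{fuzzy}})=1$, a fuzzy witness. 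Hence $I$ has a witness iff $I^\mi{fuzzy}$ does, and combining this with truth preservation gives $I\models\sm[F;{\bf p}]$ iff $I^\mi{fuzzy}\models\sm[F^\mi{fuzzy};{\bf p}]$, as required.
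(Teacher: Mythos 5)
Your proof is correct, and its core machinery coincides with the paper's: your truth-preservation lemma is the paper's Lemma~\ref{lem:true-eq-1}, your rounding $J^\circ$ is the paper's defuzzification map $\i{defuz}$, your dual invariant (A)/(B) is (modulo contraposition) items (i)/(ii) of the paper's Lemma~\ref{lem:df-models} --- with exactly the same crossing of the two halves to get through $\rar_s$ in the implication case --- and both converse directions fuzzify a Boolean witness. The genuine difference is where the induction is carried out, and it is worth noting. The paper proves its dual invariant for \emph{formulas} $G^\mi{fuzzy}$ under arbitrary fuzzy interpretations, and then in the main proof applies it to the classical reduct $F^I$ after asserting that $J\models(F^\mi{fuzzy})^{I^\mi{fuzzy}}$ holds ``or equivalently'' $J\models(F^I)^\mi{fuzzy}$; this identification of the fuzzy reduct with the fuzzified classical reduct is never proved, and as an unrestricted formula equivalence it is in fact false --- for $F=p\rar q$ and $I=\emptyset$ the two formulas disagree under $J=\{(p,1),(q,0)\}$ --- it holds only for interpretations $J\le^{\bf p}I^\mi{fuzzy}$, which are the ones arising in the proof. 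Your version instead runs the induction on the \emph{reducts} themselves, using Proposition~\ref{prop:monotone} together with truth preservation to dispose of the cases where the classical reduct collapses to $\bot$; this absorbs the missing identification into the induction and makes the argument self-contained, at the cost of a slightly heavier inductive statement (the hypotheses must be carried for every $J'\le^{\bf p}I^\mi{fuzzy}$ simultaneously). What the paper's factoring buys is a clean, reusable formula-level lemma whose proof also isolates exactly where the specific operators $\neg_s,\fand_m,\for_m,\rar_s$ are needed; what yours buys is rigor at precisely the step the paper glosses over.
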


\begin{example}
Let $F$ be the classical propositional formula $\neg q\rar p$. $F$ has
only one stable model $I = \{p\}$.
Likewise, as shown in Example~\ref{ex:pq}, $F^\mi{fuzzy}=\neg_{s} q\rar_s p$
has only one fuzzy stable model $I^\mi{fuzzy}=\{(p, 1), (q, 0)\}$.
\end{example}

Theorem~\ref{thm:cl-fuzzy-sm} may not necessarily hold for different selections of fuzzy operators, as illustrated by the following example.

\begin{example}
Let $F$ be the classical propositional formula $p\lor p$. Classical interpretation $I = \{p\}$ is a stable model of $F$. However, $I^\mi{fuzzy}= \{(p,1)\}$ is not a stable model of $F^\mi{fuzzy} = p\for_l p$ because there is $J = \{(p, 0.5)\} < I$ that satisfies $(F^\mi{fuzzy})^I = p\for_l p$.
\end{example}

However, one direction of Theorem~\ref{thm:cl-fuzzy-sm} still holds for different selections of fuzzy operators. 

\begin{thm} \label{thm:fuzzy-cl-sm}\optional{thm:fuzzy-cl-sm}
For any classical propositional formula $F$, let $F_1^\mi{fuzzy}$ be the fuzzy formula obtained from $F$ by replacing $\bot$ with ${0}$,  $\top$ with ${1}$, $\neg$ with any fuzzy negation symbol, $\land$ with any fuzzy conjunction symbol, $\lor$ with any fuzzy disjunction symbol, and $\rar$ with any fuzzy implication symbol. For any classical propositional interpretation $I$, if $I^\mi{fuzzy}$ is a fuzzy stable model of $F_1^\mi{fuzzy}$ relative to ${\bf p}$, then $I$ is a Boolean stable model of $F$ relative to ${\bf p}$.
\end{thm}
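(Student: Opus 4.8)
The plan is to assume $I^\mi{fuzzy}\models\sm[F_1^\mi{fuzzy};{\bf p}]$ and verify directly the two conditions that make $I$ a Boolean stable model of $F$ relative to ${\bf p}$: that $I\models F$, and that no classical $J<^{\bf p} I$ satisfies the classical reduct $F^I$. The whole argument rests on the observation that the boundary conditions imposed on the fuzzy operators force them to agree with the classical connectives on the two-element set $\{0,1\}$. Indeed, from $\fand(1,x)=x$, $\for(0,x)=x$, $\frar(1,x)=x$, $\frar(0,0)=1$, $\fneg(0)=1$, $\fneg(1)=0$, together with monotonicity and commutativity, one checks that each operator maps $\{0,1\}$ into $\{0,1\}$ and there coincides with $\land,\lor,\rar,\neg$. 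Since $I^\mi{fuzzy}$ is $\{0,1\}$-valued, a routine structural induction yields a first fact, call it $(*)$: for every classical formula $G$ and every classical interpretation $K$, $\u_{K^\mi{fuzzy}}(G_1^\mi{fuzzy})\in\{0,1\}$ and equals $1$ iff $K\models G$. Applying $(*)$ with $G=F$, $K=I$ gives $I\models F$ at once.

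For the stability condition I would prove the key lemma: for all classical interpretations $I,J$ with $J\le^{\bf p} I$ and every classical $G$,
\[
  \u_{J^\mi{fuzzy}}\big((G_1^\mi{fuzzy})^{I^\mi{fuzzy}}\big)=1
  \quad\text{iff}\quad J\models G^I ,
\]
where the left side uses the fuzzy reduct of Definition~\ref{def:fuzzy-reduct} (in its simplified form) and $G^I$ is the classical reduct of Definition~\ref{def:classical-reduct}. Because $(*)$ makes every constant $\u_{I^\mi{fuzzy}}(\cdots)$ inserted by the reduct lie in $\{0,1\}$ and the operators preserve $\{0,1\}$, the left-hand side is genuinely two-valued, so a biconditional is the right target. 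The induction proceeds on the structure of $G$. The atom case is exactly where the hypothesis $J\le^{\bf p} I$ is used: the fuzzy reduct of $p$ is $p$, whose value under $J^\mi{fuzzy}$ is $1$ precisely when $J\models p$, whereas $J\models p^I$ means $I\models p$ and $J\models p$; under $J\le^{\bf p} I$ these two conditions coincide. The $\land$ and $\lor$ cases follow from the induction hypothesis together with the $\{0,1\}$-agreement of $\fand,\for$, using the simplified reduct clauses justified by Proposition~\ref{prop:monotone} (when $I$ fails a conjunct, the corresponding subreduct evaluates to $0$ under $J^\mi{fuzzy}$, matching $\bot$ on the classical side); the negation case follows from $(*)$ applied to the immediate subformula.

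I expect the implication case to be the main obstacle, since $\frar$ is the only connective whose reduct retains the extra conjunct $\fand_m\,\u_{I^\mi{fuzzy}}(\cdots)$ and since $\frar$ is decreasing in its first argument. Here I would first use $(*)$ to evaluate that conjunct to $1$ or $0$ according to whether $I\models G_1\rar G_2$: if $I\not\models G_1\rar G_2$ the conjunct is $0$, so the $\fand_m$ forces the whole value to $0$, matching the classical reduct $\bot$; if $I\models G_1\rar G_2$ the conjunct is $1$ and the value reduces to $\frar$ applied to the two subreduct values, which by the induction hypothesis and the $\{0,1\}$-agreement of $\frar$ equals $1$ iff $J\models G_1^I\rar G_2^I$, i.e. iff $J\models(G_1\rar G_2)^I$. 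Note that both directions of the induction hypothesis are needed precisely because $\frar$ decreases in its first argument, so one must control the first subreduct value from above as well as from below. With the lemma established, the theorem is immediate: if some classical $J<^{\bf p} I$ satisfied $F^I$, then $J^\mi{fuzzy}<^{\bf p} I^\mi{fuzzy}$ (the map $K\mapsto K^\mi{fuzzy}$ is injective and order-preserving for $\le^{\bf p}$), and the lemma would make $J^\mi{fuzzy}$ satisfy $(F_1^\mi{fuzzy})^{I^\mi{fuzzy}}$, contradicting the stability of $I^\mi{fuzzy}$; hence no such $J$ exists and $I$ is a Boolean stable model of $F$ relative to ${\bf p}$. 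The reason arbitrary operators suffice in this direction is that a disputing witness only has to be lifted from the Boolean to the fuzzy setting, where the operators agree; the converse fails (cf. the $p\for_l p$ example) because genuinely intermediate-valued witnesses can appear.
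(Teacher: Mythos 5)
Your proof is correct, and at the top level it follows the same strategy as the paper's: the paper proves this theorem by reusing, verbatim, the right-to-left direction of the proof of Theorem~\ref{thm:cl-fuzzy-sm}, which likewise lifts a Boolean disputing witness $J<^{\bf p} I$ to a fuzzy one $J^\mi{fuzzy}<^{\bf p} I^\mi{fuzzy}$. The genuine difference is in the key lemma. The paper invokes only the satisfaction-transfer lemma (Lemma~\ref{lem:true-eq-1}, essentially your fact $(*)$): from $J\models F^I$ it concludes $J^\mi{fuzzy}\models (F^I)^\mi{fuzzy}$ --- satisfaction of the \emph{fuzzification of the classical reduct} --- and then declares this to contradict the fuzzy stability of $I^\mi{fuzzy}$. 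But what the definition of fuzzy stability requires of a witness is satisfaction of the \emph{fuzzy reduct} $(F_1^\mi{fuzzy})^{I^\mi{fuzzy}}$, a syntactically different formula (the classical reduct inserts $\bot$ wherever $I$ falsifies a subformula, while the simplified fuzzy reduct keeps the conjunction and disjunction of subreducts intact); the passage between the two is exactly your key lemma, $\u_{J^\mi{fuzzy}}((G_1^\mi{fuzzy})^{I^\mi{fuzzy}})=1$ iff $J\models G^I$ for $J\le^{\bf p} I$, which the paper leaves implicit behind the words ``it follows.'' So your proof is the same route made self-contained, and your observation that the implication case forces the induction hypothesis to be a biconditional together with two-valuedness --- because $\frar$ is decreasing in its first argument, so one needs the first subreduct's value to be exactly $0$ rather than merely below $1$ --- identifies precisely the subtlety that makes that elided step nontrivial. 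What the paper's version buys is brevity; what yours buys is an explicit justification of the reduct correspondence. Your closing remark about why only this direction survives arbitrary operators also matches the paper, whose converse direction rests on the operator-specific Lemma~\ref{lem:df-models}.
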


%---------------------------------------------------------------------
\section{Relation to Other Approaches to  Fuzzy  ASP} \label{sec:rel-fuzzyasp}  \optional{sec:rel-fuzzyasp}
%---------------------------------------------------------------------

%---------------------------------------------------------------------
\subsection{Relation to Stable Models of Normal FASP Programs} \label{ssec:normal-fasp}\optional{ssec:normal-fasp}
%---------------------------------------------------------------------

A normal FASP program \cite{lukasiewicz06fuzzy} is a finite set of rules of the form
\[ 
  a\ \ar\ b_1 \fand \dots \fand b_m \fand \neg b_{m+1}\fand \dots \fand \neg b_n, 
\] 
where $n\geq m \geq 0$, $a, b_1, \dots, b_n$ are fuzzy atoms or
numeric constants in $[0,1]$, and $\fand$ is any fuzzy conjunction. We identify the rule with the fuzzy implication 
\[ 
   b_1\fand\dots\fand b_m\fand\neg_{s} b_{m+1}\fand\dots\fand
   \neg_{s} b_n\ \rar_r\ a, 
\] 
which allows us to say that a fuzzy interpretation $I$ of signature~$\sigma$ {\em satisfies} a rule $R$ if $\u_I(R)=1$. $I$ {\em satisfies} an FASP program $\Pi$ if $I$ satisfies every rule in $\Pi$. 

According to \cite{lukasiewicz06fuzzy}, an interpretation $I$ is a {\em fuzzy answer set} of a normal FASP program $\Pi$ if $I$ satisfies $\Pi$, and no interpretation $J$ such that $J<^\sigma I$ satisfies the reduct of $\Pi$ w.r.t.~$I$, which is the program obtained from $\Pi$ by replacing each negative literal $\neg b$ with the constant for~$1-I(b)$. 

\begin{thm}\label{thm:normal-fuzzysm}\optional{thm:normal-fuzzysm}
For any normal FASP program $\Pi=\{r_1, \dots, r_n\}$, let $F$ be the fuzzy formula $r_1\fand\dots\fand r_n$, where $\fand$ is any fuzzy conjunction. An interpretation $I$ is a fuzzy answer set of $\Pi$ in the sense of \cite{lukasiewicz06fuzzy} if and only if $I$ is a stable model of $F$. 
\end{thm}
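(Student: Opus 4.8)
The plan is to split the stated biconditional into two matching pairs: the ordinary satisfaction conditions ($I\models F$ iff $I$ satisfies every rule of $\Pi$) and the stability-witness conditions ($J$ satisfies the fuzzy reduct $F^I$ iff $J$ satisfies the FASP reduct of $\Pi$ relative to $I$). The first pair rests on a simple observation about any fuzzy conjunction: since $\fand$ is commutative, increasing, and satisfies $\fand(1,x)=x$, we get $\fand(x,y)\le\fand(1,y)=y$ and symmetrically $\fand(x,y)\le x$, so $\fand(x_1,\dots,x_n)=1$ iff every $x_i=1$. Applying this to $\u_I(F)=\fand(\u_I(r_1),\dots,\u_I(r_n))$ shows $I\models F$ iff $\u_I(r_i)=1$ for all $i$, i.e. iff $I$ satisfies $\Pi$. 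If this common condition fails, $I$ is neither a fuzzy answer set nor a fuzzy stable model, so the biconditional holds vacuously; hence I would assume $I\models F$ from here on, which also records that $\u_I(r_i)=1$ for every $i$.

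The heart of the argument is to compute $F^I$ and identify it, conjunct by conjunct, with the FASP reduct. Using the simplified form of the reduct for $\fand$ (licensed by Proposition~\ref{prop:monotone}, which holds for any increasing conjunction and hence for the arbitrary body conjunction allowed in a normal FASP program), we have $F^I = r_1^I \fand \dots \fand r_n^I$. Each rule is an implication $r_i = B_i \rar_r a_i$ whose body $B_i$ is $b_1\fand\dots\fand b_m\fand\neg_s b_{m+1}\fand\dots\fand\neg_s b_n$. The atom clause of the reduct gives $b^I=b$ and $a_i^I=a_i$, while the negation clause gives $(\neg_s b)^I=\u_I(\neg_s b)=1-I(b)$; distributing the reduct over the body conjunction, $B_i^I$ is exactly the body in which each negative literal $\neg_s b$ is replaced by the numeric constant $1-I(b)$, which is precisely the body of the FASP reduct rule. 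For the implication, $r_i^I=(B_i^I\rar_r a_i)\fand_m \u_I(r_i)$, and since $\u_I(r_i)=1$ under our assumption and $\fand_m(x,1)=x$, this collapses to $r_i^I=B_i^I\rar_r a_i$. Thus $F^I$ is the $\fand$-conjunction of the FASP reduct rules.

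Finally, I would invoke the conjunction-equals-one observation a second time, now for $J$: $J$ satisfies $F^I$ iff $\u_J(r_i^I)=1$ for every $i$, i.e. iff $J$ satisfies every rule $B_i^I\rar_r a_i$ of the FASP reduct, which is by definition $J$ satisfying that reduct program. Since both stability conditions quantify over the same interpretations $J$ with $J<^\sigma I$ (the default ${\bf p}=\sigma$ is in force on the fuzzy side because ``relative to ${\bf p}$'' is dropped), the witness conditions coincide, and combined with the satisfaction match this yields the equivalence.

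I do not expect a genuinely hard obstacle; the delicate points are bookkeeping. The one step requiring care is the collapse of $(B_i^I\rar_r a_i)\fand_m \u_I(r_i)$ to $B_i^I\rar_r a_i$: it depends on $\u_I(r_i)=1$, so the non-satisfying case must be disposed of separately (as above) rather than inside the reduct computation. The other point worth stating explicitly is the justification for using the simplified reduct $(F\fand G)^I=F^I\fand G^I$ for the general body conjunction, so that the derivation does not silently assume $\fand$ to be one of the specific operators in Figure~\ref{fig:operators}.
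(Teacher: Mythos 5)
Your proof is correct and takes essentially the same approach as the paper's: both hinge on the fact that a fuzzy conjunction equals $1$ iff every conjunct does (the paper's Lemma on conjunctions), together with the computation showing that each $r_i^I$ is, given $\u_I(r_i)=1$, exactly the corresponding FASP reduct rule with negative literals replaced by constants. The only difference is packaging — the paper runs two separate contradiction arguments (one per direction) passing between $J\models r_i^I$ and $J$ satisfying the FASP reduct rule, whereas you establish the reduct identification once and read off both directions simultaneously — but the key steps coincide.
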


\begin{example}
Let $\Pi$ be the following program
\[  
\ba l
   p \ar \neg q \\
   q \ar \neg p.  
\ea
\]
The answer sets of $\Pi$ according to~\cite{lukasiewicz06fuzzy} are $\{(p, x), (q, 1-x)\}$, where $x$ is any value in $[0, 1]$: 
the corresponding fuzzy formula $F$ is 
$(\neg_{s} q \rar_r\, p) \fand_m (\neg_{s} p \rar_r\, q)$;\ \ 
As we observed in Example~\ref{ex:pq}, its stable models are $\{(p, x), (q, 1-x)\}$, where $x$ is any real number in $[0, 1]$. 
\end{example}

%---------------------------------------------------------------------
\subsection{Relation to Fuzzy Equilibrium Logic} \label{ssec:fuzzy-equil} \optional{ssec:fuzzy-equil}
%---------------------------------------------------------------------

Like the fuzzy stable model semantics introduced in this paper, fuzzy equilibrium logic~\cite{schockaert12fuzzy} generalizes fuzzy ASP programs to arbitrary propositional formulas, but its definition is quite complex as it is based on some complex operations on pairs of intervals and considers strong negation as one of the primitive connectives. Nonetheless, we show that fuzzy equilibrium logic is essentially equivalent to the fuzzy stable model semantics where all atoms are subject to minimization.

%---------------------------------------------------------------------
\subsubsection{Review: Fuzzy Equilibrium Logic}
%---------------------------------------------------------------------

We first review the definition of fuzzy equilibrium logic from~\cite{schockaert12fuzzy}. The syntax is the same as the one we reviewed in Section~\ref{ssec:review-fuzzy} except that a new connective $\sneg\ $ (strong negation) may appear in front of atoms.\footnote{The definition from~\cite{schockaert12fuzzy} allows strong negation in front of any formulas. We restrict its occurrence only in front of atoms as usual in answer set programs.} 
For any fuzzy propositional signature~$\sigma$, a (fuzzy N5) \emph{valuation} is a mapping from $\{h, t\}\times\sigma$ to subintervals of $[0, 1]$ such that  $V(t,a)\subseteq V(h,a)$ for each atom $a\in\sigma$.
For $V(w, a)=[u, v]$, where $w\in\{h, t\}$, we write $V^-(w,a)$ to denote the lower bound~$u$ and $V^+(w,a)$ to denote the upper bound~$v$. The {\em truth value} of a fuzzy formula under $V$ is defined as follows.
\begin{itemize}
\item  $V(w, {c}) = [c,\ c]$ for any numeric constant ${c}$;

\item  $V(w, \sneg a) = [1-V^+(w, a), 1-V^-(w, a)]$, where
       $\sim$ is the symbol for strong negation; 

\item  $V(w, F\fand G) = 
       [V^-(w,F)\fand V^-(w,G),\ \ V^+(w,F)\fand V^+(w,G)]$; \footnote{%
For readability, we write the infix notation $(x\odot y)$ in place of $\odot (x,y)$.}

\item  $V(w, F\for G) = 
       [V^-(w,F)\for V^-(w,G),\ \ V^+(w,F)\for V^+(w,G)]$;

\item  $V(h,\neg F) = [1-V^-(t, F),\ \ 1-V^-(h, F)]$;

\item  $V(t,\neg F) = [1-V^-(t, F),\ \ 1-V^-(t, F)]$;

\item  $V(h, F\rar G) = 
          [min(V^-(h,F)\rar V^-(h,G), V^-(t,F)\rar V^-(t,G)),  \\
~~\hspace{9cm} V^-(h,F)\rar V^+(h, G)]$;

\item $V(t, F\rar G) = [V^-(t,F)\rar V^-(t,G),\ \
           V^-(t,F)\rar V^+(t,G)]$.
\end{itemize}

A valuation $V$ is a (fuzzy N5) model of a formula $F$ if $V^-(h,F)=1$, which implies $V^+(h, F)=V^-(t, F)=V^+(t, F)=1$. For two valuations $V$ and $V'$, we say $V'\preceq V$ if $V'(t, a)=V(t, a)$ and $V(h, a)\subseteq V'(h, a)$ for all atoms $a$.
We say $V'\prec V$ if $V'\preceq V$ and $V'\ne V$.\ \ 
We say that a model $V$ of $F$ is {\em h-minimal} if there is no model $V'$ of $F$ such that $V'\prec V$.
An h-minimal fuzzy N5 model $V$ of $F$ is a \emph{fuzzy equilibrium model} of $F$ if $V(h,a)=V(t,a)$ for all atoms $a$.

%---------------------------------------------------------------------
\subsubsection{In the Absence of Strong Negation}
%---------------------------------------------------------------------
%[[ Valuation as internavls look complicated ]]

We first establish the correspondence between fuzzy stable models and fuzzy equilibrium models in the absence of strong negation. 
As in \cite{schockaert12fuzzy}, we assume that the fuzzy negation
$\neg$ is $\neg_{s}$. 

Notice that a fuzzy equilibrium model assigns an interval of values to each atom, rather than a single value as in fuzzy stable models. This accounts for the complexity of the definition of a fuzzy model. However, it turns out that in the absence of strong negation, all upper bounds assigned by a fuzzy equilibrium model are $1$.

\begin{lemma}\label{thm_ub_1}
Given a formula $F$ containing no strong negation, any equilibrium model $V$ of $F$ satisfies $V^+(h, a)=V^+(t, a)=1$ for all atoms $a$.
\end{lemma}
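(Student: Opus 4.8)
The plan is to argue by contradiction, exploiting the h-minimality built into the definition of an equilibrium model. Suppose $V$ is an equilibrium model of $F$ but $V^+(h,a_0)<1$ for some atom $a_0$. Since $V$ is an equilibrium model we have $V(h,a)=V(t,a)$ for every atom, so it suffices to force all $h$-upper bounds to be $1$; the $t$-upper bounds then follow immediately. I would build a competitor valuation $V'$ that keeps every lower bound and every $t$-interval unchanged but pushes all $h$-upper bounds up to $1$:
\[
 V'(h,a)=[V^-(h,a),\,1], \qquad V'(t,a)=V(t,a).
\]
First I would check that $V'$ is a legal valuation: since $V(t,a)\subseteq V(h,a)$ gives $V^-(h,a)\le V^-(t,a)$, we still have $V'(t,a)\subseteq V'(h,a)$. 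Moreover $V'(t,a)=V(t,a)$ and $V(h,a)\subseteq V'(h,a)$, so $V'\preceq V$, and $V'\ne V$ because $V^+(h,a_0)<1$; hence $V'\prec V$.

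The crux is then to show that $V'$ is still a model of $F$, i.e. that $V'^-(h,F)=1$. I would establish the following invariant by structural induction on any subformula $G$ of $F$ (using that $F$ has no strong negation): the quantities $V^-(h,G)$ and $V^-(t,G)$ are computed using only the atomic lower bounds $V^-(h,a)$ and $V^-(t,a)$, never any upper bound. Inspecting the fuzzy N5 truth-value clauses confirms this: for $\fand$ and $\for$ the lower bound of a compound is the operator applied to the lower bounds of the parts; for $\fneg$ we have $V^-(h,\fneg G)=1-V^-(t,G)$ and $V^-(t,\fneg G)=1-V^-(t,G)$; and for $\frar$ we have $V^-(h,G\frar H)=\min(V^-(h,G)\frar V^-(h,H),\,V^-(t,G)\frar V^-(t,H))$ together with $V^-(t,G\frar H)=V^-(t,G)\frar V^-(t,H)$ — all referencing lower bounds only. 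Since $V$ and $V'$ share all atomic lower bounds, the invariant yields $V'^-(h,F)=V^-(h,F)=1$, so $V'$ is a model of $F$. Combined with $V'\prec V$, this contradicts the h-minimality of $V$, forcing $V^+(h,a)=1$ for every atom, and then $V^+(t,a)=V^+(h,a)=1$ by the equilibrium condition.

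The main obstacle is precisely this induction invariant, and it is exactly where the hypothesis that $F$ contains no strong negation becomes indispensable. Strong negation is the only connective whose lower bound reads off an upper bound, namely $V^-(w,\sneg a)=1-V^+(w,a)$; were it present, raising the upper bounds could lower $V'^-(h,F)$ and destroy the model property, so the construction would collapse. Everything else is routine: case-checking the truth-value clauses for the induction, and the short verification that $V'$ is a valid valuation lying strictly below $V$ in the $\prec$ order.
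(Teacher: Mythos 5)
Your proof is correct and follows essentially the same route as the paper's: argue by contradiction against h-minimality, constructing a competitor valuation that keeps all lower bounds and $t$-intervals fixed while enlarging $h$-intervals, justified by the structural induction showing that (absent strong negation) the lower bound $V^-(w,G)$ of any formula depends only on atomic lower bounds --- which is exactly the paper's auxiliary Lemma on lower-bound-only dependence. The only cosmetic differences are that you push \emph{all} $h$-upper bounds to $1$ at once where the paper raises the upper bound of a single offending atom, and that you inline the induction rather than invoking it as a separate lemma.
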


Therefore, in the absence of strong negation, any equilibrium model can be identified with a fuzzy interpretation as follows. For any valuation $V$, we define a fuzzy interpretation $\I_V$ as 
$\I_V(p) = V^-(h,p)$ for each atom $p\in\sigma$.

The following theorem asserts that there is a 1-1 correspondence between fuzzy equilibrium models and fuzzy stable models. 

\begin{thm}\label{thm:equil-sm-nostrneg}\optional{thm:equil-sm-nostrneg}
Let $F$ be a fuzzy propositional formula of $\sigma$ that contains no
strong negation.
\begin{itemize}
\item[(a)]  A valuation $V$ of~$\sigma$ is a fuzzy equilibrium model
  of $F$ iff $V^-(h,p)=V^-(t,p)$, $V^+(h,p)=V^+(t,p)=1$ for all
  atoms $p$ in $\sigma$ and $\I_V$ is a stable model of $F$ relative to
  $\sigma$. 

\item[(b)] An interpretation $I$ of~$\sigma$ is a stable model of
  $F$ relative to $\sigma$ iff $I = \I_V$ for some fuzzy equilibrium
  model $V$ of $F$. 
\end{itemize}
\end{thm}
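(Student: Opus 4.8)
The plan is to reduce the whole theorem to a single correspondence between the lower-bound semantics of fuzzy N5 valuations and the fuzzy reduct of Definition~\ref{def:fuzzy-reduct}. The central observation is that, in the recursive definition of $V(w,F)$, the \emph{lower} bound $V^-(w,F)$ of every compound formula depends only on the lower bounds of its immediate subformulas and never on their upper bounds: this is immediate for $\fand$, $\for$, and $\neg_s$, and for implication one reads it off from $V^-(h,F\rar G)=\min(V^-(h,F)\rar V^-(h,G),\,V^-(t,F)\rar V^-(t,G))$. Consequently the entire analysis can be carried out while ignoring upper bounds, which is exactly what makes the identification of an equilibrium model with a single-valued interpretation possible.

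Concretely, I would first prove the following Key Lemma by structural induction on $F$. Fix interpretations $I,J$ of $\sigma$ with $J\le I$, and let $V$ be any valuation with $V^-(t,a)=I(a)$ and $V^-(h,a)=J(a)$ for every atom $a$. Then for every strong-negation-free $F$,
\[
  V^-(t,F)=\u_I(F), \qquad V^-(h,F)=\u_J(F^I).
\]
The $t$-equation is just the fuzzy semantics of Section~\ref{ssec:review-fuzzy} read off the lower-bound clauses. For the $h$-equation, the atom and constant cases are definitional; the $\fand$ and $\for$ cases use the simplified reduct $(F\fand G)^I=F^I\fand G^I$ and $(F\for G)^I=F^I\for G^I$ justified after Proposition~\ref{prop:monotone}. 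The negation case matches because $V^-(h,\neg_s F)=1-V^-(t,F)=\u_I(\neg_s F)$ is exactly the constant to which the reduct collapses $\neg_s F$, and this constant is invariant under $\u_J$. The implication case is where the two definitions meet: $V^-(h,F\rar G)=\min\bigl(\u_J(F^I\rar G^I),\,\u_I(F\rar G)\bigr)$ by the inductive hypotheses applied to both worlds, and since $\fand_m=\min$ this equals $\u_J\bigl((F^I\rar G^I)\fand_m\u_I(F\rar G)\bigr)=\u_J((F\rar G)^I)$.

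With the Key Lemma in hand, both parts follow by translating definitions. For part~(a), given an equilibrium model $V$, the equilibrium condition gives $V(h,a)=V(t,a)$ and Lemma~\ref{thm_ub_1} gives upper bounds~$1$, so the stated numeric conditions hold and $V$ is determined by $\I_V$; applying the Key Lemma with $J=I=\I_V$, the $t$-equation gives $\u_{\I_V}(F)=V^-(t,F)=1$, so $\I_V\models F$, and h-minimality of $V$ is equivalent, through the $h$-equation and the bijection $V'\leftrightarrow J$ given by $J(a)=V'^-(h,a)$, to the absence of a witness $J<\I_V$ satisfying $F^{\I_V}$, i.e.\ to stability. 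For the converse I build $V$ from a stable model by setting $V(h,a)=V(t,a)=[I(a),1]$: the valuation constraint $V(t,a)\subseteq V(h,a)$ holds trivially, the Key Lemma with $J=I$ together with Proposition~\ref{prop:reduct-satisfaction} yields $V^-(h,F)=\u_I(F^I)=1$ so $V$ is a model, and any strictly smaller model $V'\prec V$ (necessarily of the form $V'^-(t,a)=I(a)$, $V'^-(h,a)=J(a)$ with $J<I$, upper bounds forced to~$1$) would produce a stability-disputing witness, so $V$ is h-minimal and hence an equilibrium model. Part~(b) is then immediate: left-to-right, build $V$ from $I$ as above and invoke the converse of~(a); right-to-left, invoke the forward direction of~(a).

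The main obstacle is the implication clause of the Key Lemma. One must check that the two-argument $\min$ in the fuzzy N5 ``here'' semantics lines up precisely with the $\fand_m$ appearing in the reduct of an implication, and that its second argument $V^-(t,F)\rar V^-(t,G)$ reproduces $\u_I(F\rar G)$ rather than some here-dependent value. This is exactly why $\fand_m=\min$, and not an arbitrary t-norm, must be used in Definition~\ref{def:fuzzy-reduct}, echoing the discussion following Proposition~\ref{prop:reduct-satisfaction}. A secondary point to verify is that every competitor $V'\prec V$ to an equilibrium model automatically has all upper bounds equal to~$1$, so that it genuinely corresponds to a single-valued witness $J$; this follows from $V(h,a)\subseteq V'(h,a)$ together with Lemma~\ref{thm_ub_1}.
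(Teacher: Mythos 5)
Your proposal is correct and follows essentially the same route as the paper's proof: your Key Lemma is precisely the conjunction of the paper's Lemma~\ref{lem:model-tw} ($V^-(t,F)=\u_I(F)$) and Lemma~\ref{lem:hlb-star} ($V^-(h,F)=\u_J(F^I)$), stated for arbitrary upper bounds so that it also absorbs Lemma~\ref{lem:lb_only}, and the rest of your argument---the order correspondence between competitors $V'\prec V$ and witnesses $J<I$, the appeal to Lemma~\ref{thm_ub_1}, and the translation of h-minimality into stability---mirrors the paper's Lemma~\ref{lessthan_eq} and Lemma~\ref{lem:sm-to-equil}. The only difference is organizational: the paper works throughout with the specific valuations $\V_{J,I}$ (upper bounds fixed at $1$) and handles the lower-bound-only dependence in a separate lemma, whereas you fold that observation into the statement of your single Key Lemma.
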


%---------------------------------------------------------------------
\subsubsection{Allowing Strong Negation} \label{sssec:strneg}
%---------------------------------------------------------------------

In this section, we extend the relationship between fuzzy equilibrium
logic and our stable model semantics by allowing strong negation. 
This is done by simulating strong negation by new atoms in our
semantics. 

Let $\sigma$ denote the signature. For a fuzzy formula $F$ over $\sigma$ that may contain strong negation, define $F^\prime$ over $\sigma \cup \{np \mid p \in \sigma\}$ as the formula obtained from $F$ by replacing every strongly negated atom $\sneg p$ with a new atom $np$. The transformation $nneg(F)$ (``{\sl n}o strong {\sl neg}ation'') is defined as $nneg(F)=F^\prime \otimes_m \underset{p\in \sigma}{\bigotimes_m}\neg_s(p \otimes_l np)$. 

For any valuation $V$ of signature~$\sigma$, we define the valuation $nneg(V)$ of~$\sigma \cup \left\{np \mid p \in \sigma\right\}$ as 
\[ 
\begin{cases}
   nneg(V)(w, p) =[V^-(w, p), 1]  \\ 
   nneg(V)(w, np) = [1-V^+(w, p), 1]
\end{cases}
\]
for all atoms $p\in\sigma$. Clearly, for every valuation
$V$ of $\sigma$, there exists a corresponding interpretation
$\I_{nneg(V)}$ of $\sigma \cup \left\{np \mid p \in \sigma\right\}$. 
On the other hand, there exists an interpretation $I$ of $\sigma \cup \{np \mid p \in \sigma\}$ for which there is no corresponding valuation $V$ of $\sigma$ such that $I=\I_{nneg(V)}$.

\begin{example}
Suppose $\sigma=\{p\}$. For the valuation $V$ such that $V(w,p) = [0.2, 0.7]$, $nneg(V)$ is a valuation of $\{p, np\}$ such that 
\begin{center}
\text{$nneg(V)(w,p) = [0.2, 1]$  and $nneg(V)(w,np) = [0.3, 1]$.} 
\end{center}
Further, $\I_{nneg(V)}$ is an interpretation of $\{p, np\}$ such that 
\begin{center}
\text{$\I_{nneg(V)}(p) = 0.2$ and $\I_{nneg(V)}(np)=0.3$.}
\end{center}
On the other hand, the interpretation $I=\left\{(p, 0.6), (np, 0.8)\right\}$ of~$\sigma\cup \left\{np \mid p \in \sigma\right\}$ has no corresponding valuation $V$ of $\sigma$ such that $I=\I_{nneg(V)}$ because $[0.6, 0.2]$ is not a valid valuation.
\end{example}

The following proposition asserts that strong negation can be eliminated in favor of new atoms, extending the well-known results with the Boolean stable model semantics \cite[Section~8]{ferraris11stable} to fuzzy formulas.

\begin{prop}\label{lem:eqmodel_strong_neg}
For any fuzzy formula $F$ that may contain strong negation, a valuation $V$ is an equilibrium model of $F$ iff $nneg(V)$ is an equilibrium model of $nneg(F)$.
\end{prop}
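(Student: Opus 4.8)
The plan is to reduce the whole equivalence to a single \emph{lower-bound matching lemma} and then read off modelhood, the equilibrium ``totality'' condition, and $h$-minimality as corollaries. Write $G'$ for the result of replacing each $\sneg a$ by $na$ inside a formula $G$. First I would prove by structural induction that for every subformula $G$ of $F$ and every world $w\in\{h,t\}$,
\[
  nneg(V)^-(w, G') = V^-(w, G).
\]
The base cases are immediate: for an ordinary atom or a numeric constant the two sides agree by the definition of $nneg(V)$, and for a strongly negated atom the key computation is $V^-(w,\sneg a)=1-V^+(w,a)=nneg(V)^-(w,na)$, which is exactly why $nneg(V)(w,na)$ was defined to have lower bound $1-V^+(w,a)$. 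The inductive step is routine because every recursion clause for a \emph{lower} bound (for $\fand$, $\for$, $\neg_s$, and $\rar$, at both worlds) refers only to the lower bounds of the immediate subformulas, so the induction hypothesis applies directly. I would then compute that each conjunct $\neg_s(p\fand_l np)$ evaluates under $nneg(V)$ to $[1,1]$ at both worlds (since $nneg(V)(w,p\fand_l np)=[0,1]$), so that $nneg(V)^-(w, nneg(F)) = nneg(V)^-(w, F') = V^-(w,F)$ using $\fand_m=\min$. Since a valuation is a model exactly when its $h$-lower bound of the formula is $1$, this yields the modelhood correspondence: $V\models F$ iff $nneg(V)\models nneg(F)$.

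Next I would dispatch the routine bookkeeping. Checking that $nneg(V)$ is a legal valuation amounts to verifying $nneg(V)(t,a)\subseteq nneg(V)(h,a)$ for $a\in\{p,np\}$, which follows from $V^-(h,p)\le V^-(t,p)$ and $V^+(t,p)\le V^+(h,p)$. The equilibrium ``totality'' condition transfers cleanly in both directions: since $nneg(V)(w,p)=[V^-(w,p),1]$ and $nneg(V)(w,np)=[1-V^+(w,p),1]$, the equalities $nneg(V)(h,p)=nneg(V)(t,p)$ and $nneg(V)(h,np)=nneg(V)(t,np)$ together encode exactly $V^-(h,p)=V^-(t,p)$ and $V^+(h,p)=V^+(t,p)$, i.e.\ $V(h,p)=V(t,p)$.

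The real content is the $h$-minimality correspondence. One direction is easy: if $V'\prec V$ is a model of $F$, then $nneg(V')$ is a model of $nneg(F)$ by the modelhood correspondence, and a direct comparison of lower and upper bounds shows $nneg(V')\prec nneg(V)$ (the $t$-parts coincide, the $h$-intervals widen), using that $nneg$ is injective; hence $nneg(V)$ is not $h$-minimal. For the converse I would take any model $W\prec nneg(V)$ of $nneg(F)$ and argue that $W$ must lie in the image of $nneg$. The crucial observations are that (i) $W$ inherits all upper bounds equal to $1$, forced by $W\prec nneg(V)$ since every interval of $nneg(V)$ already has upper bound $1$; and (ii) because $W$ satisfies each constraint conjunct (giving $W^-(t,p)+W^-(t,np)\le 1$) and valuations are $h$-$t$ monotone ($W^-(h,\cdot)\le W^-(t,\cdot)$), one gets $W^-(w,p)+W^-(w,np)\le 1$ at both worlds. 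Together (i) and (ii) let me define a \emph{legitimate} valuation $V'$ of $\sigma$ by $V'^-(w,p)=W^-(w,p)$ and $V'^+(w,p)=1-W^-(w,np)$ and verify $nneg(V')=W$. Then $V'$ is a model of $F$ (modelhood correspondence) and reversing the bound comparison gives $V'\prec V$, so $V$ is not $h$-minimal.

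The main obstacle is exactly this converse: an arbitrary competitor $W\prec nneg(V)$ need not a priori be of the form $nneg(V')$ (the example with $I=\{(p,0.6),(np,0.8)\}$ shows that valuations of $\sigma\cup\{np\}$ outside the image of $nneg$ do exist), so the argument hinges on showing that the ordering constraint together with the inertia-style constraint conjuncts $\neg_s(p\fand_l np)$ pin $W$ down to the image. Once the lemma and these three correspondences are in place, combining them gives the equivalence: $V$ is an $h$-minimal model of $F$ with $V(h,a)=V(t,a)$ for all $a$ iff $nneg(V)$ is an $h$-minimal model of $nneg(F)$ with $nneg(V)(h,a)=nneg(V)(t,a)$ for all $a$, which is the claim.
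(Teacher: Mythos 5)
Your proposal is correct and follows essentially the same route as the paper: the same lower-bound matching lemma $nneg(V)^-(w,nneg(F))=V^-(w,F)$, the same transfer of the totality and precedence ($\prec$) conditions, and the same key construction of a pre-image valuation $V'(w,p)=[W^-(w,p),\,1-W^-(w,np)]$ for a competitor $W\prec nneg(V)$, with the constraint conjuncts $\neg_s(p\fand_l np)$ guaranteeing its legitimacy. If anything, you are slightly more explicit than the paper on why the competitor lies in the image of $nneg$ (upper bounds forced to $1$ by $W\prec nneg(V)$, which the paper dismisses as ``clear''), so the argument stands as written.
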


Proposition~\ref{lem:eqmodel_strong_neg} allows us to extend the 1-1 correspondence between fuzzy equilibrium models and fuzzy stable models in Theorem~\ref{thm:equil-sm-nostrneg} to any formula that contains strong negation.

\begin{thm}\label{thm:equil-sm}\optional{thm:equil-sm}
For any fuzzy formula $F$ of signature $\sigma$ that may contain strong negation, 
\begin{itemize}
\item[(a)]  A valuation $V$ of $\sigma$ is a fuzzy equilibrium model
  of $F$ iff $V(h,p)=V(t,p)$ for all atoms $p$ in~$\sigma$ and $\I_{nneg(V)}$ is a
  stable model of $\i{nneg}(F)$ relative to $\sigma\cup\{np \mid
  p\in\sigma\}$.

\item[(b)] An interpretation $I$ of $\sigma\cup\{np \mid p\in\sigma\}$ is
  a stable model of $\i{nneg}(F)$ relative to $\sigma\cup\{np \mid
  p\in\sigma\}$ iff $I=\I_{nneg(V)}$ for some fuzzy equilibrium model $V$ of
  $F$. 
\end{itemize}
\end{thm}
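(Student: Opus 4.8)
The plan is to reduce the statement with strong negation to the strong-negation-free case already settled in Theorem~\ref{thm:equil-sm-nostrneg}, using Proposition~\ref{lem:eqmodel_strong_neg} as the bridge. The key observation is that the transformed formula $\i{nneg}(F)$ contains no strong negation at all: every occurrence $\sneg p$ has been replaced by a fresh atom $np$, and the added conjuncts $\neg_s(p\fand_l np)$ use only ordinary fuzzy negation. Hence Theorem~\ref{thm:equil-sm-nostrneg}, Lemma~\ref{thm_ub_1}, and the whole stable-model/equilibrium dictionary apply verbatim to $\i{nneg}(F)$ over the extended signature $\sigma^+ = \sigma\cup\{np\mid p\in\sigma\}$.

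First I would set up the translation dictionary between a valuation $V$ of $\sigma$ and its image $nneg(V)$ over $\sigma^+$. From $nneg(V)(w,p)=[V^-(w,p),1]$ and $nneg(V)(w,np)=[1-V^+(w,p),1]$ one reads off that $nneg(V)^+(w,r)=1$ for every $r\in\sigma^+$ automatically, and that $nneg(V)^-(h,r)=nneg(V)^-(t,r)$ for all $r\in\sigma^+$ holds if and only if $V^-(h,p)=V^-(t,p)$ and $V^+(h,p)=V^+(t,p)$ for all $p$, i.e.\ iff $V(h,p)=V(t,p)$ for all $p\in\sigma$. I would also record that $nneg(V)$ is always a legitimate valuation (the inclusions $nneg(V)(t,r)\subseteq nneg(V)(h,r)$ follow from $V(t,p)\subseteq V(h,p)$), so that $\I_{nneg(V)}$ is well defined, sending $p\mapsto V^-(h,p)$ and $np\mapsto 1-V^+(h,p)$.

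With this dictionary, part (a) is a direct chaining, and it never needs to range over all valuations of $\sigma^+$: $V$ is a fuzzy equilibrium model of $F$ iff (Proposition~\ref{lem:eqmodel_strong_neg}) $nneg(V)$ is a fuzzy equilibrium model of $nneg(F)$ iff (Theorem~\ref{thm:equil-sm-nostrneg}(a), applied to the strong-negation-free $nneg(F)$ over $\sigma^+$) the lower bounds of $nneg(V)$ agree at $h$ and $t$, its upper bounds are all $1$, and $\I_{nneg(V)}$ is a stable model of $nneg(F)$ relative to $\sigma^+$; and by the dictionary the first two conditions collapse to $V(h,p)=V(t,p)$ for all $p\in\sigma$, which is exactly the right-hand side of (a).

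For part (b), the backward direction is again immediate from Proposition~\ref{lem:eqmodel_strong_neg} together with Theorem~\ref{thm:equil-sm-nostrneg}(b). The forward direction is the heart of the argument and the step I expect to be the main obstacle: given a stable model $I$ of $nneg(F)$ relative to $\sigma^+$, Theorem~\ref{thm:equil-sm-nostrneg}(b) produces an equilibrium model $W$ of $nneg(F)$ with $\I_W=I$, but I must show that every such $W$ is of the form $nneg(V)$ for a genuine valuation $V$ of $\sigma$. Since $nneg(F)$ has no strong negation, Lemma~\ref{thm_ub_1} forces $W^+(h,r)=W^+(t,r)=1$ for all $r\in\sigma^+$, so $W(w,r)=[W^-(w,r),1]$, and I can define $V(w,p)=[W^-(w,p),\,1-W^-(w,np)]$; by construction $nneg(V)=W$. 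The only thing to verify is that $V$ is a legal valuation, i.e.\ $W^-(w,p)\le 1-W^-(w,np)$. This is precisely where the extra conjuncts $\neg_s(p\fand_l np)$ earn their keep: evaluating $W^-(h,\neg_s(p\fand_l np))=1-(W^-(t,p)\fand_l W^-(t,np))$ and requiring it to equal $1$ forces $W^-(t,p)+W^-(t,np)\le 1$, and since $W$ is an equilibrium model its $h$- and $t$-lower bounds coincide, giving $W^-(w,p)+W^-(w,np)\le1$ as needed. Hence $V$ is a valuation with $V(h,p)=V(t,p)$, Proposition~\ref{lem:eqmodel_strong_neg} then yields that $V$ is an equilibrium model of $F$ from $W=nneg(V)$ being an equilibrium model of $nneg(F)$, and $I=\I_W=\I_{nneg(V)}$, completing (b).
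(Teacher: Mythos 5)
Your proof is correct and takes essentially the same approach as the paper: both parts are reduced to the strong-negation-free case by chaining Proposition~\ref{lem:eqmodel_strong_neg} with Theorem~\ref{thm:equil-sm-nostrneg}, and the forward direction of (b) hinges, exactly as in the paper, on the conjuncts $\neg_s(p\fand_l np)$ in $nneg(F)$ guaranteeing that the lower bounds can be reassembled into legitimate intervals. The only difference is mechanical: in (b) the paper defines $V(w,p)=[\u_I(p),\,1-\u_I(np)]$ directly from the stable model $I$ and then applies Theorem~\ref{thm:equil-sm-nostrneg}, whereas you first extract an equilibrium model $W$ of $nneg(F)$ with $\I_W=I$ via Theorem~\ref{thm:equil-sm-nostrneg}(b) and Lemma~\ref{thm_ub_1} and then read $V$ off $W$ --- an equivalent detour.
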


\begin{example}
For fuzzy formula 
$F=({0.2} \rar_r\, p)\fand_m ({0.3} \rar_r\, \sneg p)$, 
formula $nneg(F)$ is
\[ ({0.2} \rar_r\, p)\fand_m
   ({0.3} \rar_r\, np) \fand_m
  \neg_s(p \fand_l np).
\]
One can check that the valuation $V$ defined as $V(w, p)=[0.2, 0.7]$ is the only equilibrium model of $F$, and the interpretation $\I_{nneg(V)}=\{(p, 0.2), (np, 0.3)\}$ is the only fuzzy stable model of $nneg(F)$.
\end{example}

This idea of eliminating strong negation in favor of new atoms was used in Examples~\ref{ex:default} and \ref{ex:trust}.

The correspondence between fuzzy equilibrium models and fuzzy stable models indicates that the complexity analysis for fuzzy equilibrium logic applies to fuzzy stable models as well. In~\cite{schockaert12fuzzy} it is shown that deciding whether a formula has a fuzzy equilibrium model is $\Sigma_2^P$-hard, which applies to the fuzzy stable model semantics as well. The same problem for a normal FASP programs, which can be identified with a special case of the fuzzy stable model semantics as shown in Section~\ref{ssec:normal-fasp}, is NP-hard.  Complexity analyses for other special cases based on restrictions on fuzzy operators, or the presence of cycles in a program have been studied in~\cite{blondeel14complexity}.

%---------------------------------------------------------------------
\section{Some Properties of Fuzzy Stable Models} \label{sec:properties} 
   \optional{sec:properties}
%---------------------------------------------------------------------

In this section, we show that several well-known properties of the Boolean stable model semantics can be naturally extended to the fuzzy stable model
semantics.

%---------------------------------------------------------------------
\subsection{Theorem on Constraints}
%---------------------------------------------------------------------

In answer set programming, constraints---rules with $\bot$ in the head---play an important role in view of the fact that adding a constraint eliminates the stable models that ``violate'' the constraint. 
The following theorem is the counterpart of Theorem~3 from~\cite{ferraris11stable} for fuzzy propositional formulas. 

\begin{thm}\label{thm:constraint}\optional{thm:constraint}
For any fuzzy formulas $F$ and $G$, $I$ is a stable model of \hbox{$F\fand\neg G$} (relative to ${\bf p}$) if and only if $I$ is a stable model of $F$ (relative to ${\bf p}$) and $I\models\neg G$.
\end{thm}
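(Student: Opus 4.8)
The plan is to reduce the whole statement to a single computation of the reduct of a conjunction, combined with an elementary ``separation'' property of fuzzy conjunctions. First I would compute $(F\fand\neg G)^I$. Since every fuzzy conjunction is increasing in both arguments, the simplified reduct clause applies, and since the reduct of a negation is a numeric constant by Definition~\ref{def:fuzzy-reduct}, we obtain
\[
   (F\fand\neg G)^I \;=\; F^I\fand (\neg G)^I \;=\; F^I\fand \u_I(\neg G).
\]

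Next I would isolate the two facts about an arbitrary fuzzy conjunction $\fand$ that drive the argument. Because $\fand$ is commutative and satisfies $\fand(1,x)=x$, we have $\fand(x,1)=x$ for all $x$; hence whenever $\u_I(\neg G)=1$ the reduct above is equivalent to $F^I$. Moreover, $\fand(a,b)=1$ holds iff $a=b=1$: since $\fand$ is increasing in its first argument and $a\le 1$, we get $\fand(a,b)\le\fand(1,b)=b$, so $\fand(a,b)=1$ forces $b=1$, and symmetrically $a=1$ by commutativity; the converse is $\fand(1,1)=1$. Applied to $\u_I(F\fand\neg G)=\fand(\u_I(F),\u_I(\neg G))$, this shows that $I\models F\fand\neg G$ iff $I\models F$ and $I\models\neg G$.

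With these facts in hand both directions are short. For the ``if'' direction, assume $I$ is a stable model of $F$ relative to ${\bf p}$ and $I\models\neg G$. Then $\u_I(F)=\u_I(\neg G)=1$, so $I\models F\fand\neg G$; and since $\u_I(\neg G)=1$ the reduct $(F\fand\neg G)^I$ is equivalent to $F^I$, so any witness $J<^{\bf p} I$ disputing the stability of $I$ for $F\fand\neg G$ would equally satisfy $F^I$ and dispute its stability for $F$, contradicting the hypothesis. For the ``only if'' direction, assume $I$ is a stable model of $F\fand\neg G$ relative to ${\bf p}$. From $I\models F\fand\neg G$ the separation property yields $I\models F$ and $I\models\neg G$; using $\u_I(\neg G)=1$ again, $(F\fand\neg G)^I\Lrar F^I$, so the absence of a witness for $F\fand\neg G$ immediately gives the absence of a witness for $F$, and $I$ is a stable model of $F$.

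I expect no serious obstacle: the only slightly delicate point is justifying the equivalence $(F\fand\neg G)^I\Lrar F^I$ for a completely unspecified fuzzy conjunction $\fand$, which is exactly where the identity $\fand(x,1)=x$ and the separation property $\fand(a,b)=1\Leftrightarrow a=b=1$ are used. Both follow solely from monotonicity, commutativity, and the boundary condition $\fand(1,x)=x$, so the argument is robust to the choice of $\fand$, $\for$, $\neg$, and $\rar$, and faithfully extends Theorem~3 of~\cite{ferraris11stable} to the many-valued setting.
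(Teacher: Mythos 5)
Your proof is correct and takes essentially the same approach as the paper's: your ``separation property'' $\fand(a,b)=1 \Leftrightarrow a=b=1$ is exactly the paper's key lemma on fuzzy conjunctions, and both arguments then compute $(F\fand\neg G)^I = F^I\fand \u_I(\neg G)$ and transfer witnesses between $F$ and $F\fand\neg G$ using $\u_I(\neg G)=1$. The only cosmetic difference is that you package the witness transfer as a single equivalence $(F\fand\neg G)^I\Lrar F^I$ via $\fand(x,1)=x$, whereas the paper argues the two directions of the transfer separately.
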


\begin{example}
Consider $F=(\neg_{s} p \rar_r\, q)\fand_m(\neg_{s} q \rar_r\, p)\fand_m \neg_{s} p$. Formula~$F$ has only one stable model $I = \{(p, 0), (q, 1)\}$, which is the only stable model of \\
\hbox{$(\neg_{s} p \rar_r\, q)\fand_m(\neg_{s} q \rar_r\, p)$} that satisfies $\neg_{s} p$.
\end{example}

\BOCC
If we consider a more general $y$-stable model, then only one direction holds.

\begin{thm}\label{thm:constraint2} \optional{thm:constraint2}
For any fuzzy formulas $F$ and $G$, if $I$ is a $y$-stable model of $F\fand\neg G$ (relative to ${\bf p}$), then $I$ is a $y$-stable model of $F$ (relative to ${\bf p}$) and $I\models_y\neg G$.
\end{thm}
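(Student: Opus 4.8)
The plan is to unfold the reduct of $F\fand\neg G$ and read the two desired conclusions off it separately. Since $\neg G$ is a negation, Definition~\ref{def:fuzzy-reduct} makes its reduct the numeric constant $(\neg G)^I=\u_I(\neg G)$; writing $n=\u_I(\neg G)$ and using the simplified reduct clause for the increasing connective $\fand$, we get $(F\fand\neg G)^I = F^I\fand n$. Hence the hypothesis that $I$ is a $y$-stable model of $F\fand\neg G$ relative to ${\bf p}$ says exactly: (i) $\fand(\u_I(F),n)\ge y$, and (ii) there is no $J<^{\bf p} I$ with $\fand(\u_J(F^I),n)\ge y$.

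First I would extract the two easy halves. Every fuzzy conjunction satisfies $\fand(a,b)\le\fand(1,b)=b$, and by commutativity $\fand(a,b)\le a$ as well, so $\fand(a,b)\le\min(a,b)$. Applying this to (i) gives simultaneously $\u_I(F)\ge y$ and $n=\u_I(\neg G)\ge y$. The second inequality is precisely $I\models_y\neg G$, and the first is the satisfaction half of ``$I$ is a $y$-stable model of $F$.'' So both claimed conclusions are already half done, and only the stability of $I$ for $F$ remains.

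For the stability half I would argue by contraposition against (ii): if some $J<^{\bf p} I$ had $\u_J(F^I)\ge y$, I want to combine this with $n\ge y$ to conclude $\fand(\u_J(F^I),n)\ge y$, contradicting (ii) and so ruling out any such $J$. The hard part is exactly this last inference, which asks that $a\ge y$ and $b\ge y$ force $\fand(a,b)\ge y$. This reverse property fails for a general t-norm: under $\fand_l$ one has $\fand_l(y,y)=\max(2y-1,0)<y$ for $y<1$, and indeed the statement collapses for arbitrary $\fand$ (with $F=p$, $G=0.5$, $y=0.5$ and $I=\{(p,1)\}$, the interpretation $I$ is a $0.5$-stable model of $p\fand_l\neg_s 0.5$, yet $J=\{(p,0.5)\}$ witnesses that it is not a $0.5$-stable model of $p$). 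The inference does hold for the idempotent G\"odel conjunction, where $\min(a,b)\ge y$ iff $a\ge y$ and $b\ge y$, and $\fand_m$ is the very conjunction used to attach constraints elsewhere in the paper. I would therefore carry out the argument for $\fand=\fand_m$ (more generally, any idempotent t-norm). It is worth noting that the $y=1$ case in Theorem~\ref{thm:constraint} sidesteps this obstacle entirely, because there $n=1$ forces $\fand(a,1)=a$, making the reverse inference trivial for every t-norm.
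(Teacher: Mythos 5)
Your proposal follows the paper's own argument plan, but with one crucial difference: you caught a flaw that the paper's proof glosses over, and the flaw is real. The paper's proof of this statement (present only in the source) proceeds exactly as you do --- unfold the reduct as $F^I\fand(\neg G)^I$ with $(\neg G)^I=\u_I(\neg G)$, extract $I\models_y F$ and $I\models_y\neg G$ from the bound $\fand(a,b)\le\min(a,b)$ (its Lemma~\ref{lem:conjunction_le}), then rule out a witness $J<^{\bf p}I$ with $J\models_y F^I$ by claiming that such a $J$ would also satisfy $J\models_y F^I\fand(\neg G)^I$. That last claim is simply asserted (``it can be seen that \dots\ since otherwise this $J$ must satisfy \dots''), and it is precisely the reverse inference you isolate: from $\u_J(F^I)\ge y$ and $\u_I(\neg G)\ge y$ conclude $\fand\bigl(\u_J(F^I),\u_I(\neg G)\bigr)\ge y$. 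As you show, this fails for non-idempotent conjunctions, and your counterexample ($F=p$, $G=0.5$, $y=0.5$, $I=\{(p,1)\}$, $\fand=\fand_l$) correctly refutes the statement as written: the paper's conventions allow $\fand$ to be any fuzzy conjunction, and the example accompanying the statement in the source even instantiates $\fand$ as $\fand_l$.

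Two facts corroborate your diagnosis. First, in the source both the statement and its proof are wrapped in the authors' comment-out macros, so neither appears in the compiled paper, and the proof cites a lemma label that is defined nowhere in this version; evidently the authors withdrew the claim and kept only the $y=1$ case, Theorem~\ref{thm:constraint}, which holds for arbitrary $\fand$ for exactly the reason you give at the end (there Lemma~\ref{cor_tnorm_conjunction} forces $\u_I(\neg G)=1$, and $\fand(x,1)=x$ makes the contested step trivial). Second, your repaired statement for $\fand=\fand_m$, where $\min(a,b)\ge y$ iff $a\ge y$ and $b\ge y$, is correct, and your proof of it is complete; note only that idempotence together with the paper's axioms for a fuzzy conjunction already forces $\fand=\fand_m$, so the parenthetical ``any idempotent t-norm'' adds no generality. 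The single microscopic slip is the remark that $\fand_l(y,y)<y$ for all $y<1$: at $y=0$ equality holds, though this affects nothing since your counterexample uses $y=0.5$.
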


\begin{example}
The other direction, that is, ``if $I$ is a $y$-stable model of $F$ and 
$I \models_y\neg G$, then $I$ is a $y$-stable model of $F\fand \neg
G$,'' does not hold in general. For example, consider $F=G=p$ and
$\fand$ to be $\fand_l$, and 
interpretation $I=\{(p,0.4)\}$. 
Clearly, $I$ is a $0.4$-stable model of $p$ and $I\models_{0.4} \neg
p$, but $I$ is not a $0.4$-stable model of $p\fand_l \neg p$. In fact, $I$ is not even a $0.4$-model of the formula.
\end{example}
\EOCC

%---------------------------------------------------------------------
\subsection{Theorem on Choice Formulas} 
%---------------------------------------------------------------------

In the Boolean stable model semantics, formulas of the form $p\lor\neg p$ are called {\em choice formulas}, and adding them to the program makes atoms $p$ exempt from minimization. Choice formulas have been shown to be useful in constructing an ASP program in the ``Generate-and-Test'' style. This section shows their counterpart in the fuzzy stable model semantics. 

For any finite set of fuzzy atoms ${\bf p}=\{p_1,\dots, p_n\}$, the expression ${\bf p}^{\rm ch}$ stands for the choice formula 
\[ 
   (p_1\for_l\neg_{s} p_1)\fand\dots\fand(p_n\for_l\neg_s p_n), 
\] 
where $\fand$ is any fuzzy conjunction.

The following proposition tells us that choice formulas are tautological. 

\begin{prop}\label{lem-choice-tautology}\optional{lem-choice-tautology}
For any fuzzy interpretation $I$ and any finite set ${\bf p}$ of fuzzy atoms, \hbox{$I\models {\bf p}^{\rm ch}$}.\footnote{This proposition may not hold if $\for_l$ in the choice formula is replaced by an arbitrary fuzzy disjunction. For example, consider using $\for_m$ instead. Clearly, the interpretation $I = \left\{(p, 0.5)\right\} \not\models p\for_m \neg_s p$.}
\end{prop}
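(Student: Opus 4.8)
The plan is to compute the truth value $\u_I({\bf p}^{\rm ch})$ directly and show that it equals $1$ for every fuzzy interpretation $I$. The computation splits naturally into two independent observations: first, that each individual conjunct $p_i\for_l\neg_s p_i$ evaluates to $1$ under $I$; second, that any fuzzy conjunction of truth values that are all equal to $1$ is again $1$.

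For the first observation, I would simply expand the truth value using the definitions of $\for_l$ and $\neg_s$ in Figure~\ref{fig:operators}. Since $\neg_s(x)=1-x$ and $\for_l(x,y)=\min(x+y,1)$, we get
\[
  \u_I(p_i\for_l\neg_s p_i)=\min\bigl(I(p_i)+(1-I(p_i)),\,1\bigr)=\min(1,1)=1 .
\]
This is the step where the specific choice of the \L ukasiewicz t-conorm $\for_l$ (rather than an arbitrary fuzzy disjunction) is essential, exactly as the footnote to the statement already illustrates with $\for_m$.

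For the second observation, I would argue by induction on $n$ using only the defining properties of a fuzzy conjunction recalled in Section~\ref{ssec:review-fuzzy}, namely $\fand(1,x)=x$ together with commutativity and associativity. Writing $a_i=\u_I(p_i\for_l\neg_s p_i)=1$, the base case $n=1$ is the first observation, and in the inductive step the first $n-1$ conjuncts already evaluate to $1$ by hypothesis, so the full value is $\fand(1,a_n)=\fand(1,1)=1$. Crucially, this holds no matter which fuzzy conjunction $\fand$ is used, which is precisely the generality asserted in the statement.

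The argument involves no genuine obstacle. The only point requiring care is to keep the two roles of the operators distinct: the disjunction must be $\for_l$ in order for each conjunct $p_i\for_l\neg_s p_i$ to reach the value $1$, whereas the outer conjunction $\fand$ may be left arbitrary precisely because the identity law $\fand(1,x)=x$ forces any fuzzy conjunction of $1$'s to equal $1$.
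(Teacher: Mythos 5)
Your proposal is correct and follows essentially the same route as the paper's own proof: expand each conjunct via $\neg_s(x)=1-x$ and $\for_l(x,y)=\min(x+y,1)$ to see it evaluates to $1$, and then conclude that an arbitrary fuzzy conjunction of $1$'s is $1$. The only difference is that you spell out the final step by induction using $\fand(1,x)=x$, a point the paper leaves implicit; this is a harmless (indeed slightly more careful) elaboration, not a different argument.
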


Theorem~\ref{thm:choice} is an extension of Theorem~2
from~\cite{ferraris11stable}. 

\begin{thm} \label{thm:choice}
\bi
\item[(a)] For any real number $y\in [0,1]$, if $I$ is a $y$-stable model of $F$ relative to ${\bf p}\cup {\bf q}$, then $I$ is a $y$-stable model of $F$ relative to
  ${\bf p}$.
\item[(b)] $I$ is a $1$-stable model of $F$ relative to ${\bf p}$ iff 
   $I$ is a $1$-stable model of $F\fand {\bf q}^{\rm ch}$ relative
   to ${\bf p}\cup {\bf q}$.
\ei
\end{thm}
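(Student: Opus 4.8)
The plan is to lean on two structural facts throughout. First, the reduct $F^I$ is defined purely from $F$ and $I$ and does not mention the minimization set, so the \emph{same} reduct governs stability relative to ${\bf p}$ and relative to ${\bf p}\cup{\bf q}$; only the comparison relation $<$ changes. Second, every fuzzy conjunction $\fand$ satisfies $\fand(a,b)\le\min(a,b)$ and $\fand(1,1)=1$, so for any $J$ we have $\u_J(A\fand B)=1$ if and only if $\u_J(A)=\u_J(B)=1$. I will work throughout with the simplified reduct (Proposition~\ref{prop:monotone}), so that $(F\fand {\bf q}^{\rm ch})^I = F^I\fand ({\bf q}^{\rm ch})^I$.

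For part (a), I would first note that $J<^{\bf p} I$ implies $J<^{{\bf p}\cup{\bf q}} I$: an atom outside ${\bf p}\cup{\bf q}$ is in particular outside ${\bf p}$, so $J$ and $I$ agree there, while an atom in ${\bf q}\setminus{\bf p}$ is also outside ${\bf p}$, giving $J=I\le I$. Hence every witness disputing the stability of $I$ for $F$ relative to ${\bf p}$ is also a witness relative to ${\bf p}\cup{\bf q}$; since $F^I$ is identical in both settings, the absence of witnesses for the larger set forces their absence for the smaller one. The clause $I\models_y F$ is literally the same in both statements, so (a) follows for every $y$.

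For part (b) the crux is the reduct of the choice formula. Using the simplified reduct, $(q_i\for_l\neg_s q_i)^I \Lrar q_i\for_l (1-I(q_i))$, whose value under $J$ is $\min(J(q_i)+1-I(q_i),1)$, which equals $1$ exactly when $J(q_i)\ge I(q_i)$; conjoining over $i$ gives that $\u_J(({\bf q}^{\rm ch})^I)=1$ iff $J(q_i)\ge I(q_i)$ for every $q_i\in{\bf q}$. In the forward direction, a purported witness $J<^{{\bf p}\cup{\bf q}} I$ for $F\fand{\bf q}^{\rm ch}$ must, by the value-$1$ decomposition, give $\u_J(F^I)=1$ together with $J(q_i)\ge I(q_i)$; combined with $J(q_i)\le I(q_i)$ from $J\le^{{\bf p}\cup{\bf q}}I$ this pins $J(q_i)=I(q_i)$ on all of ${\bf q}$, so $J$ satisfies $J<^{\bf p} I$ and disputes the stability of $I$ for $F$ relative to ${\bf p}$, a contradiction. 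Satisfaction transfers because ${\bf q}^{\rm ch}$ is a tautology (Proposition~\ref{lem-choice-tautology}) and $\fand(1,1)=1$. The backward direction reverses this: a witness $J<^{\bf p} I$ for $F$ already has $J(q_i)=I(q_i)$ for $q_i\in{\bf q}$ (these atoms lie outside ${\bf p}$ under the intended reading that ${\bf q}$ is disjoint from ${\bf p}$), whence $\u_J(({\bf q}^{\rm ch})^I)=1$ and $\u_J((F\fand{\bf q}^{\rm ch})^I)=1$, so $J$ witnesses against $F\fand{\bf q}^{\rm ch}$ relative to ${\bf p}\cup{\bf q}$.

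I expect the main obstacle to be the backward direction's reliance on ${\bf q}$ being disjoint from ${\bf p}$: the cancellation in which ``$J(q_i)\ge I(q_i)$ from the choice reduct'' meets ``$J(q_i)\le I(q_i)$ from minimization'' to force equality works cleanly only when the choice atoms are not themselves already minimized, so I would make that disjointness explicit (it is implicit in reading ${\bf q}$ as fresh choice atoms). A secondary point needing care is justifying the value-$1$ decomposition of conjunctions uniformly for the \emph{arbitrary} fuzzy conjunction permitted in ${\bf q}^{\rm ch}$ and in $F\fand{\bf q}^{\rm ch}$; this rests only on $\fand\le\min$ and $\fand(1,1)=1$ and hence holds for every t-norm.
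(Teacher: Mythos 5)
Your proof is correct and takes essentially the same approach as the paper's: part (a) by observing that a witness relative to ${\bf p}$ is also a witness relative to ${\bf p}\cup{\bf q}$ for the very same reduct, and part (b) by the value-$1$ decomposition of fuzzy conjunctions together with the computation that $J$ satisfies $({\bf q}^{\rm ch})^I$ exactly when $J(q_i)\ge I(q_i)$ for all $q_i\in{\bf q}$, which pins $J$ to agree with $I$ on ${\bf q}$. Your explicit disjointness caveat is well taken: the paper's own proof of the backward direction of (b) silently relies on it in the step ``since $J$ and $I$ agree on ${\bf q}$,'' and the equivalence genuinely fails when ${\bf p}$ and ${\bf q}$ overlap (e.g., for ${\bf p}={\bf q}=\{q\}$ and $F=\neg_s\neg_s q$, the interpretation assigning $1$ to $q$ is a stable model of $F\fand_m\{q\}^{\rm ch}$ relative to $\{q\}$ but not of $F$).
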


Theorem~\ref{thm:choice}~(b) does not hold for an arbitrary threshold $y$ (i.e., if ``$1-$'' is replaced with ``$y-$''). For example, consider $F=\neg_s\neg_s q$ and $I = \{(q, 0.5)\}$. Clearly, $I$ is a $0.5$-model of $F$, and thus $I$ is a $0.5$-stable model of $F$ relative to $\emptyset$. However, $I$ is not a $0.5$-stable model of $F\fand_m \{q\}^{\rm ch}=\neg_s \neg_s q\fand_m (q\for_l \neg_s q)$ relative to $\emptyset \cup \{q\}$, as witnessed by $J =\{(q, 0)\}$.

Since the $1$-stable models of $F$ relative to $\emptyset$ are the models of $F$, it follows from Theorem~\ref{thm:choice} (b)  that the {\em $1$-stable models} of $F\fand\sigma^{\rm ch}$ relative to the whole signature $\sigma$ are exactly the {\em $1$-models} of $F$.

\begin{cor}\label{cor:choice}
Let $F$ be a fuzzy formula of a finite signature $\sigma$.
$I$ is a model of $F$ iff $I$ is a stable model of~$F\fand \sigma^{\rm ch}$ relative to~$\sigma$.
\end{cor}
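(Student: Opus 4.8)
The plan is to obtain this as an immediate instantiation of Theorem~\ref{thm:choice}~(b), once we record the two trivial facts that (i) ``stable model'' means ``$1$-stable model'' in the sense of Definition~\ref{def:fuzzy-sm}, and (ii) $1$-stable models relative to the empty set coincide with ordinary models. Fact (ii) is exactly the remark made right after Definition~\ref{def:fuzzy-sm}: when the set of minimized atoms is empty there is no interpretation $J$ with $J<^\emptyset I$, so the stability clause is vacuous and $I$ is a $1$-stable model of $F$ relative to $\emptyset$ precisely when $I\models F$, i.e.\ precisely when $I$ is a model of $F$.

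Next I would apply Theorem~\ref{thm:choice}~(b) with ${\bf p}=\emptyset$ and ${\bf q}=\sigma$. This is legitimate because $\sigma$ is assumed finite, so the choice formula $\sigma^{\rm ch}$ is a well-formed fuzzy formula. The theorem then yields that $I$ is a $1$-stable model of $F$ relative to $\emptyset$ iff $I$ is a $1$-stable model of $F\fand\sigma^{\rm ch}$ relative to $\emptyset\cup\sigma=\sigma$. Chaining this with fact (ii) gives the desired biconditional: $I$ is a model of $F$ iff $I$ is a stable model of $F\fand\sigma^{\rm ch}$ relative to $\sigma$.

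I expect no real obstacle here, since the corollary is a direct specialization of Theorem~\ref{thm:choice}~(b); the whole argument is the substitution ${\bf p}=\emptyset,\ {\bf q}=\sigma$ together with the vacuity of minimization over the empty set. The only points worth stating explicitly are the finiteness of $\sigma$ (needed so that $\sigma^{\rm ch}$ is defined) and the identification of ``stable model'' with ``$1$-stable model,'' both of which are already in place from the preceding development.
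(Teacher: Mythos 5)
Your proposal is correct and is exactly the paper's own argument: the paper derives Corollary~\ref{cor:choice} by instantiating Theorem~\ref{thm:choice}~(b) with ${\bf p}=\emptyset$ and ${\bf q}=\sigma$, combined with the observation (made right after Definition~\ref{def:fuzzy-sm}) that $1$-stable models relative to $\emptyset$ are precisely the models, since no interpretation $J$ satisfies $J<^\emptyset I$. Your explicit remarks about finiteness of $\sigma$ and the identification of ``stable model'' with ``$1$-stable model'' are fine but add nothing beyond what the paper already assumes.
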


\begin{example}
Consider the fuzzy formula $F=\neg_{s} q\rar_r\, p$ in Example~\ref{ex:pq}, which has only one stable model $\{(p,1), (q,0)\}$, although any interpretation $I = \{(p, x), (q, 1-x)\}$ is a model of $F$. 
In accordance with Corollary~\ref{cor:choice}, we check that any $I$ is a stable model of $G = F\fand_m (p\for_l \neg_{s} p) \fand_m (q\for_l \neg_{s} q)$.
The reduct $G^I$ is equivalent to  
\[
  ((1-\u_I(q)) \rar_r\, p) \fand_m (p\for_l (1-\u_I(p))) 
  \fand_m (q\for_l (1-\u_I(q))) .
\]
It is clear that any interpretation $J$ that satisfies $G^I$ should be such that 
$\u_J(p) \ge \u_I(p)$ and $\u_J(q)\ge \u_I(q)$, so there is no witness to dispute the stability of $I$. 
\end{example}

%---------------------------------------------------------------------
\section{Other Related Work} \label{sec:related-work} \optional{sec:related-work}
%---------------------------------------------------------------------

Several approaches to incorporating graded truth degrees into the answer set programming framework have been proposed. In this paper, we have formally compared our approach to \cite{schockaert12fuzzy} and \cite{lukasiewicz06fuzzy}.
Most works consider a special form $h\leftarrow B$ where $h$ is an atom and $B$ is some formula
\cite{vojtas01fuzzy,damasio01monotonic,medina01multi-adjoint,damasio01antitonic}. 
Among them, \cite{vojtas01fuzzy,damasio01monotonic,medina01multi-adjoint} allow $B$ to be any arbitrary formula
corresponding to an increasing function whose arguments are the atoms
appearing in the formula. ~\cite{damasio01antitonic} allows $B$
to correspond to either an increasing function or a decreasing
function. ~\cite{madrid08towards} considers the normal program
syntax, i.e., each rule is of the form $l_0 \leftarrow l_1 \fand \dots
\fand l_m \fand not\ l_{m+1} \fand \dots \fand not\ l_{n}$, where each
$l_i$ is an atom or a strongly negated atom. In terms of
semantics, most of the previous works rely on the notion of an immediate
consequence operator and relate the fixpoint of this operator to the
minimal model of a positive program.
Similar to the approach ~\cite{lukasiewicz06fuzzy} has adopted, the answer set of
a positive program is defined as its minimal model, while the answer sets of a non-positive program are defined as minimal models of reducts. 
~\cite{nieuwenborgh07anintroduction} presented a semantics based on
the notion of an unfounded set.
Disjunctive fuzzy answer set programs were also studied in~\cite{blondeel14complexity}.

It is worth noting that some of the related works have discussed so-called residuated programs
\cite{vojtas01fuzzy,damasio01monotonic,medina01multi-adjoint,madrid08towards}, where each rule $h
\leftarrow B$ is assigned a weight $\theta$, and a rule is satisfied by an
interpretation $I$ if $I(h \leftarrow B) \geq \theta$. According to
\cite{damasio01monotonic}, this class of programs is able to
capture many other logic programming paradigms, such as possibilistic
logic programming, hybrid probabilistic logic programming, and generalized
annotated logic programming. Furthermore, as shown in
~\cite{damasio01monotonic}, a weighted rule $(h \leftarrow B,
\theta)$ can be simulated by $h \leftarrow B \fand \theta$, where
$(\fand, \leftarrow)$ forms an adjoint pair. Notice that a similar method was used in Theorem~\ref{thm:ystable-1stable} in relating $y$-stable models to $1$-stable models. 

It is well known in the Boolean stable model semantics that strong
negation can be represented in terms of new atoms
\cite{ferraris11stable}. 
Our adaptation in the fuzzy stable model semantics is similar to the
method from~\cite{madrid08towards}, in which the consistency of an interpretation is guaranteed by imposing the extra restriction $I(\sneg p) \leq\; \sneg I(p)$ for all atom $p$. Strong negation and consistency have also been studied in \cite{madrid11measuring,madrid09oncoherence}.

In addition to fuzzy answer set programming, there are other approaches developed to handle many-valued logic. For example, \cite{straccia06annotated} proposed a logic programming framework
where each literal is annotated by a real-valued interval. Another example is possibilistic logic \cite{dubois04possibilistic}, where each propositional symbol is associated with two real values in the interval $[0, 1]$ called the necessity degree and the possibility degree. Although
these semantics handle fuzziness based on quite different ideas, it
has been shown that these paradigms can be captured by fuzzy answer
set programs~\cite{damasio01monotonic}.

While the development of FASP solvers has not yet reached the maturity level of ASP solvers, there is an increasing interest recently.
\cite{alviano13fuzzy} presented an FASP solver based on answer set approximation operators and a translation to bilevel linear programming presented in~\cite{blondeel14complexity}. The implementation in~\cite{mushthofa14afinite} is based on a reduction of FASP to ASP. 
Independent from the work presented here, \cite{alviano15fuzzy} presented another promising FASP solver, named {\sc fasp2smt}, that uses SMT solvers based on a translation from FASP into satisfiability modulo theories. A large fragment of the language proposed in this paper can be computed by this solver. 
The input language allows $\neg_s$, $\fand_l$, $\for_l$, $\fand_m$, $\for_m$ as fuzzy operators, and rules of the form 
\[
  \i{Head} \ar \i{Body}
\]
where $\i{Head}$ is $p_1\odot\cdots\odot p_n$ where $p_i$ are atoms or numeric constants, and $\odot\in\{\fand_l, \for_l, \fand_m, \for_m\}$, and $\i{Body}$ is a nested formula formed from atoms and numeric constants using $\neg_s$, $\fand_l$, $\for_l$, $\fand_m$, $\for_m$. 

Example~\ref{ex:trust} can be computed by {\sc fasp2smt}. Assume a conflict of degree $0.1$ between Alice and Bob occurred at step $0$, no conflict occurred at step $1$, and a conflict of degree $0.5$ between Alice and Bob occurred at step $2$. Since the current version of {\tt fast2smt} \footnote{Downloaded in March 2016} does not yet support product conjunction $\fand_p$, we use \L ukasiewicz conjunction $\fand_l$ in formula $F_4$. 
In the input language of {\sc fasp2smt}, ``:-'' denote $\rar_r$, ``,'' denotes $\fand_m$, ``*'' denote $\fand_l$, ``+'' denotes $\for_l$, and ``not'' denotes $\neg_s$. Numeric constants begin with ``{\tt \#}'' symbol, and variables are capitalized.
The encoding in the input language of {\sc fasp2smt} is shown in Figure~\ref{fig:trust-fasp2smt}.

\begin{figure}
\begin{lstlisting}
% domain
person(alice).     person(bob).      person(carol).    
step(0).           step(1).          step(2).          step(3).
next(0,1).         next(1,2).        next(2,3).

% F1: trust is reflexive
trust(X,X,T) :- person(X), step(T).

% F2, F3: UEC
:- trust(X,Y,T) * distrust(X,Y,T).
:- not (trust(X,Y,T) + distrust(X,Y,T)), person(X), person(Y), step(T).

% F4, F5: transitivity of trust
% F4 modified: product t-norm is replaced by Lukasiwicz t-norm since 
%  the solver does not support it.
trust(X,Z,0) :- trust(X,Y,0) * trust(Y,Z,0), 
                person(X), person(Y), person(Z).
distrust(X,Y,0) :- not trust(X,Y,0), person(X), person(Y).

% F6, F7: inertia
trust(X,Y,T2) :- trust(X,Y,T1), not not trust(X,Y,T2), next(T1,T2).
distrust(X,Y,T2) :- distrust(X,Y,T1), not not distrust(X,Y,T2), next(T1,T2).

% F8: effect of conflict
distrust(X,Y,T2) :- conflict(X,Y,T1) + distrust(X,Y,T1), next(T1,T2).

% initial State
trust(alice,bob,0) :- #0.8.
trust(bob,carol,0) :- #0.7.

% action
conflict(alice,bob,0) :- #0.1.
conflict(alice,bob,2) :- #0.5.
\end{lstlisting}
\caption{Trust Example in the Input Language of {\sc fasp2smt}}
\label{fig:trust-fasp2smt}
\end{figure}

The command line to compute this program is simply:
\begin{verbatim}
       python fasp2smt.py trust.fasp
\end{verbatim}
Part of the output from {\sc fasp2smt} is shown below:
\begin{lstlisting}
	trust(alice,bob,0)	0.800000	(4.0/5.0)
	trust(bob,carol,0)	0.700000	(7.0/10.0)
	trust(alice,carol,0)	0.500000	(1.0/2.0)
	trust(alice,bob,1)	0.700000	(7.0/10.0)
	trust(bob,carol,1)	0.700000	(7.0/10.0)
	trust(alice,carol,1)	0.500000	(1.0/2.0)
	trust(alice,bob,2)	0.700000	(7.0/10.0)
	trust(bob,carol,2)	0.700000	(7.0/10.0)
	trust(alice,carol,2)	0.500000	(1.0/2.0)
	trust(alice,bob,3)	0.200000	(1.0/5.0)
	trust(bob,carol,3)	0.700000	(7.0/10.0)
	trust(alice,carol,3)	0.500000	(1.0/2.0)
	conflict(alice,bob,0)	0.100000	(1.0/10.0)
	conflict(alice,bob,1)	0.0		(0.0)
	conflict(alice,bob,2)	0.500000	(1.0/2.0)
\end{lstlisting}
The number following each atom is the truth value of the atom in  decimal notation, and the number in the parentheses is the truth value in fraction.

%---------------------------------------------------------------------
\section{Conclusion} \label{sec:conclusion} \optional{sec:conclusion}
%---------------------------------------------------------------------

We introduced a stable model semantics for fuzzy propositional
formulas, which generalizes both the Boolean stable model semantics
and fuzzy propositional logic. The syntax is the same as the syntax of
fuzzy propositional logic, but the semantics allows us to distinguish {\em stable
  models} from non-stable models. The formalism allows highly
configurable default reasoning involving fuzzy truth values. 
The proposed semantics, when we restrict threshold to be~$1$ and assume all atoms to be subject to minimization, is essentially equivalent to fuzzy equilibrium logic, but is much simpler. To the best of our knowledge, our representation of the commonsense law of inertia involving fuzzy values is new. The representation uses nested fuzzy operators, which are not available in other fuzzy ASP semantics with restricted syntax.

We showed that several traditional results in answer set programming can be naturally extended to this formalism, and expect that more results can be carried over. 
Also, it would be possible to generalize the semantics to the first-order level, similar to the way the Boolean stable model semantics was generalized in~\cite{ferraris11stable}.

\bibliographystyle{named}

\appendix

%---------------------------------------------------------------
\section{Proofs} \label{sec:proofs}
%---------------------------------------------------------------

%---------------------------------------------------------------
\subsection{Proof of Proposition \ref{prop:monotone}\optional{prop:monotone}}
%---------------------------------------------------------------
\begin{lemma}\label{lem:conjunction_le}
For any fuzzy conjunction $\fand$, we have $\fand(x, y) \le x$ and $\fand(x, y) \le y$.
\end{lemma}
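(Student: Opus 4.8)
The plan is to derive both inequalities from just two of the defining properties of a fuzzy conjunction: monotonicity (being increasing in each argument) and the boundary condition $\fand(1, x) = x$, with commutativity used to bridge between the two arguments. Since every fuzzy truth value lies in $[0,1]$, the key observation is that $1$ acts as a top element, so conjoining with $1$ can only be the ``largest'' case, and conjoining with anything smaller can only decrease the value.

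Concretely, I would first record that $\fand(x, 1) = x$ for all $x \in [0,1]$: the defining condition gives $\fand(1, x) = x$, and commutativity yields $\fand(x, 1) = \fand(1, x) = x$. Next, since $y \le 1$ and $\fand$ is increasing in its second argument, I apply monotonicity to get
\[
  \fand(x, y) \;\le\; \fand(x, 1) \;=\; x,
\]
which is the first inequality. For the second inequality, I use that $x \le 1$ together with the fact that $\fand$ is increasing in its first argument, giving $\fand(x, y) \le \fand(1, y)$, and then the boundary condition $\fand(1, y) = y$ directly yields $\fand(x, y) \le y$. (Alternatively, one can obtain the second inequality from the first by commutativity: $\fand(x,y) = \fand(y,x) \le y$.)

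There is no real obstacle here; the statement is an immediate consequence of the axioms, and the only point requiring the slightest care is the use of commutativity to convert the given boundary condition $\fand(1, x) = x$ into the form $\fand(x, 1) = x$ needed when applying monotonicity in the second coordinate. This lemma is exactly what is needed as the base fact in the induction establishing Proposition~\ref{prop:monotone}, since it shows that any fuzzy conjunction---in particular the G\"odel t-norm $\fand_m$ used throughout the reduct definition---is bounded above by each of its arguments.
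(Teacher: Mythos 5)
Your proof is correct and matches the paper's own argument, which likewise derives $\fand(x,y)\le\fand(x,1)=x$ and $\fand(x,y)\le\fand(1,y)=y$ from monotonicity and the unit condition (with commutativity implicitly supplying $\fand(x,1)=x$). You simply spell out the commutativity step that the paper leaves tacit.
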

\begin{proof}
By the conditions imposed on fuzzy conjunctions, we have $\fand(x, y)\le \fand(x, 1) = x$ and $\fand(x, y)\le \fand(1, y) = y$. 
\qed
\end{proof}

\bigskip
\noindent{\bf Proposition~\ref{prop:monotone}.\optional{prop:monotone}}\ \ 
{\sl
For any interpretations $I$ and $J$ such that $J\le^{\bf p} I$, it holds that 
\[
\u_J(F^I)\le \u_I(F).
\]
}

\begin{proof}
By induction on $F$.
\bi
\item $F$ is an atom $p$. $\u_J(F^I) = J(p)$ and $\u_I(F) = I(p)$. Clear from the assumption~$J\le^{\bf p} I$.

\item $F$ is a numeric constant $c$. Clearly, $\u_J(F^I)=c=\u_I(F)$.

\item $F$ is $\neg G$. $\u_J(F^I) = \u_J(\u_I(\neg G)) = \u_I(\neg G) = \u_I(F)$.

\item $F$ is $G\odot H$, where $\odot$ is $\fand$ or $\for$. $\u_J(F^I) = \u_J(G^I)\odot \u_J(H^I)$. By I.H. we have $\u_J(G^I)\le \u_I(G)$ and $\u_J(H^I)\le \u_I(H)$. Since $\fand$ and $\for$ are both increasing, we have $\u_J(F^I)\  =\  \u_J(G^I)\odot \u_J(H^I)\ \le\  \u_I(G)\odot \u_I(H)\ =\ \u_I(F)$.

\item $F$ is $G\rar H$. $\u_J(F^I) = (\u_J(G^I)\rar \u_J(H^I))\fand_m \u_I(F)$. By Lemma \ref{lem:conjunction_le}, $\u_J(F^I) \le \u_I(F)$.
\ei
\qed
\end{proof}

%---------------------------------------------------------------
\subsection{Proof of Proposition \ref{prop:reduct-satisfaction}\optional{prop:reduct-satisfaction}}
%---------------------------------------------------------------

\begin{lemma}\label{prop:reduct_val}
For any (fuzzy) formula $F$ and (fuzzy) interpretation $I$, we have $\u_I(F) = \u_I(F^I)$.
\end{lemma}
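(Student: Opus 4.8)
The plan is to prove the identity by structural induction on $F$, mirroring the recursive clauses of Definition~\ref{def:fuzzy-reduct}. The base cases are immediate: when $F$ is a fuzzy atom or a numeric constant, $F^I=F$ by definition, so $\u_I(F^I)=\u_I(F)$ with nothing to check.

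Next I would dispatch the negation case $F=\neg G$. Here $(\neg G)^I=\u_I(\neg G)$, so the reduct is literally the \emph{numeric constant} whose value is $\u_I(\neg G)$. Since the truth value of a numeric constant $c$ under any interpretation is $c$ itself, we get $\u_I((\neg G)^I)=\u_I(\neg G)=\u_I(F)$. Note that this case needs no induction hypothesis; it only uses the fact that reapplying $\u_I$ to an already-computed truth value is the identity.

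The main work is in the binary-connective cases. For $\odot\in\{\fand,\for\}$ I would use the simplified reduct $(G\odot H)^I=G^I\odot H^I$ (valid by Proposition~\ref{prop:monotone}), so that $\u_I((G\odot H)^I)=\odot(\u_I(G^I),\u_I(H^I))$, and the induction hypothesis $\u_I(G^I)=\u_I(G)$, $\u_I(H^I)=\u_I(H)$ collapses this to $\u_I(G\odot H)$. For the implication $F=G\frar H$ the reduct is $(G^I\frar H^I)\fand_m \u_I(G\frar H)$; writing $c=\u_I(G\frar H)$ (again a numeric constant), the induction hypothesis gives $\u_I(G^I\frar H^I)=\frar(\u_I(G),\u_I(H))=c$, so $\u_I((G\frar H)^I)=\fand_m(c,c)$.

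The crux of the whole argument---and the only place where the specific operator in the third clause of Definition~\ref{def:fuzzy-reduct} matters---is the final simplification $\fand_m(c,c)=c$. This holds precisely because $\fand_m=\min$ is idempotent, which is exactly the property the paper later singles out when explaining why $\fand_l$ or $\fand_p$ cannot replace $\fand_m$ in the definition of the reduct. I do not anticipate any genuine obstacle; the only points demanding care are recognizing that $\u_I(\neg G)$ and $\u_I(G\frar H)$ occurring inside the reduct are numeric constants (so a second $\u_I$ leaves them fixed) and invoking idempotence of $\min$ at the last step. Once the lemma is established, Proposition~\ref{prop:reduct-satisfaction} follows instantly by setting the common value equal to $1$.
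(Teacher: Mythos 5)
Your proof is correct and takes essentially the same route as the paper's: structural induction on $F$, identical base and negation cases (treating $\u_I(\neg G)$ as a numeric constant), with the crux being idempotence of $\fand_m$ applied to two values that the induction hypothesis makes equal. The only cosmetic difference is that the paper handles all three binary connectives uniformly via the unsimplified reduct $(G\odot H)^I=(G^I\odot H^I)\fand_m\,\u_I(G\odot H)$ and collapses $min\{c,c\}=c$ in every case, whereas you use the simplified reduct for $\fand$ and $\for$ (legitimate, since the paper adopts it and Proposition~\ref{prop:monotone} is proved independently) and reserve the idempotence step for implication alone.
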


\begin{proof}
By induction on $F$.
\bi
\item $F$ is an atom $p$ or a numeric constant. Clear from the fact $F^I=F$.

\item $F$ is $\neg G$. Then we have $\u_I(F) = \u_I(\neg G) = \u_I(\u_I(\neg G)) = \u_I(F^I)$.

\item $F$ is $G\odot H$, where $\odot$ is $\fand$, $\for$, or $\rar$. Then $F^I = (G^I \odot H^I)\fand_m \u_I(F\odot G)$. By I.H., we have $\u_I(G) = \u_I(G^I)$ and $\u_I(H) = \u_I(H^I)$. So we have
\begin{align*}
 \u_I(F^I) &= min\left\{\u_I(G^I \odot H^I),\ \ \u_I(G\odot H)\right\}\\
          &= min\left\{\odot(\u_I(G^I), \u_I(H^I)),\ \  \odot(\u_I(G), \u_I(H))\right\}\\
          &= min\left\{\odot(\u_I(G), \u_I(H)),\ \  \odot(\u_I(G), \u_I(H))\right\}\\
          &= \odot(\u_I(G), \u_I(H))\\
          &= \u_I(F).
\end{align*}
\ei
\qed
\end{proof}

Proposition \ref{prop:reduct-satisfaction} is an immediate corollary to Lemma \ref{prop:reduct_val}.

\bigskip\noindent{\bf Proposition~\ref{prop:reduct-satisfaction}.\optional{prop:reduct-satisfaction}}\ \ 
{\sl
A (fuzzy) interpretation $I$ satisfies a (fuzzy) formula $F$ if and only if $I$ satisfies $F^I$.
}

%---------------------------------------------------------------
\subsection{Proof of Theorem \ref{thm:ystable-1stable}\optional{thm:ystable-1stable}}
%---------------------------------------------------------------

\begin{lemma}\label{lem:from_th_y_to_th_1_model}
For any fuzzy formula $F$, any interpretation $I$, and any implication $\rar$ that satisfies $\rar\!\!(x, y) = 1$ iff $y\geq x$, we have that 
$I$ is a $y$-model of $F$ iff $I$ is a $1$-model of $y\rar F$.
\end{lemma}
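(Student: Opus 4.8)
The plan is to collapse both sides of the claimed equivalence to a single inequality about the value $\u_I(F)$, so that the lemma becomes a direct consequence of the assumed property of $\rar$. First I would unfold the left-hand side: by the definition of a $y$-model, $I$ is a $y$-model of $F$ exactly when $\u_I(F)\ge y$. Next I would evaluate the truth value appearing on the right-hand side. Since $y$ occurs in $y\rar F$ as a numeric constant, the recursive definition of $\u_I$ gives $\u_I(y)=y$, whence $\u_I(y\rar F)=\rar(y,\u_I(F))$.

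With this evaluation in hand, I would invoke the hypothesis on $\rar$, namely that $\rar(x,y)=1$ holds iff the consequent is at least the antecedent. Instantiating the antecedent to the constant $y$ and the consequent to $\u_I(F)$ yields $\rar(y,\u_I(F))=1$ iff $\u_I(F)\ge y$. Reading ``$I$ is a $1$-model of $y\rar F$'' as $\u_I(y\rar F)=1$ (a $1$-model being exactly a model in the sense of Definition~\ref{def:fuzzy-m}), this is precisely the equivalence between $\u_I(y\rar F)=1$ and $\u_I(F)\ge y$, that is, between $I$ being a $1$-model of $y\rar F$ and $I$ being a $y$-model of $F$.

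Honestly, there is no substantive obstacle here: the statement is an immediate unfolding of the definitions together with the threshold characterization of $\rar$, and no induction on the structure of $F$ is required, since $F$ enters only through the single value $\u_I(F)$. The only points demanding care are bookkeeping rather than mathematics. The first is the notational overloading of $y$: in the hypothesis ``$\rar(x,y)=1$ iff $y\ge x$'' the symbol $y$ denotes the consequent argument, whereas in the lemma $y$ is the fixed threshold, so when applying the hypothesis I must track that the threshold plays the role of the antecedent $x$ while $\u_I(F)$ plays the role of the consequent. The second is simply to record that ``$1$-model'' unwinds to the condition $\u_I(y\rar F)=1$, which is where the whole argument bottoms out.
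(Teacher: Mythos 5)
Your proof is correct and is essentially identical to the paper's: both unfold the definitions of $y$-model and $1$-model, evaluate $\u_I(y\rar F)$ as $\rar\!\!(y,\u_I(F))$, and apply the threshold property of $\rar$ (with the threshold in the antecedent position) to chain the equivalences. Your explicit notes on the variable-name overloading and on evaluating the numeric constant are sound bookkeeping that the paper leaves implicit.
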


\begin{proof}
By definition, $I\models_y F$ means that $\u_I(F)\geq y$. 
Since $\rar\!\!(x, y) = 1$ iff $y \geq x$, 
$\u_I(F)\geq y$ iff $\rar\!\!(y, \u_I(F))=1$ iff $\u_I(y\rar F) = 1$ iff 
$I\models_1 (y\rar F)$.
\qed
\end{proof}

\bigskip
\noindent{\bf Theorem~\ref{thm:ystable-1stable}.\optional{thm:ystable-1stable}}\ \ 
{\sl
For any fuzzy formula $F$, an interpretation $I$ is a $y$-stable model of $F$ relative to ${\bf p}$ iff $I$ is a $1$-stable model of $y\rar F$ relative to ${\bf p}$ as long as the implication $\rar$ satisfies the condition $\rar\!\!(x, y) = 1$ iff $y\geq x$. 
}\medskip

\begin{proof}
\[
  \text{$I$ is a $y$-stable model of $F$ relative to ${\bf p}$} 
\]
iff 
\[
  \text{$I\models_y F$ and there is no $J<^{\bf p} I$ such that $J\models_y F^I$}
\]
iff (by Lemma \ref{lem:from_th_y_to_th_1_model})
\[
\ba c
   \text{$I\models_1 y\rar F$ and there is no $J <^{\bf p} I$ such that $J\models_1 (y\rar F^I)$}
\ea
\]
iff (since $\u_I(y\rar F)=1$)
\[
\ba c
   \text{$I\models_1 y\rar F$ and there is no  $J <^{\bf p} I$ such that $J\models_1 (y\rar F^I)\fand_m \u_I(y\rar F)$}
\ea
\]
iff (since $(y\rar F^I)\fand_m \u_I(y\rar F)= (y\rar F)^I$)
\[ 
\text{
 $I$ is a $1$-stable model of $y\rar F$ relative to ${\bf p}$.
}
\]
\qed
\end{proof}

%---------------------------------------------------------------
\subsection{Proof of Theorem \ref{thm:fuzzy-sm}\optional{thm:fuzzy-sm}}
%---------------------------------------------------------------

\begin{lemma}\label{lem:eq-reduct-star}\optional{lem:eq-reduct-star}
For any formula $F$ and any interpretations $I$ and $J$ such that 
$J\le^{\bf p} I$, $\u_{I\cup J^{\bf p}_{\bf q}}(F^*({\bf q})) = \u_J(F^I)$.
\end{lemma}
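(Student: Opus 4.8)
The plan is to prove the identity by structural induction on $F$, after fixing the abbreviation $K = I\cup J^{\bf p}_{\bf q}$ for the interpretation of $\sigma\cup{\bf q}$ that appears on the left-hand side. The single observation that makes everything go through is that $K$ agrees with $I$ on every atom of $\sigma$, so that for any fuzzy formula $G$ built solely from atoms of $\sigma$ (i.e., not mentioning any $q_i$) we have $\u_K(G)=\u_I(G)$; this is itself a trivial induction on $G$. I would record this as a preliminary remark, since it is exactly what is needed to evaluate the ``un-starred'' copies of subformulas that the transformation $(\cdot)^*$ leaves behind.

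For the base cases: if $F$ is an atom $p_i\in{\bf p}$, then $F^*=q_i$ while $F^I=p_i$, so both sides equal $J(p_i)$, using $K(q_i)=J(p_i)$ on the left; if $F$ is an atom $p\notin{\bf p}$ or a numeric constant, then $F^*=F^I=F$, and the two sides agree because $J$ and $I$ coincide off ${\bf p}$, as guaranteed by $J\le^{\bf p} I$. The conjunction and disjunction cases are immediate from the induction hypothesis together with the simplified reduct clauses $(G\fand H)^I=G^I\fand H^I$ and $(G\for H)^I=G^I\for H^I$: since $(G\odot H)^*=G^*\odot H^*$ for $\odot\in\{\fand,\for\}$, both sides reduce to $\u_J(G^I)\odot\u_J(H^I)$ once the I.H. is applied to the two immediate subformulas.

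The cases that actually invoke the preliminary remark are negation and implication, and I expect the implication case to be the only real point of care. For $F=\neg G$ the transformation does not recurse, giving $F^*=\neg G$, while the reduct collapses to the constant $F^I=\u_I(\neg G)$; the left side is $\u_K(\neg G)=\u_I(\neg G)$ by the preliminary remark, and the right side is $\u_J(\u_I(\neg G))=\u_I(\neg G)$ because a numeric constant evaluates to itself, so the two match. For $F=G\rar H$ we have $F^*=(G^*\rar H^*)\fand_m(G\rar H)$ and $F^I=(G^I\rar H^I)\fand_m\u_I(G\rar H)$; evaluating the left side and applying the I.H. to $G^*$ and $H^*$ turns the first conjunct into $\u_J(G^I)\rar\u_J(H^I)$, while the preliminary remark turns the second conjunct $\u_K(G\rar H)$ into $\u_I(G\rar H)$; on the right side $\u_J$ distributes over $\fand_m$ and fixes the constant $\u_I(G\rar H)$, so both sides equal $(\u_J(G^I)\rar\u_J(H^I))\fand_m\u_I(G\rar H)$. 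The main obstacle, such as it is, is purely bookkeeping: one must keep straight that $(\cdot)^*$ substitutes $q_i$ for $p_i$ only inside the ``positive'' connectives $\fand,\for,\rar$ but retains an un-starred copy of the formula across $\neg$ and in the right conjunct of $\rar$, and then match this against the asymmetry of the reduct, which replaces negations and implication guards by their $I$-values. Once that correspondence is set up, every case closes by the induction hypothesis and the preliminary remark.
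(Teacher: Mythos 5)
Your proof is correct and takes essentially the same route as the paper's: the same structural induction with the same five cases, where conjunction and disjunction follow directly from the induction hypothesis and the negation and implication cases exploit the fact that the un-starred copies evaluate under $I\cup J^{\bf p}_{\bf q}$ exactly as under $I$. The only cosmetic difference is that you isolate this last fact as an explicit preliminary remark, whereas the paper invokes it silently when writing $\u_{I\cup J^{\bf p}_{\bf q}}((\neg G)^*({\bf q})) = \u_I(\neg G)$ and in the implication case.
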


\begin{proof}
By induction on $F$.

\bi
\item  $F$ is a numeric constant $c$, or an atom not in ${\bf p}$. 
       Immediate from the fact that $J$ and $I$ agree on 
       $F^*({\bf q}) = F = F^I$.

\item  $F$ is an atom $p_i\in {\bf p}$. 
       $F^*({\bf q}) = q_i$ and $F^I = p_i$. 
       Clear from the fact that 
       $\u_{I\cup J^{\bf p}_{\bf q}}(q_i) = \u_J(p_i)$.

\item  $F$ is $\neg G$. 
       $\u_{I\cup J^{\bf p}_{\bf q}}((\neg G)^*({\bf q})) 
       = \u_I(\neg G) = \u_J(\u_I(\neg G))
       = \u_J(F^I)$.

\item  $F$ is $G\odot H$, where $\odot$ is $\fand$ or $\for$. 
       Immediate by I.H. on $G$ and $H$.

\item  $F$ is $G\rar H$. 
       By I.H.,  $\u_{I\cup J^{\bf p}_{\bf q}}(G^*({\bf q})) = \u_J(G^I)$ and
       $\u_{I\cup J^{\bf p}_{\bf q}}(H^*({\bf q})) = \u_J(H^I)$.
$F^*({\bf q})$ is 
  $(G^*({\bf q})\rar H^*({\bf q}))\fand_m (G\rar H)$, and 
$F^I$ is 
  $(G^I\rar H^I)\fand_m \u_I(G\rar H)$.

Then the claim is immediate from I.H. 
\end{itemize}
\qed
\end{proof}

\noindent{\bf Theorem~\ref{thm:fuzzy-sm}.\optional{thm:fuzzy-sm}}\ \ 
{\sl
A fuzzy interpretation $I$ is a fuzzy stable model of $F$ relative to ${\bf p}$ iff 
\bi 
\item  $I \models F$, and
\item  there is no fuzzy interpretation $J$ such that  $J<^{\bf p} I$ and $I\cup J^{\bf p}_{\bf q}\models F^*({\bf q})$.
\ei
}\medskip

\begin{proof}
\[
  \text{$I$ is a fuzzy stable model of $F$ relative to {\bf p}}
\]
iff 
\[
  \text{$I\models F$ and there is no $J <^{\bf p} I$ such that $J \models F^I$} 
\]
iff (by Lemma~\ref{lem:eq-reduct-star})
\[
  \text{$I\models F$ and there is no $J <^{\bf p} I$ 
   such that $I\cup J^{\bf p}_{\bf q}\models F^*({\bf q})$}.
\]
\qed
\end{proof}

%---------------------------------------------------------------
\subsection{Proofs of Theorems~\ref{thm:cl-fuzzy-sm} and \ref{thm:fuzzy-cl-sm}}
%---------------------------------------------------------------

The following lemma immediately follows from Lemma~\ref{lem:conjunction_le}.

\begin{lemma}\label{cor_tnorm_conjunction}
For any fuzzy conjunction $\fand$, $\fand(x, y)=1$ if and only if $x=1$ and $y=1$.
\end{lemma}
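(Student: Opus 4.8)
The plan is to prove the biconditional by establishing its two directions separately; both should follow almost immediately from facts already available in the excerpt, so the argument will be short.

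For the easier direction, suppose $x=1$ and $y=1$. Here I would simply invoke the boundary condition that is part of the very definition of a fuzzy conjunction, namely $\fand(1, z)=z$ for all $z\in[0,1]$. Instantiating this with $z=1$ gives $\fand(1,1)=1$, which is exactly what is needed.

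For the converse, suppose $\fand(x, y)=1$. The key tool here is Lemma~\ref{lem:conjunction_le}, which supplies $\fand(x,y)\le x$ and $\fand(x,y)\le y$. Substituting $\fand(x,y)=1$ into the first inequality yields $1\le x$; since every fuzzy truth value lies in $[0,1]$, this forces $x=1$. Applying the symmetric inequality $\fand(x,y)\le y$ in the same way gives $y=1$, completing this direction.

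Honestly, there is no genuine obstacle: the statement is a direct corollary, as the prose preceding it already advertises. The only point that deserves a moment's attention is that the converse direction quietly relies on the fuzzy truth values being confined to $[0,1]$ (so that $1\le x$ collapses to $x=1$), and that the ``if'' direction uses only the normalization axiom $\fand(1,z)=z$ rather than, say, commutativity; noting which defining property each direction consumes is the sole piece of care required.
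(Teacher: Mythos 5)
Your proof is correct and matches the paper's intended argument exactly: the paper states the lemma as an immediate consequence of Lemma~\ref{lem:conjunction_le}, which is precisely how you handle the ``only if'' direction, while the ``if'' direction follows from the normalization axiom $\fand(1,z)=z$ just as you note. Nothing is missing.
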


Define the mapping $\i{defuz}(I)$ that maps a fuzzy interpretation 
$I = \{(p_1, x_1), \dots,$ $(p_n, x_n)\}$ to a classical interpretation such that $\i{defuz}(I) = \{p_i\mid (p_i, 1) \in I\}$.

\begin{lemma}\label{lem:df-models}\optional{lem:df-models}
for any fuzzy interpretation $I$ and any classical propositional formula~$F$,
\bi
\item[(i)]  if $\u_I(F^{fuzzy}) = 1$, then $\i{defuz}(I)\models F$, and
\item[(ii)] if $\u_I(F^{fuzzy}) = 0$, then $\i{defuz}(I)\not\models F$.
\ei
\end{lemma}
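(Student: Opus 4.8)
The plan is to prove both (i) and (ii) simultaneously by structural induction on the classical formula $F$, writing $X$ for $\i{defuz}(I)$ throughout. The two statements must be carried together rather than separately, because the clauses for $\neg$ and $\rar$ cross over: establishing (i) for $\neg G$ appeals to (ii) for $G$, and establishing (ii) for $\neg G$ appeals to (i) for $G$. Note also that the lemma is only a pair of conditional claims, so whenever $\u_I(F^{fuzzy})$ takes an intermediate value in $(0,1)$ both implications hold vacuously and there is nothing to check; the argument only ever needs to reason at the endpoints $0$ and $1$.

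For the base cases I would dispatch $F=\bot$, $F=\top$, and $F=p$ directly. Since $\bot^{fuzzy}=0$ and $\top^{fuzzy}=1$, the constant cases are immediate, with one of (i)/(ii) vacuous in each. For an atom $p$ we have $F^{fuzzy}=p$ and $\u_I(p)=I(p)$, and by the definition of $\i{defuz}$ we have $p\in X$ exactly when $I(p)=1$; this yields (i) when the value is $1$ and (ii) when the value is $0$.

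The inductive step rests on the observation that the chosen operators $\neg_s$, $\fand_m$, $\for_m$, $\rar_s$ all restrict correctly to $\{0,1\}$. For $F=\neg G$, using $\u_I(F^{fuzzy})=1-\u_I(G^{fuzzy})$, value $1$ forces $\u_I(G^{fuzzy})=0$ (apply IH (ii)) and value $0$ forces $\u_I(G^{fuzzy})=1$ (apply IH (i)). For $F=G\land H$ the key ingredient is Lemma~\ref{cor_tnorm_conjunction}: $\fand_m(x,y)=1$ iff $x=y=1$, so value $1$ propagates to both conjuncts via IH (i), while value $0$ means $\min$ is $0$, so some conjunct evaluates to $0$ and IH (ii) applies there. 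The cases $F=G\lor H$ (using $\for_m=\max$) and $F=G\rar H$ (using $\rar_s(x,y)=\max(1-x,y)$) are symmetric: for implication, value $1$ splits into the subcase $\u_I(G^{fuzzy})=0$ or the subcase $\u_I(H^{fuzzy})=1$, each closed by the appropriate half of the induction hypothesis, whereas value $0$ forces $\u_I(G^{fuzzy})=1$ and $\u_I(H^{fuzzy})=0$ jointly, giving $X\models G$ and $X\not\models H$.

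I do not expect a genuine obstacle: the statement is a routine verification once the induction is arranged to carry (i) and (ii) at once. The only point demanding care is the implication clause, since the many-valued $\rar_s$ can in principle take intermediate values; but because the induction hypothesis is only ever invoked at the endpoints $0$ and $1$, $\rar_s$ there coincides with classical implication, and the case closes cleanly.
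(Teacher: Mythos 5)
Your proof is correct and takes essentially the same route as the paper's: a structural induction carrying (i) and (ii) simultaneously, with the base cases dispatched directly, conjunction handled via Lemma~\ref{cor_tnorm_conjunction}, and the crossing between (i) and (ii) in the $\neg$ and $\rar$ cases exactly as in the paper. Your closing remark about $\rar_s$ being the delicate clause also matches the paper, which notes in a footnote that the lemma fails if $\rar_s$ is replaced by, e.g., $\rar_l$.
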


\begin{proof}
We prove by induction on $F$.
\bi
\item $F$ is an atom $p$. (i) Suppose $I \models_1 F^{fuzzy}$. Then $\u_I(p) = 1$, and thus $p \in {\i{defuz}}(I)$. So $\i{defuz}(I) \models F$. (ii) Suppose $\u_I(F^{fuzzy}) = 0$. Then $\u_I(p) = 0$, and thus $p \notin \i{defuz}(I)$. So $\i{defuz}(I) \not\models F$. 

\item $F$ is $\bot$. (i) There is no interpretation that satisfies $F^{fuzzy}=0$ to the degree $1$. So the claim is trivially true. (ii) Since no interpretation satisfies $F$, $\i{defuz}(I) \not\models F$.

\item $F$ is $\top$. (i) All interpretations satisfy $F$. So $\i{defuz}(I) \models F$. (ii) There is no interpretation $I$ such that $\u_I(F^{fuzzy})=0$. So (ii) is trivially true.

\item $F$ is $\neg G$. Then $F^{fuzzy}$ is $\neg_s G^{fuzzy}$. (i) Suppose $I \models_1 F^{fuzzy}$. We have $\u_I(F^{fuzzy}) = 1-\u_I(G^{fuzzy}) = 1$, so $\u_I(G^{fuzzy}) =0$. By I.H., $\i{defuz}(I) \not\models G$, and thus $\i{defuz}(I) \models F$. (ii) Suppose $\u_I(F^{fuzzy}) = 0$, Then $1-\u_I(G^{fuzzy}) = 0$, $\u_I(G^{fuzzy}) = 1$, i.e., $I \models_1 \u_I(G^{fuzzy})$. By I.H., $\i{defuz}(I) \models G$, and thus $\i{defuz}(I) \not\models F$.

\item $F$ is $G \land H$. Then $F^{fuzzy}$ is $G^{fuzzy} \fand_m H^{fuzzy}$. (i) Suppose $I \models_1 F^{fuzzy}$. By Lemma~\ref{cor_tnorm_conjunction}, $\u_I(G^{fuzzy}) = \u_I(H^{fuzzy}) = 1$, i.e., $I \models_1 G^{fuzzy}$ and $I \models_1 H^{fuzzy}$. By I.H., $\i{defuz}(I) \models G$ and $\i{defuz}(I) \models H$. It follows that $\i{defuz}(I) \models G \land H = F$. (ii) Suppose $\u_I(F^{fuzzy}) = min(\u_I(G^{fuzzy}), \u_I(H^{fuzzy})) = 0$. Then $\u_I(G^{fuzzy}) = 0$ or $\u_I(H^{fuzzy}) = 0$. By I.H., $\i{defuz}(I) \not\models G$ or $\i{defuz}(I) \not\models H$. It follows that $\i{defuz}(I) \not\models G \land H = F$.

\item $F$ is $G \lor H$. Then $F^{fuzzy}$ is $G^{fuzzy} \for_m H^{fuzzy}$. (i) Suppose $I \models_1 F^{fuzzy}$, and as the disjunction is defined as $\for_m(x, y) = max(x, y)$, $\u_I(G^{fuzzy}) = 1$ or $\u_I(H^{fuzzy}) = 1$, i.e., $I \models_1 G^{fuzzy}$ or $I \models_1 H^{fuzzy}$. By I.H., $\i{defuz}(I) \models G$ or $\i{defuz}(I) \models H$. It follows that $\i{defuz}(I) \models G \lor H = F$. (ii) Suppose $\u_I(F^{fuzzy}) = max(\u_I(G^{fuzzy}), \u_I(H^{fuzzy})) = 0$. Then $\u_I(G^{fuzzy}) = 0$ and $\u_I(H^{fuzzy}) = 0$. By I.H., $\i{defuz}(I) \not\models G$ and $\i{defuz}(I) \not\models H$. It follows that $\i{defuz}(I) \not\models G \lor H = F$.

\item $F$ is $G \rar H$. Then $F^{fuzzy}$ is $G^{fuzzy} \rar_s H^{fuzzy}$. (i) Suppose $I \models_1 F^{fuzzy}$. We have $\u_I(F^{fuzzy}) = max(1-\u_I(G^{fuzzy}), \u_I(H^{fuzzy})) = 1$, so that $\u_I(G^{fuzzy}) = 0$ or $\u_I(H^{fuzzy}) = 1$.\footnote{This does not hold for an arbitrary choice of implication. For example, consider $\rar_l(x, y)=min(1-x+y, 1)$, then from $I \models_1 G \rar_l H$, we can only conclude $\u_I(H) \geq \u_I(G)$. Furthermore, under this choice of implication, the modified statement of the lemma does not hold. A counterexample is $F=\neg_s p \rar_l q$, $I = \left\{(p, 0.5), (q, 0.6)\right\}$. Clearly, $I \models_1 F^{fuzzy}$ but $\i{defuz}(I)=\emptyset \not\models F$.} By I.H., $\i{defuz}(I) \not\models G$ or $\i{defuz}(I) \models H$. It follows that $\i{defuz}(I) \models G \rar H = F$. \\
(ii) Suppose $\u_I(F^{fuzzy})=max\left\{1-\u_I(G^{fuzzy}), \u_I(H^{fuzzy})\right\} = 0$. Then $\u_I(G^{fuzzy}) = 1$ and $\u_I(H^{fuzzy}) = 0$. By I.H., $\i{defuz}(I) \models G$ and $\i{defuz}(I) \not\models H$. Therefore $\i{defuz}(I) \not\models G \rar H$.
\ei
\qed
\end{proof}

\begin{lemma}\label{lem:true-eq-1}\optional{lem:true-eq-1}
For any classical interpretation $I$ and any classical propositional formula~$F$, $I\models F$ if and only if $I^{fuzzy}\models F^{fuzzy}$.
\end{lemma}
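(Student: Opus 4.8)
The plan is to reduce the statement to Lemma~\ref{lem:df-models} by exploiting the fact that $I^{fuzzy}$ takes only the two extreme truth values $0$ and $1$. The strategy has three ingredients: an auxiliary two-valuedness claim, the identity $\i{defuz}(I^{fuzzy})=I$, and then a direct appeal to the two halves of Lemma~\ref{lem:df-models}.

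First I would establish the auxiliary claim that for any classical interpretation $I$ and any classical formula $F$, the value $\u_{I^{fuzzy}}(F^{fuzzy})$ lies in $\{0,1\}$. This follows by a routine induction on $F$: the atoms and the constants $0,1$ receive values in $\{0,1\}$ by definition of $I^{fuzzy}$ and of the translation, and each fuzzy operator chosen in the translation restricts to the corresponding Boolean connective on the two-element set $\{0,1\}$. Indeed, $\neg_s$ swaps $0$ and $1$; $\fand_m=\min$ and $\for_m=\max$ compute Boolean conjunction and disjunction there; and $\rar_s(x,y)=max(1-x,y)$ computes Boolean implication on $\{0,1\}$. Hence the value of $F^{fuzzy}$ under $I^{fuzzy}$ never leaves $\{0,1\}$.

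Next I would record the identity $\i{defuz}(I^{fuzzy})=I$: by definition $\i{defuz}(I^{fuzzy})$ collects exactly the atoms $p$ with $I^{fuzzy}(p)=1$, which are precisely the atoms $p$ with $I(p)=\true$, i.e.\ the set identified with $I$. With these two facts in hand, both directions fall out of Lemma~\ref{lem:df-models}. For the ``if'' direction, assume $I^{fuzzy}\models F^{fuzzy}$, i.e.\ $\u_{I^{fuzzy}}(F^{fuzzy})=1$; then Lemma~\ref{lem:df-models}(i) gives $\i{defuz}(I^{fuzzy})=I\models F$. For the ``only if'' direction, assume $I\models F$ and suppose toward a contradiction that $I^{fuzzy}\not\models F^{fuzzy}$; by the auxiliary claim $\u_{I^{fuzzy}}(F^{fuzzy})=0$, so Lemma~\ref{lem:df-models}(ii) yields $\i{defuz}(I^{fuzzy})=I\not\models F$, contradicting the assumption.

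I do not expect a genuine obstacle, since Lemma~\ref{lem:df-models} already carries the inductive burden; the present lemma is essentially its specialization to the Boolean-valued interpretations $I^{fuzzy}$. The one step that must be checked with care is the auxiliary two-valuedness claim, and in particular the implication case, where one verifies that $\rar_s$ behaves classically on $\{0,1\}$. This is exactly the place where the specific choice of the S-implicator $\rar_s$ (rather than, say, $\rar_l$) is essential, as the footnote to Lemma~\ref{lem:df-models} already flags.
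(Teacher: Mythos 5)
Your proof is correct, but it takes a genuinely different route from the paper's. The paper proves Lemma~\ref{lem:true-eq-1} by a direct, self-contained induction on $F$: since $I^{fuzzy}$ assigns only $0$ and $1$, and each connective substituted in $F^{fuzzy}$ agrees with its Boolean counterpart on $\{0,1\}$, the biconditional propagates through every case. You instead derive the lemma from Lemma~\ref{lem:df-models}, adding a two-valuedness induction and the identity $\i{defuz}(I^{fuzzy})=I$; this is legitimate --- Lemma~\ref{lem:df-models} is proved earlier and independently, so there is no circularity --- and it offloads the heavy case analysis onto an already-proved result, leaving you only the routine closure argument. What the reduction costs is generality: the direct induction works verbatim when the connectives are replaced by \emph{arbitrary} fuzzy operators, because the paper's axioms ($\rar\!\!(1,x)=x$, $\rar\!\!(0,0)=1$, monotonicity, and similarly for $\fneg$, $\fand$, $\for$) force every fuzzy operator to restrict to the corresponding Boolean connective on $\{0,1\}$. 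The paper needs exactly this stronger form later: the proof of Theorem~\ref{thm:fuzzy-cl-sm} is declared to be ``the same as the right-to-left direction'' of the proof of Theorem~\ref{thm:cl-fuzzy-sm}, which invokes this lemma for an arbitrary selection of operators, whereas Lemma~\ref{lem:df-models} is genuinely tied to the particular operators $\neg_s,\fand_m,\for_m,\rar_s$. So your proof establishes the lemma as stated, but not the operator-independent version the paper implicitly relies on.

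One side remark in your write-up is inaccurate: the choice of $\rar_s$ over $\rar_l$ is \emph{not} what makes your auxiliary two-valuedness claim work. On $\{0,1\}$ one has $\rar_l\!\!(1,0)=0$ and $\rar_l\!\!(0,0)=\rar_l\!\!(0,1)=\rar_l\!\!(1,1)=1$, so $\rar_l$ also behaves classically there; indeed, every fuzzy implication satisfying the paper's axioms does. The footnote to Lemma~\ref{lem:df-models} concerns interpretations with intermediate truth values (its counterexample uses $I=\{(p,0.5),(q,0.6)\}$), a situation your argument never encounters since you only ever evaluate under $I^{fuzzy}$. The specific operators matter to your proof only because Lemma~\ref{lem:df-models} itself is stated and proved for them, not because of anything in the implication case of your induction.
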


\begin{proof}
By induction on $F$.
\qed
\end{proof}

\BOC
\begin{proof}
We prove by induction on $F$. Note that the way by which $I^{fuzzy}$ is constructed guarantees that atoms in $F^{fuzzy}$ can only be assigned $0$ or $1$ in $I^{fuzzy}$.
\begin{itemize}
\item if $F$ is an atom $p$, according to the construction of $I^{fuzzy}$, $\u_I(F)=\u_I(p)=\textbf{t}\iff p^{I^{fuzzy}}=(F^{fuzzy})^{I^{fuzzy}}=1$;
\item if $F$ is $\bot$, the two directions are both trivially true; if $F$ is $\top$, $\u_I(F)=\textbf{true}$ and $(F^{fuzzy})^{I^{fuzzy}}=1$;
\item if $F$ is $\neg G$, then $F^{fuzzy} = \neg G^{fuzzy}$. By I.H., $\u_I(G)=\textbf{true}\iff (G^{fuzzy})^{I^{fuzzy}}=1$, which says the same as $\u_I(G)=\textbf{false}\iff (G^{fuzzy})^{I^{fuzzy}} \neq 1$. Since atoms can only be assigned $0$ or $1$ in $I^{fuzzy}$, by Lemma \ref{lem_crisp_assignment}, $(G^{fuzzy})^{I^{fuzzy}} \in \left\{0, 1\right\}$. It follows that $\u_I(G)=\textbf{false}\iff (G^{fuzzy})^{I^{fuzzy}} = 0$. So $\u_I(F) = \textbf{true}$ $\iff$ $\u_I(G)=\textbf{false}$ $\iff$ $(G^{fuzzy})^{I^{fuzzy}} = 0$ $\iff$ $(F^{fuzzy})^{I^{fuzzy}}=1$;
\item if $F$ is $G\odot H$, then $F^{fuzzy}=G^{fuzzy}\odot^\prime H^{fuzzy}$, where $\odot^\prime$ is the corresponding fuzzy operator of $\odot$. By I.H., $\u_I(G)=\textbf{true}\iff (G^{fuzzy})^{I^{fuzzy}}=1$ and $\u_I(H)=\textbf{true}\iff (H^{fuzzy})^{I^{fuzzy}}=1$. Consequently, $\u_I(F) = \textbf{true}$ $\iff$ $\odot(\u_I(G), \u_I(H)) = \textbf{true}$. As atoms can only be assigned $0$ or $1$ in $I^{fuzzy}$, by Lemma \ref{lem_crisp_assignment}, $(G^{fuzzy})^{I^{fuzzy}}, (H^{fuzzy})^{I^{fuzzy}} \in \left\{0, 1\right\}$. As any valid fuzzy operator must be proper generalization of its corresponding classical operator, $\odot(\u_I(G), \u_I(H)) = \textbf{true}$ $\iff$ $\odot^\prime((G^{fuzzy})^{I^{fuzzy}}, (H^{fuzzy})^{I^{fuzzy}}) = 1$.
\end{itemize}
\qed
\end{proof}
\EOC

\bigskip
\noindent{\bf Theorem~\ref{thm:cl-fuzzy-sm}.\optional{thm:cl-fuzzy-sm}}\ \ 
{\sl
For any classical propositional formula $F$ and any classical propositional interpretation $I$, $I$ is a stable model of $F$ relative to ${\bf p}$ iff $I^\mi{fuzzy}$ is a stable model of $F^\mi{fuzzy}$ relative to ${\bf p}$.
}
\medskip

\begin{proof}
($\Rightarrow$) Suppose $I$ is a stable model of~$F$ relative to~${\bf p}$. From the fact that $I\models F$, by Lemma \ref{lem:true-eq-1}, 
$I^{fuzzy}\models F^{fuzzy}$. 

Next we show that there is no fuzzy interpretation $J <^{\bf p} I^{fuzzy}$ such that $J\models (F^{fuzzy})^I$. Suppose, for the sake of contradiction, that there exists such $J$. Since 
$J\models (F^{fuzzy})^I$, or equivalently, $J\models (F^I)^{fuzzy}$, by Lemma~\ref{lem:df-models}, we get $\i{defuz}(J)\models F^I$. 

Since $J<^{\bf p} I^{fuzzy}$, $J$ and $I^{fuzzy}$ agree on all atoms not in ${\bf p}$. Since $I^{fuzzy}$ assigns either $0$ or $1$ to each atom, it follows that $J$, as well as $\i{defuz}(J)$, assigns the same truth values as $I^{fuzzy}$ to atoms not in ${\bf p}$.
The construction of $\i{defuz}(J)$ guarantees that $\i{defuz}(J)^{fuzzy} \leq^{\bf p} J <^{\bf p} I^{fuzzy}$. Since both $\i{defuz}(J)^{fuzzy}$ and $I^{fuzzy}$ assigns either $0$ or $1$ to each atom, there is at least one atom $p \in {\bf p}$ such that $\u_{{defuz}(J)^{fuzzy}}(p) = 0$ and $\u_{I^{fuzzy}}(p) = 1$, and consequently ${defuz}(J)(p) = \false$ and $I(p) = \true$. 
So $\i{defuz}(J) <^{\bf p} I$, and together with the fact that $\i{defuz}(J)\models F^I$, it follows that $I$ is not a stable model of $F$, which is a contradiction. Thus, there is no such $J$, from which we conclude that $I^{fuzzy}$ is a stable model of $F^{fuzzy}$ relative to ${\bf p}$.
%\footnote{This direction does not hold for arbitrary choice of operators, because Lemma \ref{lem:df-models} does not hold for arbitrary choice of operators.}

\medskip\noindent
($\Leftarrow$) Suppose $I^{fuzzy}$ is a stable model of a fuzzy formula $F^{fuzzy}$ relative to ${\bf p}$. Then $I^{fuzzy} \models F^{fuzzy}$. By Lemma \ref{lem:true-eq-1}, $I\models F$. 

Next we show there is no $J<^{\bf p} I$ such that $J\models F^I$. Suppose, for the sake of contradiction, that there exists such $J$. Then by Lemma~\ref{lem:true-eq-1}, $J^{fuzzy}\models (F^I)^{fuzzy}$, and obviously $J^{fuzzy} <^{\bf p} I^{fuzzy}$. It follows that $I^{fuzzy}$ is not a stable model of $F^{fuzzy}$ relative to ${\bf p}$, which is a contradiction. So there is no such $J$, from which we conclude that $I$ is a stable model of $F$ relative to ${\bf p}$.
\qed
\end{proof}

\bigskip
The proof of Theorem~\ref{thm:fuzzy-cl-sm} is the same as the right-to-left direction of the proof of Theorem~\ref{thm:cl-fuzzy-sm}. Notice that the left-to-right direction of the proof of Theorem~\ref{thm:cl-fuzzy-sm} relies on Lemma~\ref{lem:df-models}, which assumes the particular selection of fuzzy operators, so this direction does not apply to the setting of Theorem~\ref{thm:fuzzy-cl-sm}.

\BOC
%---------------------------------------------------------------
\subsection{Proof of Theorem \ref{thm:fuzzy-cl-sm}}
%---------------------------------------------------------------

\noindent{\bf Theorem~\ref{thm:fuzzy-cl-sm}.\optional{thm:fuzzy-cl-sm}}\ \ 
{\sl
For any classical propositional formula $F$, let $F_1^\mi{fuzzy}$ be the fuzzy formula obtained from $F$ by replacing $\bot$ with ${0}$,  $\top$ with ${1}$, $\neg$ with any fuzzy negation symbol, $\land$ with any fuzzy conjunction symbol, $\lor$ with any fuzzy disjunction symbol, and $\rar$ with any fuzzy implication symbol. For any classical propositional interpretation $I$, if $I^\mi{fuzzy}$ is a fuzzy stable model of $F_1^\mi{fuzzy}$ relative to ${\bf p}$, then $I$ is a Boolean stable model of $F$ relative to ${\bf p}$.
}

\vspace{3 mm}

\begin{proof}
Suppose $I^{fuzzy}$ is a $1$-stable model of a fuzzy formula $F_1^{fuzzy}$ relative to ${\bf p}$. Then $I^{fuzzy} \models_1 F_1^{fuzzy}$. By Lemma \ref{lem:true-eq-1}, $I \models F$. Next we show there does not exist $J <^{\bf p} I$ such that $J \models F_1^{{I}}$. Suppose, to the contrary, that there exists such $J$. Then by Lemma \ref{lem:true-eq-1}, $J^{fuzzy} \models_1 (F_1^{{I}})^{fuzzy}$, and obviously $J^{fuzzy} <^{\bf p} I^{fuzzy}$. It follows that $I^{fuzzy}$ cannot be a stable model of $F_1^{fuzzy}$, which is a contradiction. So there does not exist such $J$. Therefore, $I$ is a stable model of $F$ relative to ${\bf p}$.
\qed
\end{proof}
\EOC

%---------------------------------------------------------------
\subsection{Proof of Theorem~\ref{thm:normal-fuzzysm}\optional{thm:normal-fuzzysm}}
%---------------------------------------------------------------

\noindent{\bf Theorem~\ref{thm:normal-fuzzysm}.\optional{thm:normal-fuzzysm}}\ \ 
{\sl
For any normal FASP program $\Pi=\{r_1, \dots, r_n\}$, let $F$ be the fuzzy formula $r_1\fand\dots\fand r_n$, where $\fand$ is any fuzzy conjunction. An interpretation $I$ is a fuzzy answer set of $\Pi$ in the sense of \cite{lukasiewicz06fuzzy} if and only if $I$ is a stable model of $F$. 
}
\medskip

\begin{proof}
By $\Pi^\mu{I}$ we denote the reduct of $\Pi$ relative to $I$ as defined in~\cite{lukasiewicz06fuzzy}, which is also reviewed in Section~\ref{ssec:normal-fasp}.

\medskip\noindent
($\Rightarrow$) Suppose $I$ is a fuzzy answer set of $\Pi$. By definition, $I \models\Pi$, and thus $\u_I(r_i)=1$ for all $r_i\in \Pi$. So 
\[
  \u_I(F) = \u_I(r_1\fand\dots\fand r_n) = 1,
\]
i.e., $I\models F$. 

Next we show that there is no $J <^\sigma I$ such that $J\models F^I$, where $\sigma$ is the underlying signature. For each 
$r_i =   a\ \ar_r\ b_1 \fand \dots \fand b_m \fand \neg b_{m+1}\fand \dots \fand \neg b_n$,
\[ 
r^{{I}}_i = \u_I(r_i) \fand_m (a \leftarrow_r b_1 \fand \dots \fand b_m \fand \u_I(\neg_s b_{m+1}) \fand \dots \fand \u_I(\neg_s b_n)).
\]
Suppose, for the sake of contradiction, that there exists an interpretation $J<^\sigma I$ such that $J \models F^I$. Then, for all $r_i \in \Pi$, $J \models r^{{I}}_i$, i.e.,
\begin{equation*}
J \models \u_I(r_i) \fand_m (a \leftarrow_r b_1 \fand \dots \fand b_m \fand \u_I(\neg_s b_{m+1}) \fand \dots \fand \u_I(\neg_s b_n)).
\end{equation*}
It follows that
\begin{equation*}
  J \models a \leftarrow_r b_1 \fand \dots \fand b_m \fand \u_I(\neg_s b_{m+1}) \fand \dots \fand \u_I(\neg_s b_n).
\end{equation*}
So $J \models \Pi^\mu{I}$. Together with the fact $J <^\sigma I$, this contradicts that $I$ is a fuzzy answer set of $\Pi$. So there is no such $J$, from which we conclue that $I$ is a stable model of~$F$. 

\medskip\noindent
($\Leftarrow$) Suppose $I$ is a stable model of $F$. From the fact that $I\models F$, it follows that $\u_I(r_i)=1$ for all $r_i\in \Pi$. Thus $I\models\Pi$.

Next we show that there is no $J <^\sigma I$ such that $J\models \Pi^\mu{I}$,  where $\sigma$ is the underlying signature. The reduct $\Pi^\mu{I}$ contains the following rule for each original rule $r_i \in \Pi$:
\[
a \leftarrow b_1 \fand \dots \fand b_m \fand \u_I(\neg b_{m+1}) \fand \dots \fand \u_I(\neg b_n).
\]
Suppose, for the sake of contradiction, that there exists $J <^\sigma I$ such that $J \models \Pi^\mu{I}$. Then for each rule $r_i \in \Pi$,
\[ 
  J \models a \leftarrow_r b_1 \fand \dots \fand b_m \fand \u_I(\neg_s b_{m+1}) \fand \dots \fand \u_I(\neg_s b_n).
\]
As $I\models\Pi$, for all $r_i\in\Pi$, $\u_I(r_i)=1$, so
\[
   J\models \u_I(r_i) \fand_m (a \leftarrow_r b_1 \fand \dots \fand b_m \fand \neg_s \u_I(b_{m+1}) \fand \dots \fand \u_I(\neg_s b_n))
\]
or equivalently, $J\models r_i^I$ for each $r_i\in\Pi$. 
Therefore, $J \models (r_1\fand\dots\fand r_n)^I$, i.e., $J\models F^I$. Together with the fact $J <^\sigma I$, this contradicts that $I$ is a stable model of $F$. So there is no such $J$, from which we conclude that $I$ is a fuzzy answer set of $\Pi$.
\qed
\end{proof}

%---------------------------------------------------------------
\subsection{Proof of Theorem \ref{thm:equil-sm-nostrneg}\optional{thm:equil-sm-nostrneg}}
%---------------------------------------------------------------

Let $\sigma$ be a signature, and let $F$ be a fuzzy formula of signature $\sigma$ that does not contain strong negation.

For any two interpretations $I$ and $J$ such that $J\le^{\sigma} I$, we define the fuzzy N5 valuation $\V_{J,I}$ as follows. For every atom $p\in\sigma$,
\begin{itemize}
\item  $\V_{J, I}(h, p) = \left[\u_J(p), 1\right]$, and
\item  $\V_{J, I}(t, p) = \left[\u_I(p), 1\right]$.
\end{itemize}
It can be seen that $\V_{J,I}$ is always a valid valuation as long as $J \leq^{\sigma} I$. Clearly, $\I_{\V_{I, I}} = I$ for any interpretation $I$.

\begin{lemma}\label{lem:model-tw}\optional{lem:model-tw}
For any interpretations $I$ and $J$ such that $J\le^{\sigma} I$, it holds that $v_I(F) = \V_{J,I}^-(t, F)$.
\end{lemma}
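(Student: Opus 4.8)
The plan is to prove the identity by structural induction on $F$, exploiting the fact that the defining clauses of the fuzzy N5 semantics for the \emph{lower bound of the $t$-component} are self-contained: in every relevant case, $\V_{J,I}^-(t,\cdot)$ of a compound formula is expressed solely in terms of $\V_{J,I}^-(t,\cdot)$ of its immediate subformulas, never referring to an upper bound or to the $h$-component. Consequently $\V_{J,I}^-(t,\cdot)$ obeys exactly the same recursion as $\u_I(\cdot)$, and in particular it does not depend on $J$ at all -- it is a faithful copy of the truth-value evaluation under $I$. Recall that throughout this subsection $\neg$ is assumed to be $\neg_s$ and $F$ contains no strong negation, so the clause for $\sneg a$ is never invoked.

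First I would dispatch the base cases. For an atom $p$, the definition of $\V_{J,I}$ gives $\V_{J,I}^-(t,p)=\u_I(p)$ directly, and for a numeric constant $c$ the clause $V(t,c)=[c,c]$ gives $\V_{J,I}^-(t,c)=c=\u_I(c)$. For conjunction and disjunction, the interval clauses state that the lower bound of $V(t,F\odot G)$ is $\V_{J,I}^-(t,F)\odot\V_{J,I}^-(t,G)$ for $\odot\in\{\fand,\for\}$; applying the induction hypothesis to $F$ and $G$ rewrites this as $\u_I(F)\odot\u_I(G)=\u_I(F\odot G)$, as required.

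The two cases deserving a moment's attention are negation and implication, since there the $h$- and $t$-clauses of the N5 semantics are asymmetric. For $\neg F$, the clause $V(t,\neg F)=[1-V^-(t,F),\,1-V^-(t,F)]$ yields $\V_{J,I}^-(t,\neg F)=1-\V_{J,I}^-(t,F)$, which by the induction hypothesis equals $1-\u_I(F)=\neg_s(\u_I(F))=\u_I(\neg F)$, using $\neg=\neg_s$. For $F\rar G$, the clause $V(t,F\rar G)=[V^-(t,F)\rar V^-(t,G),\ \dots]$ gives $\V_{J,I}^-(t,F\rar G)=\V_{J,I}^-(t,F)\rar\V_{J,I}^-(t,G)$, and the induction hypothesis turns this into $\u_I(F)\rar\u_I(G)=\u_I(F\rar G)$.

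The one point to keep an eye on is precisely this last observation for $\rar$ and $\neg$: one must confirm that the $t$-component lower bound in these asymmetric clauses never reaches into the $h$-component or an upper bound, so that the induction closes within the single quantity $\V_{J,I}^-(t,\cdot)$. Since the displayed clauses make this transparent, no genuine difficulty arises. The hypothesis $J\le^\sigma I$ is used only to guarantee that $\V_{J,I}$ is a legitimate valuation (so that $V(t,a)\subseteq V(h,a)$ holds and the clauses apply), and plays no role in the value of $\V_{J,I}^-(t,F)$ itself.
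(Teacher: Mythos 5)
Your proposal is correct and follows essentially the same argument as the paper: a structural induction on $F$ showing that $\V_{J,I}^-(t,\cdot)$ satisfies the same recursion as $\u_I(\cdot)$, case by case over atoms, constants, negation, and the binary connectives. Your explicit remarks---that the $t$-component lower-bound clauses never refer to $h$-components or upper bounds, and that $J\le^\sigma I$ serves only to make $\V_{J,I}$ a valid valuation---are left implicit in the paper's proof but are exactly the reasons it goes through.
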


\begin{proof}
By induction on $F$.
 
\begin{itemize}
\item  $F$ is an atom $p$. Then $\u_I(F) = \u_I(p) = \V_{J, I}^-(t, p) = \V_{J, I}^-(t, F)$.

\item $F$ is a numeric constant $c$. Then $\u_I(F) = c = \V_{J, I}^-(t, c) = \V_{J, I}^-(t, F)$.

\item $F$ is $\neg G$. Then $\u_I(F) = \neg(\u_I(G))$. By I.H., $\u_I(F) = \neg(\V_{J, I}^-(t, G)) = \V_{J, I}^-(t, F)$.

\item $F$ is $G\odot H$, where $\odot$ is $\fand$, $\for$ or $\rar$. Then $\u_I(F) = \u_I(G)\odot \u_I(H)$. 
By I.H., $\u_I(F)= \V_{J, I}^-(t, G)\odot \V_{J, I}^-(t, H)  = \V_{J, I}^-(t, F)$.
\end{itemize}
\qed
\end{proof}

The following is a corollary to Lemma \ref{lem:model-tw}.

\begin{cor}\label{model_eq}\optional{model-eq}
$I \models F$ if and only if $\V_{I, I}$ is a model of $F$.
\end{cor}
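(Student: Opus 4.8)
The plan is to reduce the statement to Lemma~\ref{lem:model-tw} by exploiting the fact that, when $J=I$, the valuation $\V_{I,I}$ has coinciding $h$- and $t$-components on every atom: both $\V_{I,I}(h,p)$ and $\V_{I,I}(t,p)$ equal $[\u_I(p),1]$. By definition, $\V_{I,I}$ is a (fuzzy N5) model of $F$ exactly when $\V_{I,I}^-(h,F)=1$, whereas Lemma~\ref{lem:model-tw}, specialized to $J=I$, gives $\u_I(F)=\V_{I,I}^-(t,F)$. Thus the whole corollary comes down to connecting the lower bound of the $h$-component with that of the $t$-component.

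Concretely, I would first prove, by induction on $F$, the auxiliary claim
\[
  \V_{I,I}^-(h,F)=\V_{I,I}^-(t,F)
\]
for every strong-negation-free formula $F$. The base cases hold because the two components of $\V_{I,I}$ literally agree on atoms and are $[c,c]$ on a numeric constant $c$. For $\fand$ and $\for$, whose lower bounds are computed componentwise from the lower bounds of the immediate subformulas, the equality propagates directly from the induction hypothesis. The negation case is immediate, since both $\V_{I,I}^-(h,\neg G)$ and $\V_{I,I}^-(t,\neg G)$ are by definition equal to $1-\V_{I,I}^-(t,G)$, irrespective of the hypothesis. The implication case is handled by observing that $\V_{I,I}^-(t,F\rar G)=\V_{I,I}^-(t,F)\rar\V_{I,I}^-(t,G)$, while $\V_{I,I}^-(h,F\rar G)$ is the minimum of two terms, one of which is exactly $\V_{I,I}^-(t,F)\rar\V_{I,I}^-(t,G)$ and the other of which, by the induction hypothesis applied to $F$ and $G$, equals it; hence the minimum collapses to the $t$-value.

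With the auxiliary claim in hand the corollary is a one-line chain of equivalences: $\V_{I,I}$ is a model of $F$ iff $\V_{I,I}^-(h,F)=1$ iff, by the claim, $\V_{I,I}^-(t,F)=1$ iff, by Lemma~\ref{lem:model-tw} with $J=I$, $\u_I(F)=1$ iff $I\models F$. (Alternatively, one could prove $\V_{I,I}^-(h,F)=\u_I(F)$ by a direct induction mirroring that of Lemma~\ref{lem:model-tw}, but since that lemma already supplies the $t$-component it is more economical to bridge $h$ and $t$.)

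I expect the only nonroutine point to be the implication case of the induction, since the definition of $\V(h,F\rar G)$ mixes the $h$- and $t$-components through a minimum, so one must verify that setting $J=I$ really does make the two arguments of that minimum coincide; the negation clause is the other place where the definitions cross between components, but there the two lower bounds are visibly identical. Everything else is bookkeeping paralleling the induction already carried out for Lemma~\ref{lem:model-tw}.
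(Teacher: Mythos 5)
Your proposal is correct and matches the paper's own proof: the paper likewise invokes Lemma~\ref{lem:model-tw} with $J=I$ to get $\u_I(F)=\V_{I,I}^-(t,F)$ and then notes that $\V_{I,I}^-(h,F)=\V_{I,I}^-(t,F)$ follows by induction on $F$, which is exactly the auxiliary claim you prove. Your fleshed-out induction (in particular the implication case, where the minimum in the definition of $\V^-(h,F\rar G)$ collapses because its two arguments coincide under the induction hypothesis) correctly supplies the details the paper leaves implicit.
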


\begin{proof}
By Lemma \ref{lem:model-tw}, $\u_I(F) = \V_{I, I}^-(t, F)$. Furthermore, 
$\V_{I, I}^-(h, F) = \V_{I, I}^-(t, F)$  can be proven by induction.
\qed
\end{proof}

\begin{lemma}\label{lem:hlb-star}\optional{lem:hlb-star}
For any interpretations $I$ and $J$ such that $J\le^{\sigma} I$, it holds that $\u_J(F^I) = \V_{J,I}^-(h, F)$.
\end{lemma}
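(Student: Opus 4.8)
The plan is to prove the identity by structural induction on $F$, mirroring the proof of Lemma~\ref{lem:model-tw}. Throughout I use that, by construction, $\V_{J,I}^-(h,p)=\u_J(p)$ and $\V_{J,I}^-(t,p)=\u_I(p)$ for every atom $p$; that Lemma~\ref{lem:model-tw} already gives $\V_{J,I}^-(t,G)=\u_I(G)$ for every subformula $G$; and that in this section $\neg$ is fixed to be $\neg_s$, so $\neg(x)=1-x$.

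The base cases are immediate: when $F$ is an atom $p$ or a numeric constant $c$ we have $F^I=F$, so $\u_J(F^I)$ is $\u_J(p)=\V_{J,I}^-(h,p)$ or $c=\V_{J,I}^-(h,c)$, respectively. For $F=\neg G$ the reduct collapses to the constant $\u_I(\neg G)$, hence $\u_J(F^I)=\u_I(\neg G)=1-\u_I(G)$; on the other side the N5 clause gives $\V_{J,I}^-(h,\neg G)=1-\V_{J,I}^-(t,G)$, which equals $1-\u_I(G)$ by Lemma~\ref{lem:model-tw}. The cases $F=G\fand H$ and $F=G\for H$ use the simplified reduct $(G\odot H)^I=G^I\odot H^I$: evaluating under $J$ and applying the induction hypothesis to $G$ and $H$ yields $\u_J(F^I)=\V_{J,I}^-(h,G)\odot\V_{J,I}^-(h,H)$, which is exactly the lower bound prescribed by the component-wise N5 clauses for $\fand$ and $\for$.

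The implication case is the crux, and it is where the choice of $\fand_m$ in Definition~\ref{def:fuzzy-reduct} pays off. Here $F^I=(G^I\rar H^I)\fand_m \u_I(G\rar H)$, so, since $\fand_m$ is $\min$,
\[
  \u_J(F^I)=\min\bigl(\u_J(G^I)\rar\u_J(H^I),\ \u_I(G)\rar\u_I(H)\bigr).
\]
Applying the induction hypothesis to $G^I$ and $H^I$ rewrites the first argument as $\V_{J,I}^-(h,G)\rar\V_{J,I}^-(h,H)$, and Lemma~\ref{lem:model-tw} rewrites the second as $\V_{J,I}^-(t,G)\rar\V_{J,I}^-(t,H)$. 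The resulting expression is precisely the minimum appearing in the N5 definition of $\V_{J,I}^-(h,G\rar H)$, so the two sides agree. The only point demanding care is this final matching: it works because the reduct conjoins the ``$h$-implication'' $G^I\rar H^I$ with the ``$t$-implication'' $\u_I(G\rar H)$ using $\min$, exactly reproducing the $\min$ of the two implications that the fuzzy N5 semantics takes for the lower bound of an implication at the world~$h$. Had an arbitrary t-norm been used in place of $\fand_m$, this identification would fail, which is the structural reason the lemma depends on $\fand_m$.
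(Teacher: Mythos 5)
Your proof is correct and follows essentially the same route as the paper's: structural induction on $F$, with the base and connective cases handled identically, Lemma~\ref{lem:model-tw} invoked for the $t$-components in the negation and implication cases, and the key observation that the $\fand_m$ in the reduct's implication clause reproduces exactly the $\min$ in the N5 lower bound for implication at world $h$. Your closing remark on why an arbitrary t-norm would break this matching is a nice explicit articulation of a point the paper leaves implicit, but it does not change the argument.
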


\begin{proof}
By induction on $F$.

\begin{itemize}
\item  $F$ is an atom $p$ or a numeric constant. Then $\u_J(F^I) = \u_J(F) = \V_{J, I}^-(h, F)$. %= V_{J, I}^-(h, F)$.

%\item  $F$ is a numeric constant $c$. Then $\u_J(F^I) = c = V_{J, I}^-(h, c) = V_{J, I}^-(h, F)$.

\item  $F$ is $\neg G$. Then $\u_J(F^I) = \neg \u_I(G)$, 
       and by Lemma~\ref{lem:model-tw}, 
       $\neg \u_I(G) = \neg \V_{J,I}^-(t,G) = \V_{J,I}^-(h,F)$.
%       $\u_J(\neg G) = \neg(V_{J, I}^-(t, G)) = V_{J, I}^-(h, F)$.

%Then $\u_J(F^I) = \neg \u_J(G^I)$. 
%       By Lemma~\ref{lem:model-tw}, 
%       $\u_J(F^{{I}}) = \neg(V_{J, I}^-(t, G)) = V_{J, I}^-(h, F)$.

\item  $F$ is $G\odot H$, where $\odot$ is $\fand$ or $\for$. 
       Then 
\[
\ba {rl}
\u_J(F^{{I}}) = & \u_J(G^{{I}}\odot H^{{I}}) = \u_J(G^{{I}})\odot \u_J(H^{{I}}) \\
= & (\text{by I.H.}) \\
  & \V_{J, I}^-(h, G)\odot \V_{J, I}^-(h, H)  \\
= & \V_{J, I}^-(h, G\odot H) = \V_{J, I}^-(h, F).
\ea
\]

\item  $F$ is $G\rar H$. Then 
\[
\ba {rl}
\u_J(F^I) =& \u_J((G^I\rar H^I)\fand_m \u_I(G\rar H))\\
= & min(\u_J(G^I\rar H^I), \u_I(G\rar H)) \\
= & (\text{by I.H. and Lemma~\ref{lem:model-tw}}) \\
  & min(\V_{J, I}^-(h, G)\rar \V_{J, I}^-(h, H), \V_{J, I}^-(t, G) \rar \V_{J, I}^-(t, H))\\
= & \V_{J, I}^-(h, G\rar H)=\V_{J, I}^-(h, F).
\ea
\]
%$F^*({\bf q})^{I\cup J^{\bf p}_{\bf q}} = ((G^*({\bf q})\rar H^*({\bf q}))\fand_m (G\rar H))^{I\cup J^{\bf p}_{\bf q}} = min\left\{(G^*({\bf q})\rar H^*({\bf q}))^{I\cup J^{\bf p}_{\bf q}}, (G\rar H)^{I\cup J^{\bf p}_{\bf q}}\right\} = min\left\{\rar\!\!(G^*({\bf q})^{I\cup J^{\bf p}_{\bf q}}, H^*({\bf q})^{I\cup J^{\bf p}_{\bf q}}), \rar\!\!(G^{I\cup J^{\bf p}_{\bf q}}, H^{I\cup J^{\bf p}_{\bf q}})\right\}$.
% By I.H. and Lemma \ref{lem:model-tw},
% \[
% \ba {rl}
% \u_J(F^{{I}}) &= min(V_{J, I}^-(h, G)\rar V_{J, I}^-(h, H), V_{J, I}^-(t, G) \rar V_{J, I}^-(t, H))\\
% &= V_{J, I}^-(h, G\rar H)=V_{J, I}^-(h, F).
% \ea
% \]

\end{itemize}
\qed
\end{proof}

The following is a corollary to Lemma \ref{lem:hlb-star}.

\begin{cor}\label{cor:star-valuation}\optional{cor:star-valuation}
For any interpretations $I$ and $J$ such that $J \leq^{\sigma} I$, $J\models F^I$ if and only if $\V_{J, I}$ is a model of $F$.
\end{cor}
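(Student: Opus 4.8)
The plan is to derive the statement immediately from Lemma~\ref{lem:hlb-star}, which has already done all the structural work. The only two ingredients beyond that lemma are the two notions of ``model'' appearing on the two sides of the biconditional, so I would begin by unwinding both of them into statements about a truth value being equal to~$1$.

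On the left, by Definition~\ref{def:fuzzy-m} the assertion $J\models F^I$ is by definition the assertion $\u_J(F^I)=1$. On the right, recall from the review of fuzzy equilibrium logic that a fuzzy N5 valuation $V$ is a model of $F$ exactly when $V^-(h,F)=1$; applied to $\V_{J,I}$ this says that $\V_{J,I}$ is a model of $F$ iff $\V_{J,I}^-(h,F)=1$. Thus the corollary reduces to the single claim that $\u_J(F^I)=1$ iff $\V_{J,I}^-(h,F)=1$.

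Now I would invoke Lemma~\ref{lem:hlb-star}. Since the hypothesis $J\le^\sigma I$ of the corollary is exactly the hypothesis under which the lemma applies (and, as noted just before the lemma, $\V_{J,I}$ is a genuine valuation whenever $J\le^\sigma I$), the lemma supplies the identity $\u_J(F^I)=\V_{J,I}^-(h,F)$. Substituting this equality into the reduced claim makes its two sides literally the same condition, so the biconditional holds at once.

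There is essentially no obstacle here: all of the induction over the structure of $F$ lives inside Lemma~\ref{lem:hlb-star}, and the corollary is a one-line consequence of it. The only points requiring attention are bookkeeping ones---verifying that the hypotheses of the lemma and of the corollary coincide, and matching each ``is a model'' notion to the correct ``truth value equals~$1$'' condition (fuzzy satisfaction of $F^I$ by $J$ on one side, the $h$-lower-bound of $\V_{J,I}$ at $F$ on the other).
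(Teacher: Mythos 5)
Your proposal is correct and follows exactly the paper's route: the paper presents this statement as an immediate corollary of Lemma~\ref{lem:hlb-star}, obtained by unwinding $J\models F^I$ as $\u_J(F^I)=1$ and ``$\V_{J,I}$ is a model of $F$'' as $\V_{J,I}^-(h,F)=1$, then applying the lemma's identity $\u_J(F^I)=\V_{J,I}^-(h,F)$. Nothing is missing.
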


\begin{lemma}\label{lessthan_eq}
For two interpretations $I$ and $J$, it holds that $\V_{J, I} \prec \V_{I, I}$ if and only if $J < I$.
\end{lemma}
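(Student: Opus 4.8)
The plan is to unfold the definition of $\V_{J,I}$ together with the two orderings---$\prec$ on valuations and $<$ on interpretations---and check that they match componentwise. First I would record the explicit interval values for every atom $p\in\sigma$: by construction $\V_{J,I}(h,p)=[\u_J(p),1]$ and $\V_{J,I}(t,p)=[\u_I(p),1]$, while $\V_{I,I}(h,p)=\V_{I,I}(t,p)=[\u_I(p),1]$. (Here I tacitly use that the construction of $\V_{J,I}$ presupposes $J\le^\sigma I$, a point I return to below.)

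The key observation is that the $t$-components of $\V_{J,I}$ and $\V_{I,I}$ are literally identical, both equal to $[\u_I(p),1]$. Hence the clause ``$V'(t,a)=V(t,a)$'' in the definition of $\preceq$ is satisfied automatically, and $\V_{J,I}\preceq\V_{I,I}$ collapses to the single requirement $\V_{I,I}(h,p)\subseteq\V_{J,I}(h,p)$ for all $p$, i.e.\ $[\u_I(p),1]\subseteq[\u_J(p),1]$. Since both intervals share the upper bound $1$ (this is forced by the construction of $\V$, which is the whole point of the strong-negation-free setting), this inclusion holds precisely when $\u_J(p)\le\u_I(p)$, which is exactly $J\le^\sigma I$.

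For strictness I would argue that $\V_{J,I}\ne\V_{I,I}$ reduces to $J\ne I$. Because the two valuations always agree on every $t$-component, they differ if and only if some $h$-component differs, and $[\u_J(p),1]\ne[\u_I(p),1]$ exactly when $\u_J(p)\ne\u_I(p)$. Combining this with the previous paragraph yields $\V_{J,I}\prec\V_{I,I}$ iff $J\le^\sigma I$ and $J\ne I$, that is, iff $J<I$, establishing both directions at once.

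Since the entire argument is definition-chasing, I anticipate no genuine obstacle. The only subtlety worth flagging is definedness: $\V_{J,I}$ is meaningful only when $J\le^\sigma I$, so in the left-to-right direction one should extract $J\le^\sigma I$ from the $\preceq$ part first, keeping the argument honest. The substantive feature driving the clean correspondence is that every upper bound is pinned to $1$, which makes the interval inclusion degenerate into a comparison of lower bounds alone.
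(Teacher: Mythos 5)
Your proof is correct and follows essentially the same route as the paper's: both unfold the definition of $\V_{J,I}$, use that the $t$-components of $\V_{J,I}$ and $\V_{I,I}$ coincide and that all upper bounds equal $1$, so the interval inclusion $\V_{I,I}(h,p)\subseteq\V_{J,I}(h,p)$ degenerates to the lower-bound comparison $\u_J(p)\le\u_I(p)$, with strictness handled by a single differing atom. Your additional remark about extracting $J\le^\sigma I$ to justify that $\V_{J,I}$ is a valid valuation is a reasonable bit of bookkeeping that the paper leaves implicit.
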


\begin{proof}
($\Rightarrow$) Suppose $\V_{J, I}\prec \V_{I, I}$. For every atom $a$, $\V_{I, I}(h, a) \subseteq \V_{J, I}(h, a)$, which means $\V_{J, I}^-(h, a) \leq \V_{I, I}^-(h, a)$. So $J \leq I$. 
Furthermore, there is at least one atom $p$ satisfying $\V_{I, I}(h, p) \subset \V_{J, I}(h, p)$. By the definition of $\V_{I, I}$ and $\V_{J, I}$, $\V_{I, I}^+(h, a)=\V_{J, I}^+(h, a) = 1$ for all $a$. Consequently, $\V_{J, I}^-(h, p)<\V_{I, I}^-(h, p)$, which means $\u_J(p) < \u_I(p)$. So $J < I$.

($\Leftarrow$) Suppose $J < I$. Then for every atom $a$, $\u_J(a) \leq \u_I(a)$. It follows that $\V_{J, I}^-(h, a) \leq \V_{I, I}^-(h, a)$ for all $a$, and as $\V_{I, I}^+(h, a)=\V_{J, I}^+(h, a) = 1$ for all $a$, we conclude $\V_{I, I}(h, a) \subseteq \V_{J, I}(h, a)$. 
Clearly, $\V_{I,I}(t,a) = \V_{J,I}(t,a)$ by definition.
Furthermore there is at least one atom $p$ such that $\u_J(p) < \u_I(p)$, i.e., $\V_{J, I}^-(h, a) < \V_{I, I}^-(h, a)$. So $\V_{J, I} \prec \V_{I, I}$.
\qed
\end{proof}

\begin{lemma}\label{lem:sm-to-equil}\optional{lem:sm-to-equil}
An interpretation $I$ is a stable model of $F$ if and only if $\V_{I, I}$ is a fuzzy equilibrium model of~$F$.
\end{lemma}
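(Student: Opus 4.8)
The plan is to prove both directions by pushing everything through the correspondence lemmas already established for the valuations $\V_{J,I}$, so that the fuzzy stable model condition and the fuzzy equilibrium condition become mere reformulations of one another. First I would dispose of the two easy ingredients uniformly. By Corollary~\ref{model_eq}, $I\models F$ holds iff $\V_{I,I}$ is a fuzzy N5 model of $F$, so the satisfaction halves of the two definitions coincide. Moreover, the extra equilibrium requirement $V(h,a)=V(t,a)$ is automatic for $V=\V_{I,I}$, since $\V_{I,I}(h,a)=[\u_I(a),1]=\V_{I,I}(t,a)$ for every atom $a$ by definition. Thus the entire content reduces to matching the nonexistence of a stability witness $J<I$ satisfying $F^I$ (Definition~\ref{def:fuzzy-sm}) with the h-minimality of $\V_{I,I}$.

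For the ($\Leftarrow$) direction I would argue contrapositively on the witness. Assuming $\V_{I,I}$ is an equilibrium model, suppose for contradiction some $J<I$ satisfies $F^I$. By Corollary~\ref{cor:star-valuation} this makes $\V_{J,I}$ a model of $F$, and by Lemma~\ref{lessthan_eq} the relation $J<I$ yields $\V_{J,I}\prec\V_{I,I}$. This is a model of $F$ strictly below $\V_{I,I}$, contradicting h-minimality; hence no such $J$ exists and $I$ is a stable model of $F$. This direction is essentially immediate from the lemmas.

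The ($\Rightarrow$) direction carries the only real difficulty, because h-minimality quantifies over \emph{arbitrary} fuzzy N5 models $V'$, whereas Corollary~\ref{cor:star-valuation} and Lemma~\ref{lessthan_eq} only speak about valuations of the special shape $\V_{J,I}$. The key step I would prove is a normal-form claim: every model $V'$ with $V'\prec\V_{I,I}$ is equal to $\V_{J,I}$ for the interpretation $J$ defined by $\u_J(a)=V'^-(h,a)$. Indeed, $V'\preceq\V_{I,I}$ forces $V'(t,a)=\V_{I,I}(t,a)=[\u_I(a),1]$ and $\V_{I,I}(h,a)=[\u_I(a),1]\subseteq V'(h,a)$; the inclusion pins $V'^+(h,a)=1$ and gives $V'^-(h,a)\le\u_I(a)$, so $J\le^\sigma I$ and $V'=\V_{J,I}$. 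Once this is in hand, a competing model below $\V_{I,I}$ translates, via Lemma~\ref{lessthan_eq}, into a witness $J<I$, and via Corollary~\ref{cor:star-valuation}, into $J\models F^I$; the stability of $I$ rules this out, establishing h-minimality of $\V_{I,I}$.

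The main obstacle is precisely this structural observation that the $\prec$-ordering collapses arbitrary competing N5 models onto the one-parameter family $\V_{J,I}$: the equilibrium partial order fixes the $t$-component and forces all upper bounds to $1$, leaving only the $h$-lower-bounds free, which are exactly the data recorded by an interpretation $J$. Everything beyond this claim is a direct invocation of Corollary~\ref{model_eq}, Corollary~\ref{cor:star-valuation}, and Lemma~\ref{lessthan_eq}.
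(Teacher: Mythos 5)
Your proof is correct and follows essentially the same route as the paper's: both directions reduce satisfaction via Corollary~\ref{model_eq}, the ($\Leftarrow$) direction turns a stability witness $J$ into a competing model $\V_{J,I}\prec\V_{I,I}$ via Corollary~\ref{cor:star-valuation} and Lemma~\ref{lessthan_eq}, and the ($\Rightarrow$) direction collapses an arbitrary competing model $V'\prec\V_{I,I}$ onto the form $\V_{J,I}$ with $\u_J(a)=V'^-(h,a)$. Your only divergence is that you spell out in detail the normal-form claim that the paper dismisses with ``Obviously $V'=\V_{J,I}$,'' which is a welcome clarification rather than a different argument.
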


\begin{proof}
($\Rightarrow$) Suppose $I$ is a stable model of $F$. As $I$ is a model of $F$, by Corollary~\ref{model_eq}, $\V_{I, I}$ is a model of $F$. Next we show that there is no model $V'$ of $F$ such that $V'\prec \V_{I, I}$. 
Suppose, for the sake of contradiction, that there exists such $V^\prime$. Define an interpretation $J$ as $\u_J(a) = V^{\prime -}(h, a)$ for all atoms $a$. Obviously $V^\prime = \V_{J, I}$. As $V^\prime= \V_{J, I}$ is a model of $F$, by Corollary~\ref{cor:star-valuation}, $J \models F^I$. Furthermore, as $V^\prime=\V_{J, I} \prec \V_{I, I}$, by Lemma \ref{lessthan_eq}, $J < I$. Consequently, $I$ is not a $1$-stable model of $F$, which is a contradiction. So there does not exist such $V^\prime$, from which we conclude that $\V_{I, I}$ is an h-minimal model of $F$. Clearly, $\V_{I, I}(h, a) = \V_{I, I}(t, a)$ for all atoms $a$. So $\V_{I, I}$ is an equilibrium model of $F$.

\medskip
($\Leftarrow$) Suppose $\V_{I, I}$ is an equilibrium model of $F$. As $\V_{I, I}$ is a model of $F$, by Corollary~\ref{model_eq}, $I \models F$. 
Next we show that there is no $J<^\sigma I$ such that $J \models F^I$. Suppose, for the sake of contradiction, that there exists such $J$. Then by Corollary~\ref{cor:star-valuation}, the valuation $\V_{J, I}$ is a model of $F$. Furthermore, by Lemma \ref{lessthan_eq}, $\V_{J, I}\prec \V_{I, I}$. Consequently, $\V_{I, I}$ is not an h-minimal model of $F$, which contradicts that $\V_{I, I}$ is an equilibrium model of $F$. So there does not exist such $J$, from which we conclude that $I$ is a stable model of $F$. 
\qed
\end{proof}

%In the following Lemma~\ref{lem:lb-only} and Corollary~\ref{cor:lb_only}, 
%for any valuation $V$, we define valuation $V'$ as $V'^-(w, a)=V^-(w, a)$ and %$V'^+(w, a)=1$ for all atoms $a$, where $w\in\{h, t\}$.

\BOCC
\begin{lemma}\label{lem:lb-only}\optional{lem:lb-only}
Let $F$ be a fuzzy formula containing no strong negation, let $V$, $V'$ be valuations such that $V'^-(w, a)=V^-(w, a)$ and $V'^+(w, a)=1$ for all atoms $a$, where $w\in\{h, t\}$. Then $V$ is a model of $F$ iff $V'$ is a model of $F$.
\end{lemma}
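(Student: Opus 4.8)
The plan is to show that, in the absence of strong negation, the lower bound that a fuzzy N5 valuation assigns to any formula is completely determined by the lower bounds it assigns to the atoms, with no dependence on the upper bounds. Since a valuation $V$ is a model of $F$ exactly when $V^-(h, F) = 1$, and $V$ and $V'$ agree on all atom lower bounds by hypothesis, this at once forces $V^-(h, F) = V'^-(h, F)$, and the two are models of $F$ simultaneously.

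Concretely, I would prove by structural induction on $F$ the stronger statement that $V^-(w, F) = V'^-(w, F)$ for \emph{both} $w \in \{h, t\}$. Carrying both worlds along is necessary, because the clauses for $\neg$ and $\rar$ compute the lower bound at $h$ in terms of lower bounds at $t$. For the base cases, a numeric constant $c$ gives $V^-(w, c) = c = V'^-(w, c)$, and for an atom $a$ the desired equality $V^-(w, a) = V'^-(w, a)$ is precisely the hypothesis.

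For the inductive step, I would run through the defining clauses for the lower bounds. For $F \fand G$ and $F \for G$, the lower bound at $w$ is $V^-(w, F) \odot V^-(w, G)$ with $\odot \in \{\fand, \for\}$, which involves lower bounds only, so the induction hypothesis applies directly. For $\neg F$, both $V^-(h, \neg F)$ and $V^-(t, \neg F)$ equal $1 - V^-(t, F)$, again a function of a lower bound alone. For $F \rar G$, the lower bound at $h$ is $\min(V^-(h,F) \rar V^-(h,G),\ V^-(t,F) \rar V^-(t,G))$ and at $t$ it is $V^-(t,F) \rar V^-(t,G)$; in every subcase only lower bounds occur, so the induction hypothesis closes it. The crucial point is that the single clause in which an \emph{upper} bound ever feeds into a lower bound, namely $V^-(w, \sneg a) = 1 - V^+(w, a)$ for strong negation, is exactly the clause ruled out by the assumption that $F$ contains no strong negation.

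The step I expect to require the most care is not any individual calculation but the bookkeeping: one must state the induction hypothesis strongly enough to track both the $h$- and $t$-lower bounds simultaneously (since the $h$-clauses for $\neg$ and $\rar$ reference $t$-lower bounds of subformulas), and one must verify, clause by clause, that no lower-bound recurrence ever reaches for an upper bound of a subformula. Once that inspection is complete, the equivalence $V \text{ model of } F \Leftrightarrow V' \text{ model of } F$ is immediate from $V^-(h,F) = V'^-(h,F)$.
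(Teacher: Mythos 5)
Your proof is correct and follows essentially the same route as the paper's: a structural induction establishing $V^-(w,F)=V'^-(w,F)$ for both $w\in\{h,t\}$ simultaneously, using the fact that, without strong negation, every lower-bound clause of the N5 semantics refers only to lower bounds of subformulas, and then concluding via the model condition $V^-(h,F)=1$. Your observation that the strong-negation clause $V^-(w,\sneg a)=1-V^+(w,a)$ is the unique place where an upper bound feeds a lower bound is exactly the point the paper's induction implicitly relies on.
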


\begin{proof}
We show by induction that $V^-(w, F)=V'^-(w, F)$, where $w\in\{h, t\}$.

\begin{itemize}
\item $F$ is an atom $p$. $V^-(w, F)=V^-(w, p) = V'^-(w, p) = V^-(w, F)$.

\item $F$ is a numeric constant $c$. Clearly, $V^-(w, F) = c = V'^-(w, F)$.

\item $F$ is $\neg G$. 
   By I.H, $V^-(w, G)=V'^-(w, G)$.
\[ 
   V^-(w,F) = 1-V^-(t,G) = 1-V'^-(t,G) = V'^-(w, F).
\]

\item $F$ is $G\odot H$ where $\odot$ is $\fand$ or $\for$. By I.H, $V^-(w, G)=V'^-(w, G)$ and $V^-(w, H)=V'^-(w, H)$. So
\[
\ba {rl}
  V^-(w, F) = & V^-(w, G) \odot V^-(w, H)\\
 = & V'^-(w, G) \odot V'^-(w, H)\\
 = & V'^-(w, F).
\ea
\]
\item  $F$ is $G\rar H$. 
   By I.H, $V^-(w, G)=V'^-(w, G)$ and $V^-(w, H)=V'^-(w, H)$. So
\[
\ba {rl}
V^-(h, F) = & min((V^-(h, G)\rar V^-(h, H)),\ (V^-(t, G)\rar V^-(t, H)))\\
 = & min((V'^-(h, G)\rar V'^-(h, H)),\ (V'^-(t, G)\rar V'^{-}(t, H))) \\
 = & V'^-(h, F). 
\ea
\]
And
\[
\ba {rl}
V^-(t, F) = & (V^-(t, G)\rar V^-(t, H))\\
 = &  (V'^{-}(t, G)\rar V'^{-}(t, H))\\
 = & V'^{-}(t, F).
\ea
\]
\end{itemize}

So $V^-(h, F)=V'^{-}(h, F)$ and thus $V^-(h, F)=1$ if and only if $V'^{-}(h, F)=1$, i.e, $V$ is a model of $F$ if and only if $V'$ is a model of $F$.
\qed
\end{proof}

The following is a corollary to Lemma \ref{lem:lb-only}.

\begin{cor}\label{cor:lb_only}
For any two valuations $V_1$ and $V_2$ such that $V_1^-(w, a)=V_2^-(w, a)$($w \in \left\{h, t\right\}$) for all atoms $a$, and any formula $F$ containing no strong negation, $V_1$ is a model of $F$ iff $V_2$ is a model of $F$.
\end{cor}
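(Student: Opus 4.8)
The plan is to deduce the corollary from Lemma~\ref{lem:lb-only} by routing both valuations through a single common valuation whose upper bounds have all been set to~$1$. Since Lemma~\ref{lem:lb-only} already carries out the inductive argument showing that replacing every upper bound by~$1$ preserves modelhood, no fresh induction is required; the corollary is a pure transitivity statement.

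First I would construct the auxiliary valuation $V$ defined by $V(w,a)=[V_1^-(w,a),\,1]$ for every $w\in\{h,t\}$ and every atom~$a$. I would check that $V$ is a legitimate valuation, i.e.\ that $V(t,a)\subseteq V(h,a)$: because $V_1$ is a valuation we have $V_1(t,a)\subseteq V_1(h,a)$, and comparing lower endpoints gives $V_1^-(h,a)\le V_1^-(t,a)$, whence $V(t,a)=[V_1^-(t,a),1]\subseteq[V_1^-(h,a),1]=V(h,a)$. This verification is the only step that needs any care, and it is immediate.

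Next I would observe that $V$ is exactly the companion valuation that Lemma~\ref{lem:lb-only} attaches to $V_1$: its lower bounds coincide with those of $V_1$, and all of its upper bounds equal~$1$. By the hypothesis $V_1^-(w,a)=V_2^-(w,a)$, this same $V$ is simultaneously the companion valuation attached to $V_2$. Applying Lemma~\ref{lem:lb-only} to the pair $(V_1,V)$ yields that $V_1$ is a model of $F$ iff $V$ is, and applying it to the pair $(V_2,V)$ yields that $V_2$ is a model of $F$ iff $V$ is. Chaining these two biconditionals gives that $V_1$ is a model of $F$ iff $V_2$ is a model of $F$, which is the claim.

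There is essentially no obstacle here: the substantive content---that discarding upper-bound information does not affect whether a strong-negation-free formula attains truth value~$1$ in its $h$-component---resides entirely in Lemma~\ref{lem:lb-only}. The corollary only needs the elementary observation that the normalized valuation depends on the lower bounds alone, so two valuations that share their lower bounds are normalized to the very same valuation.
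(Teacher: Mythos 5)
Your proof is correct and follows essentially the same route as the paper's: normalize the valuation(s) to have all upper bounds equal to $1$, apply Lemma~\ref{lem:lb-only} twice (once for each of $V_1$, $V_2$ against the common normalized valuation), and chain the two biconditionals, noting that the normalization depends only on the shared lower bounds. The only difference is cosmetic---you build one auxiliary valuation $V$ where the paper builds $V_1'$ and $V_2'$ and then observes $V_1'=V_2'$---and your explicit check that $V$ is a legitimate N5 valuation is a small point the paper leaves implicit.
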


\begin{proof}
let $V_1'$ be valuations such that $V_1'^-(w, a)=V_1^-(w, a)$ and $V_1'^+(w, a)=1$ for all atoms $a$, and 
let $V_2'$ be valuations such that $V_2'^-(w, a)=V_2^-(w, a)$ and $V_2'^+(w, a)=1$ for all atoms $a$.
By Lemma \ref{lem:lb-only}, $V_1$ is a model of $F$ iff $V_1^\prime$ is a model of $F$, and $V_2$ is a model of $F$ iff $V_2^\prime$ is a model of $F$. Since $V_1^-(w, a)=V_2^-(w, a)$, $V_1^\prime = V_2^\prime$. So $V_1$ is a model of $F$ iff $V_2$ is a model of $F$.
\qed
\end{proof}
\EOCC

\begin{lemma}\label{lem:lb_only}\optional{lem:lb-only}
For any two valuations $V$ and $V^\prime$ such that $V^-(w, a)=V^{\prime -}(w, a)$($w \in \left\{h, t\right\}$) for all atoms $a$, and any formula $F$ containing no strong negation, $V$ is a model of $F$ iff $V^\prime$ is a model of $F$.
\end{lemma}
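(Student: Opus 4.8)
The plan is to prove the stronger statement that $V^-(w,F) = V^{\prime-}(w,F)$ for every $w \in \{h,t\}$ and every strong-negation-free formula $F$, by induction on the structure of $F$; the lemma then follows immediately, since by definition a valuation is a model of $F$ precisely when its lower bound $V^-(h,F)$ equals $1$.

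The key observation driving the induction is that, once strong negation is excluded, the clauses defining the truth value of a formula under a fuzzy N5 valuation compute the \emph{lower} bound $V^-(w,F)$ of a compound formula entirely from the lower bounds of its immediate subformulas: the upper bounds never enter. Concretely, for a numeric constant $c$ we have $V^-(w,c)=c$; for $\fand$ and $\for$ we have $V^-(w,F\odot G) = V^-(w,F)\odot V^-(w,G)$; for $\neg$, both $V^-(h,\neg F)$ and $V^-(t,\neg F)$ equal $1-V^-(t,F)$; and for $\rar$ we have $V^-(t,F\rar G) = V^-(t,F)\rar V^-(t,G)$ together with $V^-(h,F\rar G)=\min(V^-(h,F)\rar V^-(h,G),\, V^-(t,F)\rar V^-(t,G))$. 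In every case the right-hand side is a function of lower bounds only, so the induction hypotheses $V^-(w,G)=V^{\prime-}(w,G)$ and $V^-(w,H)=V^{\prime-}(w,H)$ propagate directly to $V^-(w,F)=V^{\prime-}(w,F)$.

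The base case (atoms) is exactly the hypothesis $V^-(w,a)=V^{\prime-}(w,a)$, and the inductive cases are the routine substitutions sketched above. This is essentially the same computation already carried out for the lower bounds in the earlier lower-bound lemma, so I would either reuse that case analysis or reproduce the few lines verbatim rather than rederive anything.

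The only place where the argument could break---and hence the point the hypothesis is designed to rule out---is the strong-negation clause $V(w,\sneg a) = [1-V^+(w,a),\, 1-V^-(w,a)]$, under which $V^-(w,\sneg a) = 1 - V^+(w,a)$ depends on the \emph{upper} bound. Since $V$ and $V'$ are only assumed to agree on lower bounds, an occurrence of $\sneg$ would let their lower bounds diverge and invalidate the induction. Thus the restriction to strong-negation-free $F$ is exactly what makes the class of lower bounds closed under the recursion, and no separate handling of upper bounds is needed anywhere in the proof; there is no genuine obstacle here beyond being careful to invoke this restriction in the statement.
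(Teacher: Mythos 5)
Your proposal is correct and follows essentially the same route as the paper: both prove by structural induction the stronger claim that $V^-(w,F)=V^{\prime-}(w,F)$ for all $w\in\{h,t\}$, using the fact that in the absence of strong negation every clause of the valuation semantics computes lower bounds from lower bounds alone, and then conclude via the definition of a model ($V^-(h,F)=1$). Your added remark pinpointing the strong-negation clause $V^-(w,\sneg a)=1-V^+(w,a)$ as the precise reason for the restriction is a nice clarification but does not change the argument.
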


\begin{proof}
We show by induction that $V^-(w, F)=V'^-(w, F)$, where $w\in\{h, t\}$.

\begin{itemize}
\item $F$ is an atom $p$. $V^-(w, F)=V^-(w, p) = V'^-(w, p) = V^-(w, F)$.

\item $F$ is a numeric constant $c$. Clearly, $V^-(w, F) = c = V'^-(w, F)$.

\item $F$ is $\neg G$. 
   By I.H, $V^-(w, G)=V'^-(w, G)$.
\[ 
   V^-(w,F) = 1-V^-(t,G) = 1-V'^-(t,G) = V'^-(w, F).
\]

\item $F$ is $G\odot H$ where $\odot$ is $\fand$ or $\for$. By I.H, $V^-(w, G)=V'^-(w, G)$ and $V^-(w, H)=V'^-(w, H)$. So
\[
\ba {rl}
  V^-(w, F) = & V^-(w, G) \odot V^-(w, H)\\
 = & V'^-(w, G) \odot V'^-(w, H)\\
 = & V'^-(w, F).
\ea
\]
\item  $F$ is $G\rar H$. 
   By I.H, $V^-(w, G)=V'^-(w, G)$ and $V^-(w, H)=V'^-(w, H)$. So
\[
\ba {rl}
V^-(h, F) = & min((V^-(h, G)\rar V^-(h, H)),\ (V^-(t, G)\rar V^-(t, H)))\\
 = & min((V'^-(h, G)\rar V'^-(h, H)),\ (V'^-(t, G)\rar V'^{-}(t, H))) \\
 = & V'^-(h, F). 
\ea
\]
And
\[
\ba {rl}
V^-(t, F) = & (V^-(t, G)\rar V^-(t, H))\\
 = &  (V'^{-}(t, G)\rar V'^{-}(t, H))\\
 = & V'^{-}(t, F).
\ea
\]
\end{itemize}

So $V^-(h, F)=V'^{-}(h, F)$ and thus $V^-(h, F)=1$ if and only if $V'^{-}(h, F)=1$, i.e, $V$ is a model of $F$ if and only if $V'$ is a model of $F$.
\qed
\end{proof}

%\begin{lemma}\label{thm_ub_1}
\bigskip\noindent{\bf Lemma~\ref{thm_ub_1}.}\ \ 
{\sl 
Given a formula $F$ containing no strong negation, any equilibrium model $V$ of $F$ satisfies $V^+(h, a)=V^+(t, a)=1$ for all atoms $a$.
}
\medskip
%\end{lemma}

\begin{proof}
Assume that $V$ is an equilibrium model of $F$. It follows that $V^+(h, a) = V^+(t, a)$. Furthermore, for the sake of contradiction, assume that $V^+(h, a) = V^+(t, a) = v < 1$. Define $V^{\prime \prime}$ as $V^{\prime \prime -}(w, a)=V^-(w, a)$, $V^{\prime \prime +}(t, a) = V^+(t, a)$ and $V^{\prime \prime +}(h, a) = v^\prime$ where $v^\prime \in (v, 1]$. Clearly, $V^{\prime \prime} \prec V$ and by Lemma \ref{lem:lb_only}, $V^{\prime \prime}$
is a model of $F$. So $V$ is not an h-minimal model of $F$, which contradicts the assumption that $V$ is an equilibrium model of $F$. Therefore, there does not exist such $V$.
%Suppose there is a valuation $V$ with $V^+(h, a) = V^+(t, a) = v < 1$ which is an equilibrium model of $F$. Then $V$ is a model of $F$. By Lemma \ref{lem:lb-only}, $V^\prime$ is also a model of $F$, so we have $V^{\prime-}(h, F) = 1$. Define the valuation $V^{\prime \prime}$ as $V^{\prime \prime}(h, a) = V^\prime(h, a)$ and $V^{\prime \prime}(t, a) = V(t, a)$ for all atom $a$. It can be seen that $V^{\prime \prime -}(w, a) = V^{\prime -}(w, a)$. By Corollary \ref{cor:lb_only}, since $V^\prime$ is a model of $F$, $V^{\prime \prime}$ is a model of $F$ as $V^{\prime \prime -}(h, F)= V^{\prime -}(h, F) = 1$. Furthermore, $V^{\prime \prime} \prec V$ as $V^{\prime \prime -}(h, a)=V^{\prime -}(h, a)=V^{-}(h, a)$ and $V^{\prime \prime +}(h, a)=1 > V^{+}(h, a)$ for all atom $a$. So $V$ is not an h-minimal model of $F$, and thus $V$ cannot be an equilibrium model of $F$, which is a contradiction. Therefore, there does not exist such $V$.
\qed
\end{proof}

\noindent{\bf Theorem~\ref{thm:equil-sm-nostrneg}.\optional{thm:equil-sm-nostrneg}}\ \ 
{\sl
Let $F$ be a fuzzy propositional formula of $\sigma$ that contains no
strong negation.
\begin{itemize}
\item[(a)]  A valuation $V$ of~$\sigma$ is a fuzzy equilibrium model
  of $F$ iff $V^-(h,p)=V^-(t,p)$, $V^+(h,p)=V^+(t,p)=1$ for all
  atoms $p$ in $\sigma$ and $\I_V$ is a stable model of $F$ relative to
  $\sigma$. 

\item[(b)] An interpretation $I$ of~$\sigma$ is a stable model of
  $F$ relative to $\sigma$ iff $I = \I_V$ for some fuzzy equilibrium
  model $V$ of $F$. 
\end{itemize}
}
\medskip

\begin{proof}
  (a) ($\Rightarrow$) Suppose $V$ is an equilibrium model of $F$. By Lemma \ref{thm_ub_1}, $V^+(h,a)=V^+(t,a)=1$ for all atoms $a$. And by the definition of fuzzy equilibrium model, $V^-(h,a)=V^-(t,a)$. It can be seen that $\V_{\I_V, \I_V} = V$. Since $\V_{\I_V, \I_V}$ is an equilibrium model of $F$, by Lemma \ref{lem:sm-to-equil}, $\I_V$ is a stable model of $F$ relative to $\sigma$.

\medskip\noindent
($\Leftarrow$) Suppose $V^+(h,a)=V^+(t,a)=1$, $V^-(h,a)=V^-(t,a)$ for all atoms $a$, and $\I_V$ is a stable model of $F$ relative to $\sigma$. By Lemma \ref{lem:sm-to-equil}, $\V_{\I_V, \I_V}$ is an equilibrium model of $F$. Since $V^+(h,a)=V^+(t,a)=1$, $V^-(h,a)=V^-(t,a)$ for all atoms $a$, it holds that $\V_{\I_V, \I_V} = V$. So $V$ is an equilibrium model of $F$.

\bigskip\noindent
(b) ($\Rightarrow$) Suppose $I$ is a stable model of $F$ relative to $\sigma$. By Lemma \ref{lem:sm-to-equil}, $\V_{I, I}$ is an equilibrium model of $F$. It can be seen that $I=\I_{\V_{I, I}}$.

\medskip\noindent
($\Leftarrow$) Take any fuzzy equilibrium model $V$ of $F$ and let $I=\I_V$.
%Suppose $I = I_V$ for any fuzzy equilibrium model $V$ of $F$. 
By the definition of fuzzy equilibrium model and Lemma \ref{thm_ub_1}, $V^-(h,a)=V^-(t,a)$ and $V^+(h,a)=V^+(t,a)=1$ for all atoms $a$, which means $\V_{\I_V, \I_V} = V$. So $\V_{\I_V, \I_V}$ is an equilibrium model of $F$. By Lemma \ref{lem:sm-to-equil}, $\I_V$ is a stable model, so $I$ is a stable model.
\qed
\end{proof}

%---------------------------------------------------------------
\subsection{Proof of Theorem \ref{thm:equil-sm}\optional{thm:equil-sm}}
%---------------------------------------------------------------

\begin{lemma}\label{lem:val_eq_strong_neg}
For any fuzzy formula $F$ of signature~$\sigma$ that may contain strong negation and for any valuation $V$, it holds that 
$V^-(w, F) = nneg(V)^-(w, nneg(F))$.
\end{lemma}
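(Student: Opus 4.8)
The plan is to split the identity into two independent pieces. Recall that $nneg(F) = F' \otimes_m \underset{p\in\sigma}{\bigotimes_m}\neg_s(p\otimes_l np)$, where $F'$ is obtained from $F$ by replacing each strongly negated atom $\sneg p$ with the fresh atom $np$. Since the outermost connective of $nneg(F)$ is $\otimes_m$, and the lower bound of a conjunction is the t-norm of the lower bounds of its conjuncts, I would first reduce the claim to two sub-claims: \textbf{(i)} the \emph{core identity} $V^-(w,F) = nneg(V)^-(w,F')$ for $w\in\{h,t\}$; and \textbf{(ii)} each consistency conjunct $\neg_s(p\otimes_l np)$ evaluates to lower bound $1$ under $nneg(V)$, so that conjoining all of them leaves the lower bound of $F'$ unchanged.

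For (i) I would argue by structural induction on $F$, proving the identity \emph{simultaneously} for $w=h$ and $w=t$; carrying both worlds is essential, because the valuation clauses for $\neg$ and $\rar$ at the $h$-world refer to the $t$-world lower bounds of the subformulas. Since strong negation occurs only on atoms, the map $F\mapsto F'$ commutes with every connective, so each inductive step reduces cleanly. The base cases follow directly from the definition of $nneg(V)$: for an ordinary atom $p$ and a numeric constant $c$ the lower bounds coincide, and for a leaf $\sneg p$ one has $V^-(w,\sneg p)=1-V^+(w,p)$, which is exactly $nneg(V)^-(w,np)$ by definition. The inductive cases for $\neg$, $\otimes$, $\oplus$, and $\rar$ then follow by applying the corresponding valuation clause to both sides and invoking the induction hypothesis at the relevant worlds; in each case the clause for $nneg(V)$ on $F'$ has the same shape as the clause for $V$ on $F$, so the two sides agree termwise.

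For (ii) I would compute, for each $p\in\sigma$, the lower bound of $p\otimes_l np$ under $nneg(V)$: using $nneg(V)^-(w,p)=V^-(w,p)$ and $nneg(V)^-(w,np)=1-V^+(w,p)$ together with interval validity $V^-(w,p)\le V^+(w,p)$, the \L ukasiewicz t-norm $\otimes_l$ yields $\max(V^-(w,p)-V^+(w,p),0)=0$. The standard negator then flips this to $1$ at both worlds (the $h$- and $t$-clauses for $\neg_s$ both produce $[1,1]$). Since $\otimes_m(x,1)=x$, conjoining $F'$ with these constraints does not change its lower bound, so $nneg(V)^-(w,nneg(F)) = nneg(V)^-(w,F')$, and combining with (i) gives the statement. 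I expect the main obstacle to be organizational rather than mathematical: the syntactic structure of $nneg(F)$ does not mirror that of $F$, since the consistency conjuncts sit only at the top level, so a naive induction directly on $nneg(F)$ would not close. The two-step decomposition above—an inductive core identity plus a one-line evaluation of the top-level constraints—is precisely what makes the argument go through cleanly.
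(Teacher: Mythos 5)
Your proposal is correct and matches the paper's proof essentially step for step: the paper likewise first establishes $V^-(w,F)=nneg(V)^-(w,F')$ by structural induction on $F$ (carrying both worlds $h$ and $t$, with the same base case for $\sneg p$ and the same case split on connectives), then shows each conjunct $\neg_s(p\otimes_l np)$ has lower bound $1$ under $nneg(V)$ via interval validity $V^-(w,p)\le V^+(w,p)$, and finally uses $\otimes_m(x,1)=x$ to conclude. Your observation that a naive induction on $nneg(F)$ would not close, necessitating the two-step decomposition, is exactly the organization the paper adopts.
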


\begin{proof}
First we show by induction that $V^-(w, F) = nneg(V)^{-}(w, F')$, where $F'$ is defined as in Section~\ref{sssec:strneg}.

\begin{itemize}
\item $F$ is an atom $p$ in $\sigma$. Clear.
% \begin{align*}
% V^-(w, F) &= V^-(w, p)\\
%           &= nneg(V)^{-}(w, p)\\
%           &= nneg(V)^{-}(w, F^\prime).
% \end{align*}

\item $F$ is $\sneg p$, where $p$ is an atom in $\sigma$. Then $F'$ is $np$.
\begin{align*}
  V^-(w, F) & =  V^-(w, \sneg p) = 1-V^+(w, p) \\ 
            & = nneg(V)^-(w, np) =  nneg(V)^-(w, F').
\end{align*}
\item  $F$ is a numeric constant $c$. Clear.

%$V^-(w, F) = V^-(w, \overline{c}) = c = nneg(V)^{-}(w, \overline{c}) = nneg(V)^{-}(w, F^\prime)$.

\item $F$ is $\neg G$. By I.H., $V^-(w, G) = nneg(V)^-(w, G')$.
\begin{align*}
V^-(w, F) &= V^-(w, \neg G)\\
          &= 1 - V^-(t, G)\\
          &= 1 - nneg(V)^-(t, G') \\
          &= nneg(V)^-(w, \neg G')\\
          &= nneg(V)^-(w, F').
\end{align*}

\item $F$ is $G \odot H$, where $\odot$ is $\fand$ or $\for$. 
By I.H., $V^-(w, G) = nneg(V)^-(w, G')$ and 
         $V^-(w, H) = nneg(V)^-(w, H')$.
\begin{align*}
V^-(w, F) &= V^-(w, G \odot H)\\
          &= \odot(V^-(w, G), V^-(w, H))\\
          &= \odot(nneg(V)^-(w, G'), nneg(V)^{-}(w, H'))\\
          &= nneg(V)^-(w, G' \odot H')\\
          &= nneg(V)^-(w, F').
\end{align*}

\item $F$ is $G\rar H$. By I.H., $V^-(w, G) = nneg(V)^-(w, G')$ and $V^-(w, H) = nneg(V)^-(w, H')$.
\begin{align*}
V^-(h, F) &= V^-(h, G\rar H)\\
          &= min(\rar\!\!(V^-(h, G), V^-(h, H)), \rar\!\!(V^-(t, G), V^-(t, H)))\\
          &= min(\rar\!\!(nneg(V)^{-}(h, G'), nneg(V)^{-}(h, H')), \\
          & ~~~~~~~~~~ \rar\!\!(nneg(V)^{-}(t, G'), nneg(V)^{-}(t, H')))\\
         &= nneg(V)^{-}(h, G' \rar H')\\
         &= nneg(V)^{-}(h, F').
\end{align*}
And
\begin{align*}
V^-(t, F) &= V^-(t, G \rar H)\\
          &=\  \rar\!\!(V^-(t, G), V^-(t, H))\\
          &=\   \rar\!\!(nneg(V)^{-}(t, G'), nneg(V)^{-}(t, H'))\\
          &=   nneg(V)^{-}(t, F').
\end{align*}
\end{itemize}

Now notice that, for any valuation $V$, it must be the case that for all atoms $p\in\sigma$, $V^-(w, p)\le V^+(w, p)$, i.e, $V^{-}(w, p) + 1 - V^{+}(w, p) \leq 1$. It follows that 
\[  
   nneg(V)^-(w, p) + nneg(V)^-(w, np) \leq 1.
\]
Therefore, for all atoms $p$, 
\begin{align*}
nneg(V)^{-}(w, \neg_s(p \fand_l np)) &= 1 - nneg(V)^{-}(t, (p \fand_l np)) \\
          &= 1 - \fand_l(nneg(V)^{-}(t, p), nneg(V)^{-}(t, np)) \\
          &= 1 - max(nneg(V)^{-}(t, p) + nneg(V)^{-}(t, np) - 1, 0) \\
          &= 1 - 0 = 1.
\end{align*}
It follows that
\begin{align*}
V^-(w, F) &= nneg(V)^-(w, F') \\
          &= \fand_m(nneg(V)^-(w, F'), 1) \\
          &= \fand_m(nneg(V)^-(w, F'), nneg(V)^-(w, \underset{p\in \sigma}{\fand_m}\neg_s(p \fand_l np))) \\
          &= nneg(V)^-(w, F' \fand_m \underset{p\in \sigma}{\fand_m}\neg_s(p \fand_l np)) \\
          &= nneg(V)^-(w, nneg(F)).
\end{align*}
\qed
\end{proof}

\begin{cor}\label{cor:model_strong_neg}
For any fuzzy formula $F$ that may contain strong negation, a valuation $V$ is a model of $F$ iff $nneg(V)$ is a model of $nneg(F)$.
\end{cor}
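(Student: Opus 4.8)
The plan is to read the corollary off Lemma~\ref{lem:val_eq_strong_neg} directly, since that lemma already carries all the substantive work. Recall that, by the definition of a (fuzzy N5) model reviewed earlier, a valuation $W$ is a model of a formula $G$ precisely when $W^-(h, G) = 1$. Hence the biconditional to be established reduces to the numerical equivalence
\[
  V^-(h, F) = 1 \quad\Longleftrightarrow\quad nneg(V)^-(h, nneg(F)) = 1.
\]

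First I would instantiate Lemma~\ref{lem:val_eq_strong_neg} at $w = h$, which yields the identity $V^-(h, F) = nneg(V)^-(h, nneg(F))$ as an equality of real numbers in $[0,1]$. The desired equivalence is then immediate: the left quantity equals $1$ if and only if the right one does, because they are literally equal. Unfolding the definition of ``model'' on each side gives exactly the claim that $V$ is a model of $F$ iff $nneg(V)$ is a model of $nneg(F)$. So the whole proof is a one-line consequence of the $w=h$ case of the preceding lemma.

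The only point deserving a remark, rather than a genuine obstacle, is that for the right-hand side to be meaningful one must check that $nneg(V)$ is a legitimate valuation of $\sigma\cup\{np \mid p\in\sigma\}$, i.e.\ that it satisfies the nesting condition $nneg(V)(t,a)\subseteq nneg(V)(h,a)$ for every atom $a$. This follows purely from $V$ being a valuation: from $V(t,p)\subseteq V(h,p)$ we read off $V^-(h,p)\le V^-(t,p)$ and $V^+(t,p)\le V^+(h,p)$, and substituting into the definitions $nneg(V)(w,p)=[V^-(w,p),1]$ and $nneg(V)(w,np)=[1-V^+(w,p),1]$ shows that both intervals shrink when passing from $h$ to $t$. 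Thus there is no real difficulty here; the entire content of the corollary lies in Lemma~\ref{lem:val_eq_strong_neg}, and once the definition of satisfaction for N5 models is unwound the corollary is essentially free.
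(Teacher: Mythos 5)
Your proof is correct and takes essentially the same route as the paper's own proof: both instantiate Lemma~\ref{lem:val_eq_strong_neg} at $w=h$ to get the equality $V^-(h,F) = nneg(V)^-(h,nneg(F))$ and then unwind the definition of a fuzzy N5 model. Your extra verification that $nneg(V)$ satisfies the nesting condition $nneg(V)(t,a)\subseteq nneg(V)(h,a)$ is sound and is a detail the paper leaves implicit.
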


\begin{proof}
By Lemma \ref{lem:val_eq_strong_neg}, $V^-(h, F) = nneg(V)^{-}(h, nneg(F))$. So $V^-(h, F) = 1$ iff $nneg(V)^{-}(h, nneg(F)) = 1$, i.e., $V$ is a model of $F$ iff $nneg(V)$ is a model of $nneg(F)$.
\qed
\end{proof}

\begin{lemma}\label{lem:ht_eq}
for all atoms $a \in \sigma$, $V(h, a)=V(t, a)$ iff 
\[
  nneg(V)(h, a)=nneg(V)(t, a) \text{ and } nneg(V)(h, na)=nneg(V)(t, na).
\]
\end{lemma}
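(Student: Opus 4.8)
The plan is to unfold the definition of $nneg(V)$ and reduce each of the three interval equalities in the statement to an equality of real numbers. Recall that for every atom $p\in\sigma$ and each $w\in\{h,t\}$, the valuation $nneg(V)$ is defined by $nneg(V)(w,p)=[V^-(w,p),\,1]$ and $nneg(V)(w,np)=[1-V^+(w,p),\,1]$. The crucial observation is that the \emph{upper} endpoint of each of these intervals is always $1$, so two such intervals (one for $w=h$, one for $w=t$) coincide exactly when their lower endpoints agree. This immediately turns every interval equality into a single scalar equality, and the rest is bookkeeping.

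First I would apply this observation to the atom $a$ itself: since $nneg(V)(h,a)=[V^-(h,a),1]$ and $nneg(V)(t,a)=[V^-(t,a),1]$, we get
\[
   nneg(V)(h,a)=nneg(V)(t,a)\quad\Longleftrightarrow\quad V^-(h,a)=V^-(t,a).
\]
Next I would do the same for the new atom $na$: since $nneg(V)(h,na)=[1-V^+(h,a),1]$ and $nneg(V)(t,na)=[1-V^+(t,a),1]$, we get
\[
   nneg(V)(h,na)=nneg(V)(t,na)\quad\Longleftrightarrow\quad 1-V^+(h,a)=1-V^+(t,a)\quad\Longleftrightarrow\quad V^+(h,a)=V^+(t,a).
\]

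Finally I would combine the two equivalences. By definition of equality of closed intervals, $V(h,a)=V(t,a)$ holds precisely when $V^-(h,a)=V^-(t,a)$ \emph{and} $V^+(h,a)=V^+(t,a)$. Conjoining the two displayed equivalences above therefore shows that $V(h,a)=V(t,a)$ iff $nneg(V)(h,a)=nneg(V)(t,a)$ and $nneg(V)(h,na)=nneg(V)(t,na)$, which is exactly the claim. There is no real obstacle here: the content of the lemma is entirely contained in the remark that the lower endpoint of $nneg(V)(w,a)$ records $V^-(w,a)$ while the lower endpoint of $nneg(V)(w,na)$ records $1-V^+(w,a)$, so the two halves of the right-hand conjunction independently capture the agreement of the lower and upper bounds of $V$; the only thing to be careful about is matching each endpoint to the correct side.
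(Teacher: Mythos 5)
Your proof is correct and follows essentially the same argument as the paper's: both reduce each interval equality to equality of lower endpoints (using that $nneg(V)$ always has upper endpoint $1$), identify those lower endpoints with $V^-(w,a)$ and $1-V^+(w,a)$ respectively, and then use that $V(h,a)=V(t,a)$ holds iff both endpoints agree. The only difference is presentational — you establish the two scalar equivalences separately and then conjoin them, while the paper writes a single chain of iff's.
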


\begin{proof}
For all atoms $a \in \sigma$,
\begin{align*}
     & V(h, a)=V(t, a) \\
\text{iff~~~} & V^-(h, a)=V^-(t, a) \text{ and } V^+(h, a)=V^+(t, a)  \\
\text{iff~~~} & nneg(V)(h, a)=nneg(V)(t, a) \text{ and } 1 - V^+(h, a)= 1 - V^+(t, a)\\
\text{iff~~~} & nneg(V)(h, a)=nneg(V)(t, a) \text{ and } nneg(V)(h, na)=nneg(V)(t, na).
\end{align*}
\qed
\end{proof}

\begin{lemma}\label{lem:prec_eq}
For two valuations $V$ and $V_1$ of signature $\sigma$, it holds that $V_1\prec V$ iff $nneg(V_1)\prec nneg(V)$.
\end{lemma}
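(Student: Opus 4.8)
The plan is to reduce the equivalence to a direct, atomwise comparison of the two defining clauses of $\preceq$ under the transformation $nneg$, together with the observation that $nneg$ is injective. Recall that $nneg$ sets every upper bound to $1$, so that $nneg(V)(w,p)=[V^-(w,p),\,1]$ and $nneg(V)(w,np)=[1-V^+(w,p),\,1]$ for each $p\in\sigma$ and each $w\in\{h,t\}$. The first step is to establish that $V_1\preceq V$ iff $nneg(V_1)\preceq nneg(V)$, by translating each clause of $\preceq$ back to $\sigma$ atom by atom.

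For the $t$-equality clause, consider $nneg(V_1)(t,x)=nneg(V)(t,x)$. When $x=p$, since both upper bounds are $1$ this amounts to $V_1^-(t,p)=V^-(t,p)$; when $x=np$ it amounts to $1-V_1^+(t,p)=1-V^+(t,p)$, i.e. $V_1^+(t,p)=V^+(t,p)$. Ranging over all $p$ and both kinds of atom, these are exactly the conditions $V_1(t,p)=V(t,p)$ of the original relation. For the containment clause, $nneg(V)(h,p)\subseteq nneg(V_1)(h,p)$ reduces to $V_1^-(h,p)\le V^-(h,p)$, while $nneg(V)(h,np)\subseteq nneg(V_1)(h,np)$ reduces to $1-V_1^+(h,p)\le 1-V^+(h,p)$, i.e. $V^+(h,p)\le V_1^+(h,p)$. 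Combining these two inequalities yields precisely $V(h,p)\subseteq V_1(h,p)$. Hence both clauses of $nneg(V_1)\preceq nneg(V)$ hold iff the corresponding clauses of $V_1\preceq V$ hold.

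The second step handles strictness. Because the lower bound of $nneg(V)$ on an original atom $p$ recovers $V^-(w,p)$, and the lower bound on the auxiliary atom $np$ recovers $V^+(w,p)=1-nneg(V)^-(w,np)$, the map $nneg$ is injective; thus $nneg(V_1)=nneg(V)$ iff $V_1=V$. Combining this with the first step, $V_1\prec V$ (that is, $V_1\preceq V$ and $V_1\ne V$) holds iff $nneg(V_1)\preceq nneg(V)$ and $nneg(V_1)\ne nneg(V)$, which is exactly $nneg(V_1)\prec nneg(V)$.

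I expect no serious obstacle here; the one point requiring care is the sign reversal on the auxiliary atoms $np$, where the upper-bound information of the original valuations is encoded through the lower bound $1-V^+$, so that interval containment on $np$ flips the direction of the inequality and correctly reproduces the $V^+$-part of $V(h,p)\subseteq V_1(h,p)$. It is worth writing out the atomwise translation explicitly so that this flip is transparent and the two clauses match up unambiguously.
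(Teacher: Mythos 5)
Your proof is correct and follows essentially the same route as the paper's: an atomwise translation of the ordering conditions through the sign-flip encoding on the auxiliary atoms $np$, using the fact that all upper bounds of $nneg$-valuations equal $1$. The only cosmetic difference is that you handle strictness via injectivity of $nneg$ (recovering $V^-$ from the $p$-atoms and $V^+$ from the $np$-atoms), whereas the paper carries the existential strict-containment condition directly through its chain of equivalences; both amount to the same computation.
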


\begin{proof}
\begin{align*}
 & V_1\prec V \\
\text{iff~~~} 
  & \text{for all atoms $p$ (from $\sigma$), $V(t, p)=V_1(t, p)$, 
          $V(h, p)\subseteq V_1(h, p)$ and}\\
  & \text{there exists an atom $a$ (from $\sigma$) such that  $V(h, a)\subset V_1(h, a)$} \\
\text{iff~~~} 
  & \text{for all atoms $p$, $V^-(t, p)=V_1^-(t, p)$, $V^+(t, p)=V_1^+(t, p)$,}\\
  & \hspace{2.8cm} \text{$V_1^-(h, p)\le V^-(h, p)$, $V_1^+(h, p)\ge V^+(h, p)$, and}\\
  & \text{there exists an atom $a$ such that  $V_1^-(h, a) < V^-(h, a)$ or   
          $V_1^+(h, a) > V^+(h, a)$}  \\
\text{iff~~~} 
  & \text{for all atoms $p$, $V^-(t, p)=V_1^-(t, p)$, $V^+(t, p)=V_1^+(t, p)$,}\\
  & \hspace{2.8cm}\text{$V_1^-(h, p)\le V^-(h, p)$, $V_1^+(h, p)\ge V^+(h, p)$, and}\\
  & \text{there exists an atom $a$ such that  $V_1^-(h, a) < V^-(h, a)$ or} \\   
  & \hspace{2.8cm}\text{$1- V_1^+(h, a) < 1- V^+(h, a)$}  \\
\text{iff~~~} 
  & \text{for all atoms $p$, $V^-(t, p)=V_1^-(t, p)$, $V^+(t, p)=V_1^+(t, p)$,}\\
  & \hspace{2.8cm}\text{$V_1^-(h, p)\le V^-(h, p)$, $1-V_1^+(h, p)\le 1-V^+(h, p)$, and}\\
  & \text{there exists an atom $a$ such that $V_1^-(h, a) < V^-(h, a)$ or} \\   
  & \hspace{2.8cm}\text{$nneg(V_1)^-(h, na) < nneg(V)^-(h, na)$}  \\
\text{iff~~~} 
  & \text{for all atoms $p$ and $np$, $nneg(V)(t, p) = nneg(V_1)(t, p)$,}           \\
  & \hspace{2.8cm}\text{$nneg(V)(t, np) = nneg(V_1)(t, np)$,} \\
  & \hspace{2.8cm}\text{$nneg(V)(h, p) \subseteq nneg(V_1)(h, p)$,} \\
  & \hspace{2.8cm}\text{$nneg(V)(h, np) \subseteq nneg(V_1)(h, np)$, and} \\
  & \text{there exists an atom $a$ or $na$ such that $nneg(V)(h, a) \subset nneg(V_1)(h, a)$ or } \\
  & \hspace{2.8cm}\text{$nneg(V)(h, na) \subset nneg(V_1)(h, na)$}  \\
\text{iff~~~} &nneg(V_1) \prec nneg(V).
\end{align*}
\qed
\end{proof}

\noindent{\bf Proposition~\ref{lem:eqmodel_strong_neg}.}\ \  
{\sl 
For any fuzzy formula $F$ that may contain strong negation, a valuation $V$ is an equilibrium model of $F$ iff $nneg(V)$ is an equilibrium model of $nneg(F)$.
}\medskip

\begin{proof}
Let $\sigma$ be the underlying signature of $F$, and let $\sigma'$ be the extended signature $\sigma \cup \left\{np \mid p \in \sigma\right\}$.

\medskip\noindent
($\Rightarrow$) Suppose $V$ is an equilibrium model of $F$. Then 
$V(h, p)=V(t, p)$ for all atoms $p\in \sigma$ and $V$ is a model of $F$. By Lemma \ref{lem:ht_eq}, $nneg(V)(h, a)=nneg(V)(t, a)$ for all atoms $a\in\sigma'$, and by Corollary~\ref{cor:model_strong_neg}, $nneg(V)$ is a model of $nneg(F)$. 

Next we show that there is no $V_1 \prec nneg(V)$ such that $V_1$ is a model of $nneg(F)$. Suppose, for the sake of contradiction, that there exists such $V_1$. We construct $V'$ as 
\[
   V'(w,p) = [V_1^-(w,p), 1-V_1^-(w,np)]
\]
for all atoms $p\in\sigma$. It is clear that $nneg(V')=V_1$.
We will check that
\bi
\ii[(i)] $V'$ is a valid N5 valuation,
\ii[(ii)] $V'\prec V$, and
\ii[(iii)] $V'$ is a model of $F$.
\ei
These claims contradicts the assumption that $V$ is an equilibrium model of $F$. Consequently, we conclude that $nneg(V)$ is an equilibrium model of $F'$ once we establish these claims.

To check (i), we need to show that $0\le V_1^-(w,p)\le 1-V_1^-(w,np)\le 1$ and $V'(t,p)\subseteq V'(h,p)$ for all atoms $p\in \sigma$. Together they are equivalent to checking 
\[
  0\le V_1^-(h,p) \le V_1^-(t,p) \le 1-V_1^-(t,np) \le 1- V_1^-(h,np)\le 1
\] 
The first and the last inequalities are obvious. The second and the fourth inequalities are clear from the fact that $V_1$ is a valid valuation.
To show the third inequality, since $V$ is a valid valuation, 
\[
  V^-(t,p)\le V^+(t,p)
\]
which is equivalent to 
\beq
  nneg(V)^-(t,p) \le 1-nneg(V)^-(t,np).
\eeq{nneg1}
Since $V_1 \prec nneg(V)$, we have $V_1(t, a)=nneg(V)(t, a)$  for all atoms $a \in \sigma'$, so \eqref{nneg1} is equivalent to 
\[
  V_1^-(t,p) \le 1-V_1^-(t,np).
\]
So (i) is verified.

To check claim (ii), notice $nneg(V') \prec nneg(V)$ since $V_1\prec nneg(V)$ and $V_1 = nneg(V')$. Then the claim follows by Lemma~\ref{lem:prec_eq}.

To check claim (iii), notice $nneg(V')$ is a model of $nneg(F)$ since $V_1$ is a model of $nneg(F)$, and $V_1=nneg(V')$. Then the claim follows by Corollary~\ref{cor:model_strong_neg}.

\medskip\noindent
($\Leftarrow$) Suppose $nneg(V)$ is an equilibrium model of $nneg(F)$. Then $nneg(V)(h, a)=nneg(V)(t, a)$ for all atoms $a \in \sigma'$ and $nneg(V)$ is a model of $nneg(F)$. By Lemma \ref{lem:ht_eq}, $V(h, a)=V(t, a)$ for all atoms $a \in \sigma'$ and by Corollary~\ref{cor:model_strong_neg}, $V$ is a model of $F$. 
Next we show that there is no $V_1\prec V$ such that $V_1$ is a model of $F$. Suppose, for the sake of contradiction, that there exists such $V_1$. Then by Lemma \ref{lem:prec_eq}, $nneg(V_1) \prec nneg(V)$ and by Corollary~\ref{cor:model_strong_neg}, $nneg(V_1)$ is a model of $nneg(F)$, which contradicts that $nneg(V)$ is an equilibrium model of $nneg(F)$. 
Consequently, we conclude that $V$ is an equilibrium model of $F$. 
\qed
\end{proof}

\bigskip\noindent{\bf Theorem~\ref{thm:equil-sm}.\optional{thm:equil-sm}}\ \ 
{\sl
For any fuzzy formula $F$ of signature $\sigma$ that may contain strong negation, 
\begin{itemize}
\item[(a)]  A valuation $V$ of $\sigma$ is a fuzzy equilibrium model
  of $F$ iff $V(h,p)=V(t,p)$ for all atoms $p$ in~$\sigma$ and $\I_{nneg(V)}$ is a
  stable model of $\i{nneg}(F)$ relative to $\sigma\cup\{np \mid
  p\in\sigma\}$.

\item[(b)] An interpretation $I$ of $\sigma\cup\{np \mid p\in\sigma\}$ is
  a stable model of $\i{nneg}(F)$ relative to $\sigma\cup\{np \mid
  p\in\sigma\}$ iff $I=\I_{nneg(V)}$ for some fuzzy equilibrium model $V$ of
  $F$. 
\end{itemize}
}
\medskip

\begin{proof}
(a) ($\Rightarrow$) Suppose $V$ is an equilibrium model of $F$. By the definition of an equilibrium model, $V(h, a) = V(t, a)$ for all atoms $a$. By Proposition~\ref{lem:eqmodel_strong_neg}, $nneg(V)$ is an equilibrium model of $nneg(F)$. By Theorem~\ref{thm:equil-sm-nostrneg}, $\I_{nneg(V)}$ in the sense of Theorem~\ref{thm:equil-sm-nostrneg} is a stable model of $nneg(F)$ relative to $\sigma\cup\{np \mid p\in\sigma\}$.

\medskip\noindent
($\Leftarrow$) Suppose $\I_{nneg(V)}$ in the sense of Theorem~\ref{thm:equil-sm-nostrneg} is a stable model of $\i{nneg}(F)$ relative to $\sigma\cup\{np \mid p\in\sigma\}$. By Theorem~\ref{thm:equil-sm-nostrneg}, $nneg(V)$ is an equilibrium model of $\i{nneg}(F)$. By Proposition~\ref{lem:eqmodel_strong_neg}, $V$ is an equilibrium model of $F$.

\bigskip\noindent
(b) ($\Rightarrow$) Suppose $I$ is a stable model of $\i{nneg}(F)$ relative to $\sigma\cup\{np \mid p\in\sigma\}$. Then 
$I \models \underset{p\in \sigma}{\fand_m}\neg_s(p \fand_l np)$. It follows that, for all atoms $p \in \sigma$, $\u_I(p)+\u_I(np) \leq 1$ and thus $\u_I(p) \leq 1 - \u_I(np)$. Construct the valuation $V$ of $\sigma$ by defining $V(w, p)=[\u_I(p), 1-\u_I(np)]$. Clearly, $I$ can be viewed as $\I_{nneg(V)}$.
% in the sense of Theorem \ref{thm:equil-sm-nostrneg}, which means $I=I_{V}$ in the sense of the extended definition. 
By Theorem~\ref{thm:equil-sm-nostrneg}, $nneg(V)$ is an equilibrium model of $nneg(F)$. By Proposition~\ref{lem:eqmodel_strong_neg}, $V$ is an equilibrium model of~$F$.

\medskip\noindent
($\Leftarrow$) Take any fuzzy equilibrium model $V$ of $F$. 
%in the sense of the extended definition of $I_V$. 
By Proposition~\ref{lem:eqmodel_strong_neg}, $nneg(V)$ is an equilibrium model of $nneg(F)$. By Theorem~\ref{thm:equil-sm-nostrneg}, $\I_{nneg(V)}$ is a stable model of $nneg(F)$. 
%, which means $I_V$ in the sense of the extended definition is a stable model of $nneg(F)$.
\qed
\end{proof}

%---------------------------------------------------------------
\subsection{Proof of Theorem \ref{thm:constraint}}
%---------------------------------------------------------------

\noindent{\bf Theorem~\ref{thm:constraint}.\optional{thm:constraint}}\ \ 
{\sl
For any fuzzy formulas $F$ and $G$, $I$ is a stable model of \hbox{$F\fand\neg G$} (relative to ${\bf p}$) if and only if $I$ is a stable model of $F$ (relative to ${\bf p}$) and $I\models\neg G$.
}
\medskip

\begin{proof}
%$(F\fand \neg G)^{{I}}=F^{{I}}\fand \u_I(\neg G)$.

\noindent
($\Rightarrow$) Suppose $I$ is a $1$-stable model of $F\fand \neg G$ relative to ${\bf p}$. Then $I \models_{1} F\fand \neg G$ and there is no $J <^{\bf p} I$ such that $J \models_{1} F^{{I}}\fand (\neg G)^{I}$. By Lemma~\ref{cor_tnorm_conjunction}, $I\models_{1} F$ and $I\models_{1} \neg G$. 
Note that $\u_I(\neg G)=1$.
Further, it can be seen that there is no $J <^{\bf p} I$ such that $J \models_{1} F^{{I}}$, since otherwise this $J$ must satisfy $J <^{\bf p} I$ and $J \models_{1} F^{{I}}\fand \u_I(\neg G)$ (i.e., $J\models_1 F^I\fand(\neg G)^I$), which contradicts that $I$ is $1$-stable model of $F\fand\neg G$ relative to ${\bf p}$. We conclude that $I$ is a $1$-stable model of $F$ relative to ${\bf p}$ and $I \models_{1} \neg G$.

\medskip\noindent
($\Leftarrow$) Suppose $I$ is a $1$-stable model of $F$ relative to ${\bf p}$ and $I \models_{1} \neg G$. Then $I \models_1 F$ and $I \models_1 \neg G,$. 
By Lemma~\ref{cor_tnorm_conjunction}, $I \models_1 F \fand \neg G$.\footnote{%
This does not hold if the threshold considered is not $1$. For example, suppose $\u_I(F)=0.5$ and $\u_I(G)=0.5$, and consider $\fand_l$ as the fuzzy conjunction. Clearly, $I\models_{0.5} F$ and $I\models_{0.5} G$ but $I \not\models_{0.5} F \fand_l G$.} 
Next we show that there is no $J<^{\bf p} I$ such that $J \models_1 (F\fand \neg G)^{{I}}=F^{{I}}\fand (\neg G)^{I}$. Suppose, to the contrary, that there exists such $J$. Then by Lemma~\ref{cor_tnorm_conjunction}, $J \models_{1} F^{{I}}$. This, together with the fact $J <^{\bf p} I$, contradicts that $I$ is a $1$-stable model of $F$ relative to ${\bf p}$. So there does not exist such $J$, and we conclude that $I$ is a $1$-stable model of $F\fand \neg G$ relative to ${\bf p}$.
\qed
\end{proof}

\BOCC
\noindent{\bf Theorem~\ref{thm:constraint2}.\optional{thm:constraint2}}\
\ 
{\sl
For any fuzzy formulas $F$ and $G$, if $I$ is a $y$-stable model of $F\fand\neg G$ (relative to ${\bf p}$), then $I$ is a $y$-stable model of $F$ (relative to ${\bf p}$) and $I\models_y\neg G$.
}

\vspace{3 mm}

\begin{proof}
$(F\fand \neg G)^{{I}}=F^{{I}}\fand \neg G$. Suppose $I$ is a $y$-stable model of $F\fand \neg G$ relative to ${\bf p}$. Then $I \models_{y} F\fand \neg G$ and there does not exist $J <^{\bf p} I$ such that $J \models_{y} (F\fand \neg G)^{{I}}=F^{{I}}\fand (\neg G)^{{I}}$. By Lemma \ref{lem_conjunction_th}, $I \models_{y} F$ and $I \models_{y} \neg G$. Further, it can be seen that there does not exist $J <^{\bf p} I$ such that $J \models_{y} F^{{I}}$, since otherwise this $J$ must satisfy $J <^{\bf p} I$ and $J \models_{y} F^{{I}}\fand (\neg G)^{I}$, which is a contradiction. So $I$ is a $y$-stable model of $F$ and $I \models_{y} \neg G$.
\qed
\end{proof}
\EOCC

%---------------------------------------------------------------
\subsection{Proof of Theorem \ref{thm:choice}\optional{thm:choice}}
%---------------------------------------------------------------

\noindent{\bf Proposition~\ref{lem-choice-tautology}.}\ \ 
{\sl
For any fuzzy interpretation $I$ and any set ${\bf p}$ of fuzzy atoms, $I \models {\bf p}^{\rm ch}$.
}

\begin{proof}
Suppose ${\bf p}=(p_1, \dots, p_n)$.
\begin{align*}
 \u_I({\bf p}^{\rm ch}) &= \u_I(\{p_1\}^{\rm ch}\fand\dots \fand \{p_n\}^{\rm ch}) \\
  &= \u_I((p_1 \for_l \neg_s\, p_1)\fand\dots \fand (p_n \for_l \neg_s\, p_n)) \\
  &= {min(\u_I(p_1) + 1-\u_I(p_1), 1)}\fand \dots \fand{min(\u_I(p_n) + 1-\u_I(p_n), 1)} \\
  &= 1.
\end{align*}
\qed
\end{proof}

\noindent{\bf Theorem~\ref{thm:choice}.\optional{thm:choice}}\ \ 
{\sl
\bi
\item[(a)] For any real number $y\in [0,1]$, if $I$ is a $y$-stable model of $F$ relative to ${\bf
    p}\cup {\bf q}$, then $I$ is a $y$-stable model of $F$ relative to
  ${\bf p}$.
\item[(b)] $I$ is a $1$-stable model of $F$ relative to ${\bf p}$ iff 
   $I$ is a $1$-stable model of $F\fand {\bf q}^{\rm ch}$ relative
   to ${\bf p}\cup {\bf q}$.
\ei
}
\medskip

\begin{proof}
(a) Suppose $I$ is a $y$-stable model of $F$ relative to ${\bf
    p}\cup {\bf q}$. Then clearly $I \models_y F$. 
Next we show that there is no $J <^{\bf p} I$ such that $J\models_y F^I$. Suppose, for the sake of contradiction, that there exists such $J$. 
Since $\u_J(q) = \u_I(q)$ for all $q \in {\bf q}$, 
$J$ must satisfy $J <^{{\bf p}{\bf q}} I$ and $J \models_y F^{{I}}$, which contradicts the assumption that $I$ is a $y$-stable model of $F$ relative to ${\bf p}\cup {\bf q}$. So such $J$ does not exist, and we conclude that $I$ is a $y$-stable model of $F$ relative to ${\bf p}$.
	
\medskip\noindent	
(b) ($\Rightarrow$) Suppose $I$ is a $1$-stable model of $F$ relative to ${\bf p}$. Clearly, $I\models F$. 
By Proposition~\ref{lem-choice-tautology}, $I\models {\bf q}^{\rm ch}$, and by 
Lemma~\ref{cor_tnorm_conjunction},
$I\models F\fand {\bf q}^{\rm ch}$.\footnote{It is not necessary to have the threshold $y=1$ for this to hold. In general, suppose $I\models_y F$. Since $I\models_{1} {\bf p}^{\rm ch}$, by the property of fuzzy conjunction($\fand(x, 1)=x$), $I \models_{y} F\fand {\bf p}^{\rm ch}$.} 

Next we show that there is no $J<^{{\bf p}{\bf q}} I$ such that $J \models (F \fand {\bf q}^{\rm ch})^I$. 
Suppose, for the sake of contradiction, that there exists such $J$. Since 
$J\models (F \fand {\bf q}^{\rm ch})^I$, i.e., 
\[ 
  J\models F^I\fand (q_{1} \for_l (\neg_s\, q_1)^{I}) 
      \fand \dots \fand (q_{n} \for_l (\neg_s\, q_n)^{I}),
\]
by Lemma~\ref{cor_tnorm_conjunction}, it follows that $J \models (q_{k} \for_l \u_I(\neg_s\, q_k))$ for each $k=1,\dots,n$, which means that $\u_J(q_k) \geq \u_I(q_k)$. 
On the other hand, since $J <^{{\bf p}{\bf q}} I$, we have $\u_J(q_k) \leq \u_I(q_k)$. So we conclude $\u_J(q_k) = \u_I(q_k)$.\footnote{This cannot be concluded if we have $J \models_{y} (q_{k} \for_l \u_I(\neg_s\, q_k))$, instead of $J \models_{1} (q_{k} \for_l \u_I(\neg_s\, q_k))$. So this direction of the theorem does not hold for an arbitrary threshold.} 
Together with the assumption that $J<^{{\bf p}{\bf q}} I$, this implies that 
$J<^{\bf p} I$.
%means that  there is at least one atom $p$ in ${\bf p}$ such that $\u_J(p) < \u_I(p)$.
From $J\models (F \fand {\bf q}^{\rm ch})^I$, it follows $J \models F^{{I}}$, which contradicts that $I$ is a stable model of $F$ relative to ${\bf p}$. So such $J$ does not exist, and we conclude that $I$ is a stable model of $F$ relative to ${\bf p}\cup {\bf q}$.

\medskip\noindent
($\Leftarrow$) Suppose $I$ is a $1$-stable model of $F \fand {\bf q}^{\rm ch}$ relative to ${\bf p}\cup{\bf q}$. Then $I \models F \fand {\bf q}^{\rm ch}$. 
By Lemma~\ref{cor_tnorm_conjunction}, $I \models F$. 
Next we show that that there is no $J<^{\bf p} I$ such that $J\models F^I$. Suppose, for the sake of contradiction, that there exists such $J$.

% We will show that $J<^{\bf pq} I$ and $J\models (F\fand {\bf q}^{\rm ch})^I$, which contradicts the assumption that $I$ is a $1$-stable model of $(F\fand {\bf q}^{\rm ch})$.

First, it is easy to conclude $J <^{{\bf p}{\bf q}} I$ from the fact that 
$J<^{\bf p} I$ since $J$ and $I$ agree on ${\bf q}$.
Second, by Proposition \ref{lem-choice-tautology}, $J \models {\bf q}^{\rm ch}$. 
Since $J$ and $I$ agree on ${\bf q}$, 
%$\u_I({\bf q}^{\rm ch})= \u_J(({\bf q}^{\rm ch})^I)$, and consequently, 
it follows that $J\models ({\bf q}^{\rm ch})^I$.
Since $J\models F^I$, by Lemma~\ref{cor_tnorm_conjunction}, 
$J \models F^I\fand ({\bf q}^{\rm ch})^I$, or equivalently, 
$J \models (F\fand {\bf q}^{\rm ch})^I$. This, together with the fact that $J<^{{\bf p}{\bf q}} I$, contradicts that $I$ is a 1-stable model of  $F \fand {\bf q}^{\rm ch}$ relative to ${\bf p}\cup{\bf q}$. So such $J$ does not exist, and we conclude that $I$ is a $1$-stable model of $F$ relative to ${\bf p}$.
\qed
\end{proof}

\end{document}